\documentclass{article} 
\usepackage{iclr2025_conference,times}


\usepackage{amsmath,amsfonts,bm}









\def\eqref#1{equation~\ref{#1}}
\def\Eqref#1{Equation~\ref{#1}}








\def\1{\bm{1}}










\DeclareMathAlphabet{\mathsfit}{\encodingdefault}{\sfdefault}{m}{sl}
\SetMathAlphabet{\mathsfit}{bold}{\encodingdefault}{\sfdefault}{bx}{n}













\usepackage{algorithm}
\usepackage{algorithmic}

\usepackage[utf8]{inputenc} 
\usepackage[T1]{fontenc}    
\usepackage{hyperref}       
\usepackage{url}            
\usepackage{booktabs}       
\usepackage{amsfonts}       
\usepackage{nicefrac}       
\usepackage{microtype}      
\usepackage[titletoc]{appendix}
\usepackage{amsmath}
\usepackage{amsthm}
\usepackage{amssymb}
\usepackage{array}
\usepackage{color}
\usepackage{caption}
\usepackage{mathtools}
\usepackage{mathrsfs}
\usepackage{wrapfig}
\usepackage{multirow}
\usepackage{diagbox}
\usepackage{bm}
\usepackage{tablefootnote}
\usepackage{fancybox}
\usepackage{colortbl}
\usepackage{subcaption}
\usepackage{comment}

\definecolor{SDEblue}{RGB}{28 58 88}
\definecolor{cc1}{rgb}{1.0, 0.44, 0.37}
\definecolor{cc2}{rgb}{0.0, 0.2, 0.6}
\definecolor{cc3}{RGB}{255, 191, 0}
\definecolor{cc4}{RGB}{0, 128, 128}
\hypersetup{
 colorlinks=true,
 urlcolor=magenta,
 citecolor=SDEblue,
}

\usepackage{tcolorbox}
\definecolor{lightroyalblue}{HTML}{F6F8FD} 
\definecolor{royalblue}{HTML}{4169E1}
\definecolor{lighterblue}{HTML}{f2fafd}  
\newtcolorbox{abox}{colback=lightroyalblue,colframe=black}
\definecolor{LightCyan}{rgb}{.9, .95, 1.}

\newtheorem{proposition}{Proposition}

\title{When Attention Sink Emerges\\in Language Models: An Empirical View}


\renewcommand\footnotemark{}
\author{
    \!\!Xiangming Gu$^{*1,2}$, Tianyu Pang$^{\dagger1}$, Chao Du$^{1}$, Qian Liu$^{1}$, Fengzhuo Zhang$^{1,2}$, \\
    \textbf{Cunxiao Du$^{1}$, Ye Wang$^{\dagger2}$, Min Lin$^{1}$}%
    \thanks{$^*$Work done during Xiangming Gu’s internship at Sea AI Lab.}%
    \thanks{$^{\dagger}$Correspondence to Tianyu Pang and Ye Wang.}\\
    $^{1}$Sea AI Lab, Singapore \,\,\,\,\,$^{2}$National University of Singapore \\
    \footnotesize{\texttt{\{guxm, tianyupang, duchao, liuqian, zhangfz, ducx, linmin\}@sea.com;}} \\
    \footnotesize{\texttt{wangye@comp.nus.edu.sg}}
}

%

\iclrfinalcopy 
\begin{document}

\maketitle


\begin{abstract}
    Auto-regressive Language Models (LMs) assign significant attention to the first token, even if it is not semantically important, which is known as \textbf{attention sink}. This phenomenon has been widely adopted in applications such as streaming/long context generation, KV cache optimization, inference acceleration, model quantization, and others. Despite its widespread use, a deep understanding of attention sink in LMs is still lacking. In this work, we first demonstrate that attention sinks exist universally in auto-regressive LMs with various inputs, even in small models. Furthermore, attention sink is observed to emerge during the LM pre-training, motivating us to investigate how {\color{cc1}{optimization}}, {\color{cc2}{data distribution}}, {\color{cc3}{loss function}}, and {\color{cc4}{model architecture}} in LM pre-training influence its emergence. We highlight that attention sink emerges after effective optimization on sufficient training data. The sink position is highly correlated with the loss function and data distribution. Most importantly, we find that attention sink acts more like key biases, \emph{storing extra attention scores}, which could be non-informative and not contribute to the value computation. We also observe that this phenomenon (at least partially) stems from tokens' inner dependence on attention scores as a result of softmax normalization. After relaxing such dependence by replacing softmax attention with other attention operations, such as sigmoid attention without normalization, attention sinks do not emerge in LMs up to 1B parameters. The code is available at \href{https://github.com/sail-sg/Attention-Sink}{https://github.com/sail-sg/Attention-Sink}.\looseness=-1  
\end{abstract}


\section{Introduction}


\citet{xiao2023efficient} showed that Large Language models (LLMs) allocate significant attention to the initial tokens, irrespective of their semantic relevance. This interesting phenomenon is termed as \textbf{attention sink} and has widespread applications, including streaming/long context generation~\citep{xiao2023efficient,han2024lm,yang2024seed}, KV cache optimization~\citep{ge2023model,wan2024look,wu2024layer}, efficient inference~\citep{zhang2024q,chen2024image}, model quantization~\citep{liu2024intactkv,huang2024slim}, and others.\looseness=-1


A seminal of works attempted to understand attention sink. Among them, \citet{cancedda2024spectral} clarified that attention sink primarily appears only on the first token. They attributed the phenomenon to the large norm of hidden states of the first token. This is referred to as \textit{massive activations} (very few activations exhibit extremely large values compared to others) in \citet{sun2024massive}. Besides, \citet{sun2024massive,yu2024unveiling} observed that attention sink may also appear in several word tokens carrying limited semantic information and having no fixed position. Despite the above research efforts, a deep understanding of attention sink is still absent. Therefore, we conduct a comprehensive study to investigate when attention sink emerges. We defer full discussions on related work to Appendix~\ref{related_work}.\looseness=-1


Based on open-sourced auto-regressive LMs, we show that the first token acts as biases: the angles between the first key and queries of other tokens are typically small, leading to attention sink. Then we find that attention sink universally exists in auto-regressive LMs across different inputs, even in the small models or with random token sequences. Additionally, attention sink is observed to emerge during the LM pre-training before continual instruction tuning~\citep{ouyang2022training}. This motivates us to focus on the LM pre-training, whose objective can be formulated as:\looseness=-1
\begin{equation}
{\color{cc1}\min_{\theta}}\hspace{0.05cm}\mathbb{E}_{{\boldsymbol{X}\sim\color{cc2} p_{\textrm{data}}}}\left[{\color{cc3}\mathcal{L}}\left({\color{cc4}p_{\theta}}(\boldsymbol{X})\right)\right]\textrm{.}
\end{equation}
In the remaining part of this paper, we investigate how the {\color{cc1}optimization} (Section~\ref{sec4}), {\color{cc2}data distribution} (Section~\ref{sec5}), {\color{cc3}loss function} (Section~\ref{sec6}), and {\color{cc4}model architecture} (Section~\ref{sec7}) influence the emergence of attention sink. We have the following conclusions: 
\begin{itemize}
    \item Attention sink emerges after LMs are trained effectively on sufficient training data. It appears less obvious in LMs trained with small learning rates. While weight decay encourages the emergence of attention sink. 
    \item The sink position is highly related to the loss function and data distribution and can be shifted to other positions rather than the first token.  
    \item Attention sink acts more like key biases, storing extra attention and meanwhile not contributing to the value computation. This phenomenon (at least partially) stems from tokens' inner dependence on attention scores due to the softmax normalization. After relaxing such dependence by replacing softmax attention with other attention operations, e.g., sigmoid attention without normalization, attention sinks do not emerge in LMs up to 1B parameters.\looseness=-1
\end{itemize}


\vspace{-0.25cm}
\section{Preliminaries on LMs and attention sink}
\vspace{-0.15cm}


Let $f_{\theta}$ be an auto-regressive LM with $L$ transformer decoder blocks and $\boldsymbol{X}\in \mathbb{R}^{T \times |\mathbb{V}|}:=\{\boldsymbol{x}_1\textrm{,}\,\boldsymbol{x}_2\textrm{,}\,\ldots\textrm{,}\,\boldsymbol{x}_T\}$ are the input tokens, where each token $\boldsymbol{x}_t$ is a one-hot encoding and $|\mathbb{V}|$ is the vocabulary size of tokenizer $\mathbb{V}$. The LM output is also a sequence $\boldsymbol{Y}\in \mathbb{R}^{T \times |\mathbb{V}|}:=\{\boldsymbol{y}_1\textrm{,}\, \boldsymbol{y}_2\textrm{,}\,\ldots\textrm{,}\, \boldsymbol{y}_T\}=f_{\theta}(\boldsymbol{X})$, where $\boldsymbol{y}_t$ represents the predicted logits of $p(\boldsymbol{x}_{t+1}|\boldsymbol{x}_{\leq t})$.

\textbf{Transformer blocks.} In the forward pass, $\boldsymbol{X}$ is first embedded as $\boldsymbol{H}^{0} \in\mathbb{R}^{T\times d}:=\boldsymbol{X}\boldsymbol{W}_{E}+\boldsymbol{P}$,  where $\boldsymbol{W}_{E}\in \mathbb{R}^{|\mathbb{V}|\times d}$ is the learnable word embedding, $\boldsymbol{P}\in\mathbb{R}^{T\times d}$ is the positional embedding, and $d$ is the hidden dimension. We denote $\boldsymbol{H}^{l}\in \mathbb{R}^{T\times d}:=\{\boldsymbol{h}_1^{l}\textrm{,}\,\boldsymbol{h}_2^{l}\textrm{,}\,\ldots\textrm{,}\,\boldsymbol{h}_T^{l}\}\textrm{,}\,1\leq l\leq L$ to be the output of the $l$-th block. Each block comprises a multi-head self-attention (MHSA) operation and a feed-forward network (FFN). The block has either a pre-norm or post-norm structure according to the location of layer normalization (LN)~\citep{ba2016layernormalization,zhang2019root}. Most of LLMs consider a pre-norm block, as also shown in Figure~\ref{architecture}(\emph{Left}):
\begin{equation}\label{eq-pre-norm}
    \boldsymbol{H}^{l}=\textrm{FFN}(\textrm{LN}(\boldsymbol{O}^l + \boldsymbol{H}^{l-1})) +  \boldsymbol{O}^l + \boldsymbol{H}^{l-1}\textrm{,}\,\,\,\,\boldsymbol{O}^l=\textrm{MHSA}(\textrm{LN}(\boldsymbol{H}^{l-1}))\textrm{,}
\end{equation}
while the post-norm transformer block is
\begin{equation}
\boldsymbol{H}^{l}=\textrm{LN}\left(\textrm{FFN}(\textrm{LN}(\boldsymbol{O}^l + \boldsymbol{H}^{l-1})) + \textrm{LN}(\boldsymbol{O}^l + \boldsymbol{H}^{l-1})\right)\textrm{,}\,\,\,\,\boldsymbol{O}^l=\textrm{MHSA}(\boldsymbol{H}^{l-1})\textrm{.}
\end{equation}

\begin{figure}[t]
\centering
\vspace{-1.1cm}
 \includegraphics[width=\textwidth]{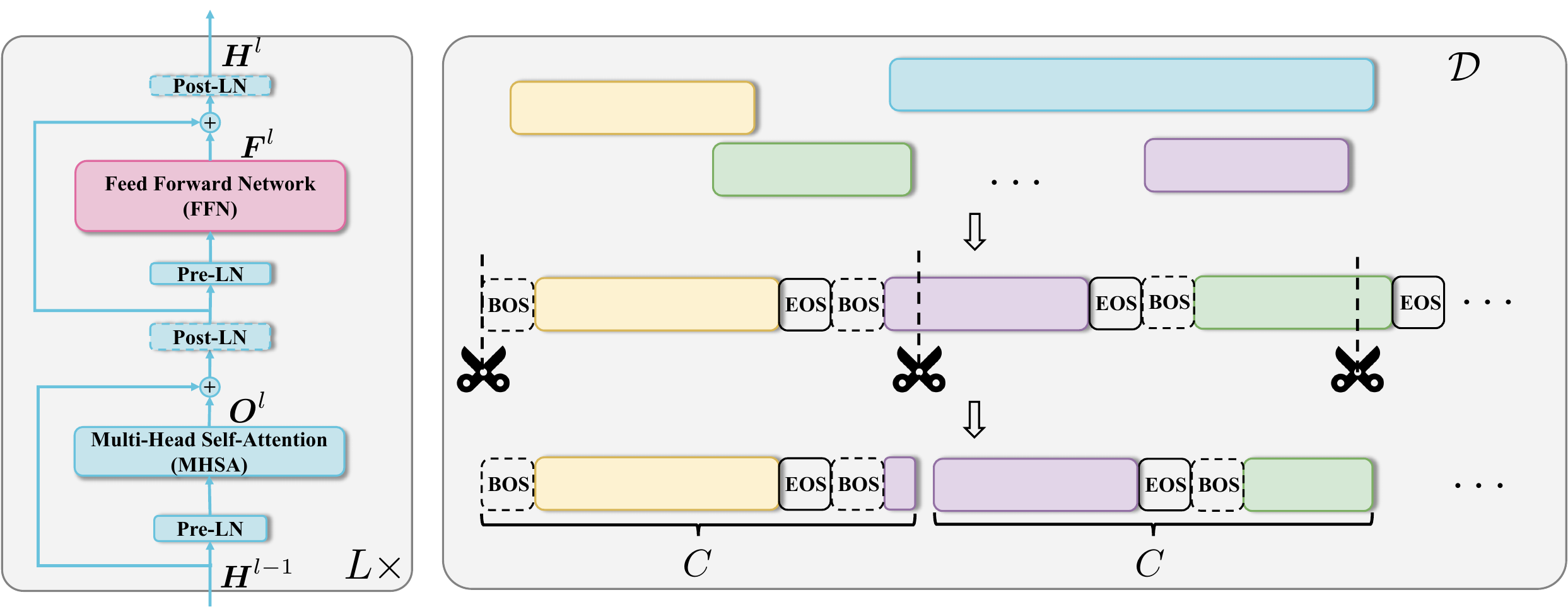}
\vspace{-0.6cm}
\caption{(\emph{Left}) Architecture of pre-norm transformer block (we highlight the location of post-norm LN using dashed lines). We denote the output of MHSA as $\boldsymbol{O}^l$ and the output of FFN as $\boldsymbol{F}^l$. (\emph{Right}) The packing strategy in the LM pre-training. All documents are concatenated with BOS (optional) and EOS tokens as the boundaries. Then it is chunked into equal-sized sequences with context length $C$.\looseness=-1
}
\vspace{-0.3cm}
\label{architecture}
\end{figure}

\textbf{MHSA layers.} In the MHSA layer, the input $\boldsymbol{H}^{l-1}$ are first transformed into keys, queries, and values: $\boldsymbol{K}^{l\textrm{,}h}=\boldsymbol{H}^{l-1}\boldsymbol{W}_{K}^{l\textrm{,}h}$, $\boldsymbol{Q}^{l\textrm{,}h}=\boldsymbol{H}^{l-1}\boldsymbol{W}_{Q}^{l\textrm{,}h}$, $\boldsymbol{V}^{l\textrm{,}h}=\boldsymbol{H}^{l-1}\boldsymbol{W}_{V}^{l\textrm{,}h}$ for each head $1\leq h\leq H$ (we omit the notation of LN when considering pre-norm design for simplicity). Here $\boldsymbol{W}_{K}^{l\textrm{,}h}, \boldsymbol{W}_{Q}^{l\textrm{,}h}, \boldsymbol{W}_{V}^{l\textrm{,}h} \in \mathbb{R}^{d\times d_h}\textrm{,}\,d_h=d/H$. Then the attention output is computed as
\begin{equation}
    \boldsymbol{A}^{l\textrm{,}h}= \textrm{Softmax}\big({\boldsymbol{Q}^{l\textrm{,}h}{\boldsymbol{K}^{l\textrm{,}h}}^\top}/{\sqrt{d_h}}+ \boldsymbol{M}\big)\textrm{,}\,\,\,\,
\boldsymbol{O}^l=\textrm{Concat}_{h=1}^H\left(\boldsymbol{A}^{l\textrm{,}h}\boldsymbol{V}^{l\textrm{,}h}\right)\boldsymbol{W}_{O}^l\textrm{,}
\end{equation}
where $\boldsymbol{M}\in \mathbb{R}^{T\times T}$ is an attention mask. For vanilla causal attention, $\boldsymbol{M}_{ij}=-\infty$ for $i<j$ and $\boldsymbol{M}_{ij}=0$ for $i\geq j$. Finally, the output of final transformer block $\boldsymbol{H}^{L}$ is fed into an unembedding layer for prediction: $\boldsymbol{Y}=\textrm{LN}(\boldsymbol{H}^{L})\boldsymbol{W}_{\textrm{cls}}$, where $\boldsymbol{W}_{\textrm{cls}}\in\mathbb{R}^{d\times |\mathbb{V}|}$.\looseness=-1

\textbf{Positional embedding.} NoPE~\citep{kazemnejad2024impact} considered no explicit positional embedding (PE) in LMs, where $\boldsymbol{P}=\boldsymbol{0}$. When using absolute PE~\citep{vaswani2017attention}, $\boldsymbol{P}$ is a periodic function of token positions. \citet{devlin2019bert,Brown2020language} adopted a learnable PE, which means $\boldsymbol{P}$ is a learnable embedding of token positions. The dot product between each query and key meets $\langle\boldsymbol{q}_i\textrm{,}\, \boldsymbol{k}_j\rangle=\boldsymbol{q}_i \boldsymbol{k}_j^\top$ when using the above three PEs. While for relative PE~\citep{raffel2020exploring}, ALiBi~\citep{press2021train}, Rotary~\citep{su2024roformer}, they have  $\boldsymbol{P}=\boldsymbol{0}$. Instead, they modify the dot product $\langle\boldsymbol{q}_i\textrm{,}\, \boldsymbol{k}_j\rangle$. For relative PE and ALiBi, $\langle\boldsymbol{q}_i\textrm{,}\, \boldsymbol{k}_j\rangle=\boldsymbol{q}_i \boldsymbol{k}_j^\top + g(i-j)$, where $g(\cdot)$ is pre-defined function of the distance between two token positions. For Rotary, $\langle\boldsymbol{q}_i\textrm{,}\, \boldsymbol{k}_j\rangle=\boldsymbol{q}_i\boldsymbol{R}_{\Theta\textrm{,}\,j-i} \boldsymbol{k}_j^\top$, where $\boldsymbol{R}_{\Theta\textrm{,}\,(\cdot)}$ is a pre-defined rotation matrix. We include detailed formulations of the above PEs in Appendix~\ref{appendix_llm}.\looseness=-1

\textbf{Auto-regressive objective.}  The pre-training objective of LMs is to maximize the likelihood of input data: $\theta^*=\arg\max_{\theta} \mathbb{E}_{\boldsymbol{X}\sim p_{\textrm{data}}}\left[\sum_{t=1}^T\log p_{\theta}(\boldsymbol{x}_t|\boldsymbol{x}_{<t})\right]$, where $p_{\textrm{data}}$ refers to the data distribution.





\textbf{Packing documents in pre-training.} Given a large corpus $\mathcal{D}=\{\boldsymbol{d}_1\textrm{,}\,\boldsymbol{d}_2\textrm{,}\,\cdots\textrm{,}\,\boldsymbol{d}_{|\mathcal{D}|}\}$, where each $\boldsymbol{d}_i$ represents a document containing a sequence of tokens. A packing strategy is adopted in the LM pre-training, as present in Figure~\ref{architecture}(\emph{Right}). All documents are concatenated and chunked into sequences with a context length of $C$. Each chunk could start with any token within one document or the BOS/EOS token. Then the empirical loss function for each chunk is $\mathcal{L}=\sum_{t=2}^C\log p_{\theta}(\boldsymbol{x}_t|\boldsymbol{x}_{<t})$. We note that $p_{\theta}(\boldsymbol{x}_1)$ is ignored since $\boldsymbol{y}_1=f_{\theta}(\boldsymbol{x}_1)$ is the prediction for the next token $\boldsymbol{x}_2$.\looseness=-1


\textbf{LM inference.} During the inference, a BOS token is fed into the model as the prefix for unconditional generation: $\boldsymbol{x}'_t\sim p_{\theta}(\boldsymbol{x}'_t|\boldsymbol{x}'_{<t}\textrm{,}\,\boldsymbol{x}\textrm{,}\,\textrm{BOS})$, where $\boldsymbol{x}'_t$ is the $t$-th generated token, $\boldsymbol{x}$ is the optional prompt. If there are no BOS tokens in the pre-training, the EOS token is considered as the BOS.\looseness=-1




\textbf{Attention sink.} \citet{xiao2023efficient} revealed that LLMs allocate significant attention scores to specific token positions, e.g. the first token (not necessary to be a BOS token), resulting in ``vertical'' attention patterns. To represent this, we have the attention scores $\boldsymbol{A}_{i\textrm{,}\,1}^{l\textrm{,}\,h}\gg \textrm{mean}(\boldsymbol{A}_{i\textrm{,}\,j\neq 1}^{l\textrm{,}\,h})$.\looseness=-1

\begin{figure}[t]
\centering
\vspace{-1.cm}
\includegraphics[width=\textwidth]{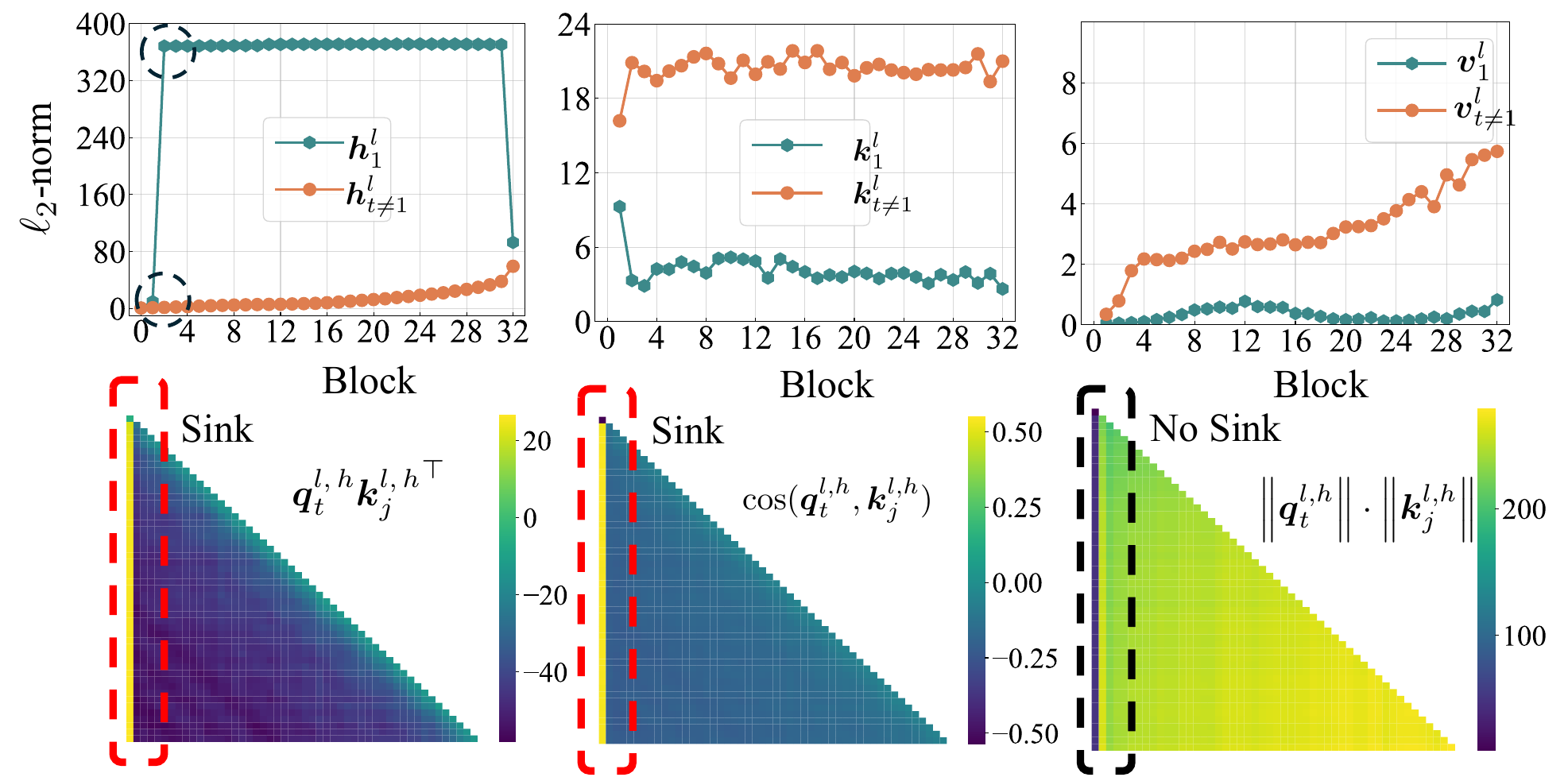}
    
\vspace{-0.3cm}
\caption{In LLaMA3-8B Base, (\emph{Top}) the first token has significantly larger $\ell_2$-norm of hidden states, but much smaller $\ell_2$-norm of keys and values than the mean of other tokens; (\emph{Bottom}) cosine similarity instead of norm product contributes to attention sink. We delay more visualizations to Appendix~\ref{vis_norm}.\looseness=-1
}
\label{massive}
\end{figure}

\vspace{-.2cm}
\section{Properties of attention sink}
\vspace{-.1cm}

\vspace{-.1cm}
\subsection{The first token acts as biases}\label{section3_1}
\vspace{-.1cm}
\textbf{Uniqueness of the first token.} It is noted that the calculation of hidden states for the first token has no involvement of self-attention $\boldsymbol{h}_1^{l}=\textrm{FFN}(\textrm{LN}(\boldsymbol{o}_1^l + \boldsymbol{h}_1^{l-1})) +  \boldsymbol{o}_1^l + \boldsymbol{h}_1^{l-1}$, where $\boldsymbol{o}_1^l=\textrm{LN}(\boldsymbol{h}_1^{l-1})\begin{bmatrix}
    \boldsymbol{W}^{l\textrm{,}1} & \boldsymbol{W}^{l\textrm{,}2} & \cdots & \boldsymbol{W}^{l\textrm{,}H}\\
\end{bmatrix}\boldsymbol{W}_{O}^l$. Therefore, $\boldsymbol{h}_1^l$, and corresponding queries/keys/values $\boldsymbol{k}_1^{l\textrm{,}h}=\textrm{LN}(\boldsymbol{h}_1^{l-1})\boldsymbol{W}_{K}^{l\textrm{,}h}$, $\boldsymbol{q}_1^{l\textrm{,}h}=\textrm{LN}(\boldsymbol{h}_1^{l-1})\boldsymbol{W}_{Q}^{l\textrm{,}h}$, $\boldsymbol{v}_1^{l\textrm{,}h}=\textrm{LN}(\boldsymbol{h}_1^{l-1})\boldsymbol{W}_{V}^{l\textrm{,}h}$ could be considered as the MLP output of input word embedding $\boldsymbol{x}_1\boldsymbol{W}_E$. Using LLaMA3-8B Base~\citep{dubey2024llama}, we show that from certain transformer block, e.g., $l=2$, the $\ell_2$-norm of $\boldsymbol{h}_1^l$ is significantly larger than that of other tokens $\boldsymbol{h}_{t\neq 1}$ in Figure~\ref{massive}(\emph{Top}). This reproduces massive activations in \citet{cancedda2024spectral,sun2024massive}. Despite the large $\ell_2$-norm of hidden states, we observe that the $\ell_2$-norm of keys and values of the first token is significantly smaller than that of other tokens in the same figure, which was also observed in \citet{devoto2024simple, guo2024attention}.\looseness=-1


\textbf{QK angles contribute to attention sink.}  In the $l$-th transformer block, we consider the keys and queries after adding PE (Rotary in LLaMA3-8B Base): $\boldsymbol{k}_t^{l\textrm{,}h}=\textrm{LN}(\boldsymbol{h}_t^{l-1})\boldsymbol{W}_{K}^{l\textrm{,}h}\boldsymbol{R}_{\Theta\textrm{,}\,-t}$, $\boldsymbol{q}_t^{l\textrm{,}h}=\textrm{LN}(\boldsymbol{h}_t^{l-1})\boldsymbol{W}_{Q}^{l\textrm{,}h}\boldsymbol{R}_{\Theta\textrm{,}\,-t}$, where LN is RMSNorm~\citep{zhang2019root}: $\textrm{LN}(\boldsymbol{h})=\frac{\boldsymbol{h}}{\textrm{RMS}(\boldsymbol{h})}\odot\boldsymbol{g}$ and $\textrm{RMS}(\boldsymbol{h})=\sqrt{\frac{1}{d}\sum_{i=1}^d\boldsymbol{h}_i^2}$.
Here $\boldsymbol{g}$ is a learnable gain parameter. Suppose that $\boldsymbol{h}_1^{l-1}$ already has massive activations. Since $\boldsymbol{h}_1^{l}$ has a massive magnitude in specific dimensions, the LN operation retains the magnitude in these dimensions and further reduces the magnitude in other dimensions, leading to that $\boldsymbol{q}_1^{l\textrm{,}\,h}$,  $\boldsymbol{k}_1^{l\textrm{,}\,h}$, and  $\boldsymbol{v}_1^{l\textrm{,}\,h}$ are distributed on different manifolds, especially for $\boldsymbol{k}_1^{l\textrm{,}\,h}$. 

For the $t$-th query, we know that $\boldsymbol{q}_t^{l\textrm{,}\,h}{\boldsymbol{k}_{1}^{l\textrm{,}\,h}}^\top$ typically has much larger values than $\boldsymbol{q}_t^{l\textrm{,}\,h}{\boldsymbol{k}_{j\neq 1}^{l\textrm{,}\,h}}^\top$, as visualized in Figure~\ref{massive}(\emph{Bottom}). We further show that due to the different manifold of $\boldsymbol{k}_1^{l\textrm{,}\,h}$, the angles between $\boldsymbol{k}_1^{l\textrm{,}\,h}$ and $\boldsymbol{q}_{t}^{l\textrm{,}\,h}$ play an important role. Considering $\boldsymbol{q}_t^{l\textrm{,}h}{\boldsymbol{k}_j^{l\textrm{,}h}}^\top=\|\boldsymbol{q}_t^{l\textrm{,}h}\|\cdot\|\boldsymbol{k}_j^{l\textrm{,}h}\|\cdot\textrm{cos}(\boldsymbol{q}_t^{l\textrm{,}h}\textrm{,}\,\boldsymbol{k}_j^{l\textrm{,}h})$, we visualize the cosine similarity between keys and values, and the product of $\ell_2$-norm between keys and values in Figure~\ref{massive}(\emph{Bottom}). Although $\|\boldsymbol{q}_t^{l\textrm{,}h}\|\cdot\|\boldsymbol{k}_1^{l\textrm{,}h}\|$ is comparatively small,  $\textrm{cos}(\boldsymbol{q}_t^{l\textrm{,}h}\textrm{,}\,\boldsymbol{k}_1^{l\textrm{,}h})$ is significantly large, leading to attention sink. This explains why attention sink exists despite the small $\ell_2$-norm of keys of the first token. To conclude,  the first token leverages its keys to act as biases, thus minimizing the angles between $\boldsymbol{k}_{1}^{l\textrm{,}\,h}$ and $\boldsymbol{q}_{t}^{l\textrm{,}\,h}$ and exhibiting attention sink.




\vspace{-0.15cm}
\subsection{Measuring attention sink}
\vspace{-0.05cm}


\begin{figure}[t]
\centering
\vspace{-1.cm}
\includegraphics[width=\textwidth]{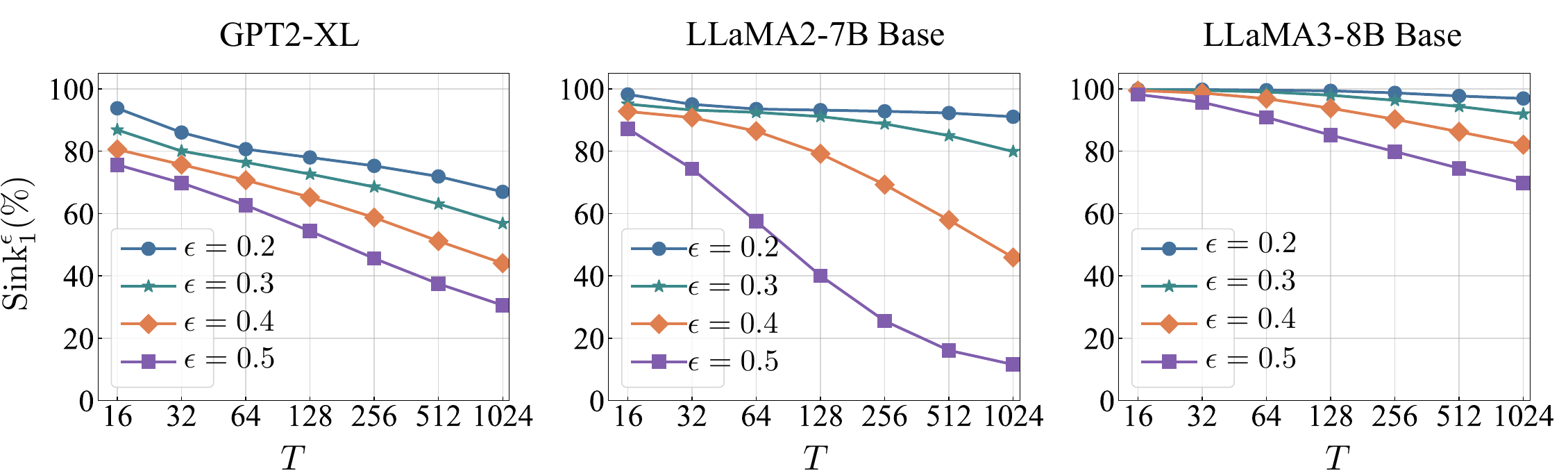}
\vspace{-.6cm}
\caption{The metric $\textrm{Sink}_1^{\epsilon}$ (averaged on 100 sequences) tends to decrease with larger token lengths $T$. This tendency becomes more obvious with the more strict definition of attention sink (larger $\epsilon$).
}
\vspace{-0.2cm}
\label{metrics}
\end{figure}


\textbf{Threshold-based metrics.} \citet{xiao2023efficient} showcased the appearance of attention sink by visualizing attention logits/scores in different heads/blocks. This leads to the intractability of measuring attention sink quantitatively due to the large number of attention heads and blocks. Therefore, we first explore the metrics to measure the attention sink. Within each head, we compute the importance scores for the $k$-th token $\alpha_k^{l\textrm{,}h}=\frac{1}{T-k+1}\sum_{i=k}^T \boldsymbol{A}_{i\textrm{,}\,k}^{l\textrm{,}\,h}$. We mainly focus on the first token $\alpha_1^{l\textrm{,}h}$. It is noted that $\frac{1}{T}\leq \alpha_1^{l\textrm{,}\,h}\leq 1$ since $\boldsymbol{A}_{1\textrm{,}\,1}^{l\textrm{,}\,h}=1$ and $0\leq \boldsymbol{A}_{i\neq 1\textrm{,}\,1}^{l\textrm{,}\,h}\leq 1$. Then we adopt a threshold-based metric, we consider a head has attention sink in the first token if $\alpha_1^{l\textrm{,}h}>\epsilon$. Considering that the whole model has $L$ blocks and each block has $H$ heads, we use the following metric to measure the attention sink of the whole LM: {\color{red!20!violet}$\textrm{Sink}_k^{\epsilon}=\frac{1}{L}\sum_{l=1}^L\frac{1}{H}\sum_{h=1}^H\mathbb{I}(\alpha_k^{l\textrm{,}h}>\epsilon)$}.\looseness=-1

\textbf{Selections of thresholds.} Typically, the selections of thresholds represent the strictness of quantifying attention sink. Generally, a larger $\epsilon$ represents a strict definition for attention sink. There is no principal way to find an optimal threshold and we only use this metric to quantify the emergence of attention sink empirically. Based on Figure~\ref{metrics}, we prefer to select a threshold that is both strict in quantifying attention sink and less sensitive to the token length $T$. This gives us the selection of $\epsilon=\textrm{0.3}$. For fair comparisons, we need to fix $T$ when computing the metric, e.g., $T=\textrm{64}$. 

\vspace{-0.2cm}
\subsection{Attention sink under different inputs}\label{section3_3}
\vspace{-0.2cm}

\textbf{Different data domains.} We first explore the effects of input domains on attention sinks. The pile dataset~\citep{gao2020pile}, a regular dataset for LM pretraining, has 17 available data domains. As shown in Appendix~\ref{vis_domains}, input domains have negligible effects on our attention sink metric $\textrm{Sink}_1^{\epsilon}$.



\textbf{Beyond natural languages.} We also consider two ideal scenarios: (i) randomly sample $T$ tokens from the tokenizer vocabulary $\mathbb{V}$ to construct a sequence and (ii) randomly sample 1 token from the tokenizer $\mathbb{V}$ and repeat it $T$ times. As present in Table~\ref{input}(\emph{Left}), attention sink still exists when the inputs are random tokens instead of natural language. However, with repeated tokens, attention sink in Mistral~\citep{jiang2023mistral} and LLaMA models disappears. In Appendix~\ref{explain_pe}, we prove that for LMs with NoPE/relative PE/ALiBI/Rotary, if the first $T$ tokens are the same, their corresponding hidden states are the same. They all have massive activations, thus dispersing the attention sink. We also provide the closed form/upper bound for attention scores in these LMs through Propositions~\ref{prop1}-\ref{prop4}.\looseness=-1


\begin{table}[t]
\vspace{-1.1cm}
\caption{(\emph{Left}) Even with random sequence as input, there still exists an obvious attention sink. But with repeated tokens, the attention sink disappears for Mistral/LLaMA models. (\emph{Right}) Chat models have comparable attention sink metrics with base models.}
\vspace{-0.2cm}
\begin{minipage}[t]{0.54\linewidth}
\begin{center}
\begin{tabular}{l|ccc}
\toprule
\multirow{2}{*}{LLM} & \multicolumn{3}{c}{$\textrm{Sink}_1^{\epsilon}$(\%)} \\
& natural & random & repeat\\
\midrule
GPT2-XL  & 77.00 & 70.29 & 62.28 \\
Mistral-7B &  97.49 & 75.21 & \,\,\,0.00 \\
LLaMA2-7B Base & 92.47 & 90.13 & \,\,\,0.00 \\
LLaMA3-8B Base & 99.02 & 91.23 & \,\,\,0.00\\
\bottomrule
\end{tabular}
\end{center}
\end{minipage}
\hfill
\begin{minipage}[t]{0.40\linewidth}
\begin{center}
\begin{tabular}{l|cc}
\toprule
\multirow{2}{*}{LLM} & \multicolumn{2}{c}{$\textrm{Sink}_1^{\epsilon}$(\%)} \\
& Base & Chat\\
\midrule
Mistral-7B & 97.49 & 88.34 \\
LLaMA2-7B & 92.47 & 92.88 \\
LLaMA2-13B & 91.69 & 90.94 \\
LLaMA3-8B & 99.02 & 98.85\\
\bottomrule
\end{tabular}
\end{center}
\end{minipage}
\vspace{-.3cm}
\label{input}
\end{table}

\begin{figure}[t]
\centering
\begin{subfigure}[t]{\textwidth}
    \centering
    \includegraphics[width=\textwidth]{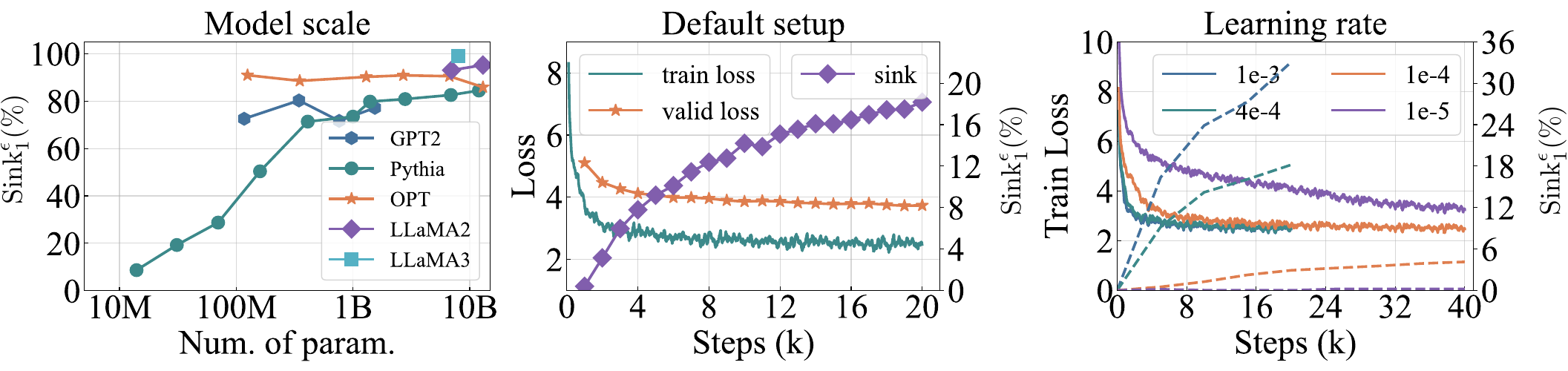}
\end{subfigure}
\vspace{-0.675cm}
\caption{(\textit{Left}) Attention sink also emerges in small LMs. (\textit{Middle}) Dynamics of train/valid loss and $\textrm{Sink}_1^\epsilon$ during LM pre-training under the default setup. Attention sink emerges after certain optimization steps. (\textit{Right}) Training loss (solid lines)/attention sink (dashed lines) dynamics of LMs using different learning rates. We observe that with smaller learning rates, attention sink tends to emerge after more optimization steps and be less obvious.\looseness=-1}
\vspace{-0.2cm}
\label{model}
\end{figure}

\vspace{-0.3cm}
\subsection{Attention sink under different LMs}\label{section3_4}
\vspace{-0.2cm}
\textbf{Base vs. chat model.} Compared with base models, chat models are typically continually trained through instruction tuning~\citep{ouyang2022training}. From Table~\ref{input}(\emph{Right}), instruction tuning has an insignificant impact on attention sink, which motivates us to focus on the LM pre-training.\looseness=-1

\textbf{Model scale.} We evaluate the metric $\textrm{Sink}_1^{\epsilon}$ of LLaMA2 Base~\citep{touvron2023llama}, LLaMA3 Base~\citep{dubey2024llama}, Pythia~\citep{biderman2023pythia}, GPT2~\citep{radford2019language}, OPT~\citep{zhang2022opt} families. As visualized in Figure~\ref{model}(\emph{Left}), attention sink emerges in small LMs, even in Pythia-14M. Only in Pythia family, larger-sized LMs tend to have more obvious attention sink.\looseness=-1

\vspace{-0.3cm}
\section{Effects of {\color{cc1}optimization} on attention Sink.}
\vspace{-0.25cm}
\label{sec4}

We pre-train a series of LLaMA models to conduct our experiments, based on the repos~\citep{zhang2024tinyllama,liu2024regmix}. Due to the intractability of replicating LLaMA pre-training, we design small-sized models. Following \citet{liu2024regmix}, we set hidden dimension $d=\textrm{768}$, block number $L=\textrm{10}$, head number $H=\textrm{8}$, intermediate size of FFN as 1536, resulting in approximately 60M parameters except for word embeddings and unembeddings. We keep the other design the same as LLaMA2 models, including Rotary~\citep{su2024roformer}, pre-norm structure, RMSNorm~\citep{zhang2019root} as LN, SwiGLU activation~\citep{shazeer2020glu} in FFN, etc.\looseness=-1

For data distribution, we sample 5B tokens from the Pile dataset~\citep{gao2020pile}. We set the context length to 2048 tokens, the batch size to 1M tokens, and the training step to 20k (including 100 steps for warm-up). We adopt a learning rate of 4e-4 with cosine scheduling. The optimizer is AdamW~\citep{loshchilov2017decoupled} with a weight decay ratio of 0.1. We use the Pile-CC validation loss~\citep{gao2020pile,liu2024regmix} to measure the model performance and sample 100 sequences with $T=64$ (no BOS token) out of training data to measure the metric $\textrm{Sink}_k^\epsilon$ with $\epsilon=\textrm{0.3}$.  


\begin{table}[t]
\vspace{-1.1cm}
\caption{Larger weight decay ratios tend to induce more attention sink heads in LMs. But much larger values hurt the model performance and attention sink disappears.}
\vspace{-.4cm}
\begin{center}
\begin{tabular}{l|cccccccc}
\toprule
$\gamma$ & 0.0 & 0.001 & 0.01 & 0.1 & 0.5 & 1.0 & 2.0 & 5.0\\
\midrule
$\textrm{Sink}_1^{\epsilon}$(\%)  & 15.20 & 15.39 & 15.23 & 18.18 & 41.08 & 37.71 & 6.13 &  0.01\\
valid loss & \,\,\,3.72 & \,\,\,3.72 & \,\,\,3.72 & \,\,\,3.73 & \,\,\,3.80  & \,\,\,3.90 &  4.23 & 5.24\\
\bottomrule
\end{tabular}
\end{center}
\vspace{-.5cm}
\label{decay}
\end{table}

\textbf{Optimization steps.} As visualized in Figure~\ref{model}(\emph{Middle}), under our default setup, attention sink emerges after certain optimization steps, e.g., between 1k and 2k steps. With the progression of pre-training, attention sink becomes more obvious. 

\textbf{Learning rate.} With a smaller learning rate, it takes longer training steps to lower training loss, as present in Figure~\ref{model}(\emph{Right}). Meanwhile, the emergence of attention sink is also delayed. Besides, as shown in Table~\ref{lr}, we also find that a smaller learning rate results in LMs with less obvious attention sink, even if we compensate for more training steps.  But further decreasing learning rate significantly affects the optimization and model performance, thus affecting the emergence of attention sink.\looseness=-1

\textbf{Batch size.} In Table~\ref{batch_fix}(\emph{Left}), we find that only modifying batch size has no effects on attention sink.\looseness=-1

\vspace{-0.1cm}
\begin{abox} 
    \looseness -1 \textbf{Takeaways:} 1.~Attention sink emerges after LMs are trained effectively. 2.~Attention sink appears less obvious in LMs trained with small learning rates.
\end{abox}

\vspace{-0.2cm}
\section{Effects of {\color{cc2}data distribution  \texorpdfstring{$p_{\textrm{data}}$}{TEXT}} on attention sink}
\vspace{-0.2cm}
\label{sec5}

\begin{figure}[t]
\centering
\begin{subfigure}[t]{\textwidth}
    \centering
    \includegraphics[width=\textwidth]{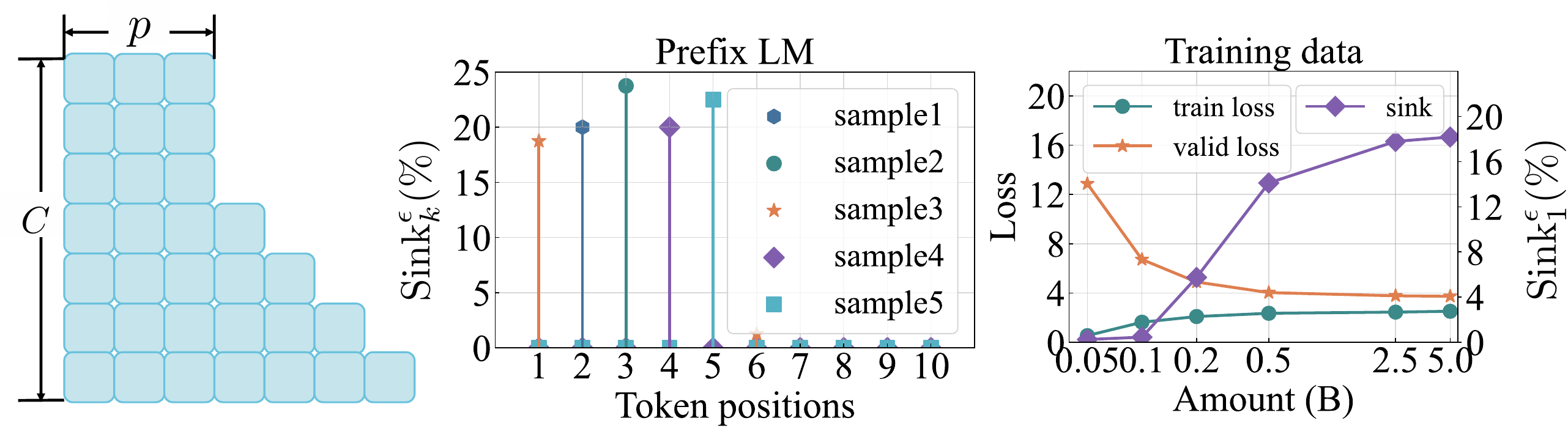}
\end{subfigure}
\vspace{-0.65cm}
\caption{(\emph{Left}) Attention pattern for prefix language modeling. (\emph{Middle}) Attention sink does not only appear on the first token but among the prefix tokens for LMs with $p=\textrm{5}$. (\textit{Right}) With less training data, attention sink disappears. Meanwhile, trained LMs demonstrate overfitting behaviors.\looseness=-1}
\vspace{-0.15cm}
\label{data}
\end{figure}

\textbf{Training data amount.} In the default setup, we consider 5B tokens. We wonder whether the attention sink emerges if we further constrain the data within a fixed compute budget. Therefore, we constrain the training data to 5B, 2.5B, 500M, 200M, 100M, and 50M. Meanwhile, we fix the batch size and optimization steps. As visualized in Figure~\ref{data}(\emph{Right}), with less training data, attention sink disappears. Further evidence in Figure~\ref{appendix_overfit} shows that this is not related to overfitting.\looseness=-1

\textbf{Randomness in data distribution.} After packing documents into chunks, we re-sample the first token within the chunk $\boldsymbol{x}_1\sim \textrm{Uniform}(\mathbb{V})$. The trained LM has the metric $\textrm{Sink}_1^\epsilon=\textrm{27.03\%}$, even larger than the default setup. This further validates the low semantic information of the sink token. We also consider $\boldsymbol{x}_1\textrm{,}\,\boldsymbol{x}_2\sim \textrm{Uniform}(\mathbb{V})$, and we find attention sink shifts to the second token with $\textrm{Sink}_2^\epsilon=\textrm{14.08\%}$ while the attention sink on the first token is much less obvious $\textrm{Sink}_1^\epsilon=\textrm{1.98\%}$. But when only sample $\boldsymbol{x}_2\sim \textrm{Uniform}(\mathbb{V})$, the attention sink still always appears on the first token ($\textrm{Sink}_1^\epsilon=\textrm{20.99\%}$). Additionally, we find with more random tokens during pre-training, attention sink tends to disappear.\looseness=-1

\textbf{Fixing token in a specific position.} \citet{xiao2023efficient} considered a learnable token in the first token position within each chunk, which can be considered as $\boldsymbol{x}_1\sim \mathbb{I}(\boldsymbol{x}=\boldsymbol{x}_{\textrm{fix}})$. We also consider fixing the token $\boldsymbol{x}_{\textrm{fix}}$ in the second/third token position during pre-training. Consequently, the attention sink always appears in the fixed token instead of the first token, as shown in Table~\ref{batch_fix}(\emph{Right}).

\vspace{-0.1cm}
\begin{abox} 
    \looseness -1 \textbf{Takeaways:} 1.~Attention sink emerges after LMs are trained on sufficient training data. 2.~Attention sink could be shifted to other positions rather than the first token if modifying $\color{cc2} p_{\textrm{data}}$.\looseness=-1
\end{abox}

\vspace{-0.2cm}
\section{Effects of {\color{cc3}loss function \texorpdfstring{$\mathcal{L}$}{TEXT}} on attention sink}
\vspace{-0.2cm}
\label{sec6}

\textbf{Weight decay.} The loss function becomes $\mathcal{L}=\sum_{t=2}^C\log p_{\theta}(\boldsymbol{x}_t|\boldsymbol{x}_{<t})+\gamma \left\|\theta\right\|_2^2$ when introducing weight decay ratio $\gamma$. As indicated in Table~\ref{decay}, even $\gamma=\textrm{0}$ in the loss function, attention sink still emerges in LMs. Then a larger $\gamma$ encourages more heads to have attention sink. But further increasing weight decay hurts the optimization, leading to less obvious or even no attention sink.\looseness=-1


\begin{figure}[t]
\vspace{-1.0cm}
\centering
\includegraphics[width=\textwidth]{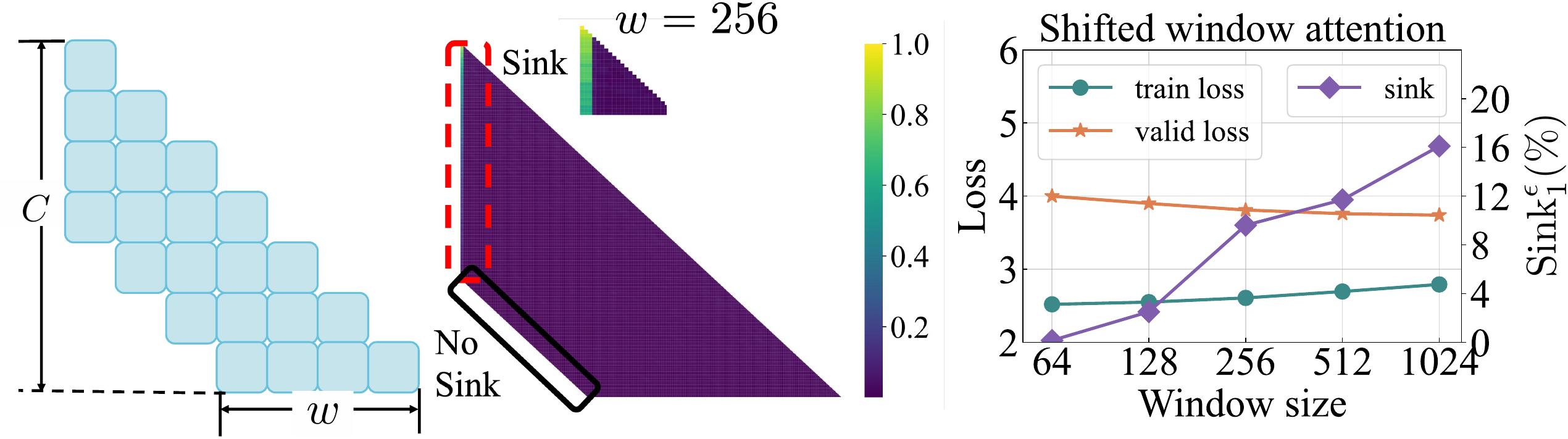}
\caption{(\emph{Left}) Shifted window attention pattern. (\emph{Middle}) In LMs with window attention, attention sink appears on the first token, but not on the ``first token'' within each window. (\emph{Right}) Attention sink tends to emerge when the window size is large enough.\looseness=-1 
}
\label{window}
\vspace{-0.2cm}
\end{figure}

\textbf{Prefix language modeling.} Since the first token is not predicted in the auto-regressive loss function, it could be considered as the prefix token. Then the original auto-regressive loss can be generalized into the formula $\mathcal{L}=\sum_{t=p+1}^C\log p_{\theta}(\boldsymbol{x}_t|\boldsymbol{x}_{p+1:t-1}\textrm{,}\,\boldsymbol{x}_{1:p})$, with the prefix length $p=1$. Motivated by \citet{wang2022language}, we consider $p>1$ and the casual mask visualized in Figure~\ref{data}(\emph{Left}). Although this design does not affect the emergence of attention sink, it shifts the sink position. In Figure~\ref{data}(\emph{Middle}), the attention sink only appears on one token. But it appears among these prefix tokens instead of on the first token only. Massive activations also appear on the corresponding sink token.\looseness=-1



\textbf{Shifted window attention.} Motivated by the shifted window attention adopted in Mistral-7B, we further explore the effects of window size on attention sink. With shifted window attention, the loss function becomes $\mathcal{L}=\sum_{t=2}^C\log p_{\theta}(\boldsymbol{x}_t|\boldsymbol{x}_{t-w:t-1})$, where $w$ refers to the window size. As shown in Figure~\ref{window}(\emph{Left}) and (\emph{Middle}), with shifted window attention, we find that if $t\leq w$, the $t$-th token can still ``look at'' the first token, and LMs still have attention sink on the first token. When $t>w$, the $t$-th token can only attend up to the $t-w+1$-th token. Although this token is the ``first token'' for the $t$-th token, typically it has no attention sink. We have similar observations in Mistral-7B. Additionally, from Figure~\ref{window}(\emph{Right}), smaller window size prevents the emergence of attention sink.\looseness=-1

\vspace{-0.1cm}
\begin{abox} 
    \looseness -1 \textbf{Takeaways:} 1.~Weight decay encourages the emergence of attention sink. 2.~With prefix language modeling, attention sink appears among the prefix tokens rather than the first token only. 3.~With shifted window attention, attention sink appears on the ``absolute'', not the  ``relative'' first token. Smaller window size prevents the emergence of attention sink.
\end{abox}

\vspace{-0.2cm}
\section{Effects of {\color{cc4}model architecture \texorpdfstring{$p_{\theta}$}{TEXT}} on attention sink}
\vspace{-0.2cm}
\label{sec7}

In this section, we mainly explore the effects of positional embedding, pre-norm or post-norm structure, and attention design on the emergence of attention sink. In Appendix~\ref{appendix_model}, we also show that varying activation functions in the FFN, multi-head design do not affect the emergence of attention sink.\looseness=-1

\vspace{-0.15cm}
\subsection{Positional embedding}
\vspace{-0.15cm}

Attention sink always appears on the first token, which motivates us to explore where such a position property is brought by positional embedding (PE). Therefore, we attempt to replace the original Rotary with other PEs, as shown in Table~\ref{pe}. We differentiate these PEs through the calculations of the hidden states $\boldsymbol{H}^0$ before the first transformer block and the dot product between queries and keys $\langle\boldsymbol{q}_i\textrm{,}\,\boldsymbol{k}_j\rangle$. The detailed formulations are delayed to Appendix~\ref{appendix_llm}. From the same Table, we observe that only the model with relative PE is difficult to train while other models have comparable performance under our setup. Then we note that all these LMs, even the one without explicit PE (NoPE), have attention sink.\looseness=-1

\vspace{-0.2cm}
\subsection{Pre-norm and post-norm structure} 
\vspace{-0.2cm}
Layer normalization (LN)~\citep{ba2016layernormalization,zhang2019root} regularizes the hidden states in LMs by re-centering and re-scaling, which may affect the massive activations. This motivates us to explore the effects of LN location on attention sink. In the pre-norm structure, as stated in \Eqref{eq-pre-norm}, hidden states from the earlier blocks could be retained by the later blocks through residual connections~\citep{he2016deep}.  Therefore, if massive activations appear in a specific block, they will likely be retained in the subsequent blocks. Within a post-norm structure, the hidden states will be normalized before being fed into the following blocks, as present in Figure~\ref{architecture}(\emph{Left}). 

When replacing the pre-norm structure with the post-norm structure, $\textrm{Sink}_1^{\epsilon}$ becomes 13.54\%. This indicates that the attention sink still exists in post-norm LMs. After further investigations, as visualized in Figure~\ref{massive_activations}(\emph{Left}), massive activations exist in the hidden states before the post LN instead of $\boldsymbol{h}_1^{l}$.\looseness=-1

\begin{table}[t]
\vspace{-1cm}
\caption{Positional embedding does not affect the emergence of attention sink.}
\vspace{-.5cm}
\begin{center}
\begin{tabular}{l|ll|ccc}
\toprule
PE & $\boldsymbol{H}^0$ & $\langle\boldsymbol{q}_i\textrm{,}\,\boldsymbol{k}_j\rangle$ & $\textrm{Sink}_1^{\epsilon}$(\%)  & valid loss\\
\midrule
NoPE & $\boldsymbol{X}\boldsymbol{W}_E$ & $\boldsymbol{q}_i\boldsymbol{k}_j^\top$ & 20.35 & 3.81\\
Absolute PE & $\boldsymbol{X}\boldsymbol{W}_E+\boldsymbol{P}_{\textrm{abs}}$ & $\boldsymbol{q}_i\boldsymbol{k}_j^\top$ & 32.73 &  3.74\\
Learnable PE & $\boldsymbol{X}\boldsymbol{W}_E+\boldsymbol{P}_{\textrm{learnable}}$ & $\boldsymbol{q}_i\boldsymbol{k}_j^\top$ & 33.13 &  3.79\\
Relative PE & $\boldsymbol{X}\boldsymbol{W}_E$ & $\boldsymbol{q}_i\boldsymbol{k}_j^\top+g_{\textrm{relative}}(i-j)$ & 35.53 & 5.45\\
ALiBi & $\boldsymbol{X}\boldsymbol{W}_E$ & $\boldsymbol{q}_i\boldsymbol{k}_j^\top+g_{\textrm{alibi}}(i-j)$ & 20.78 &  3.71\\
Rotary & $\boldsymbol{X}\boldsymbol{W}_E$ & $\boldsymbol{q}_i\boldsymbol{R}_{\Theta\textrm{,}\,j-i}\boldsymbol{k}_j^\top$ & 18.18 & 3.73\\
\bottomrule
\end{tabular}
\end{center}
\vspace{-.3cm}
\label{pe}
\end{table}

\begin{figure}[t]
\centering
\vspace{-0.25cm}
\includegraphics[width=\textwidth]{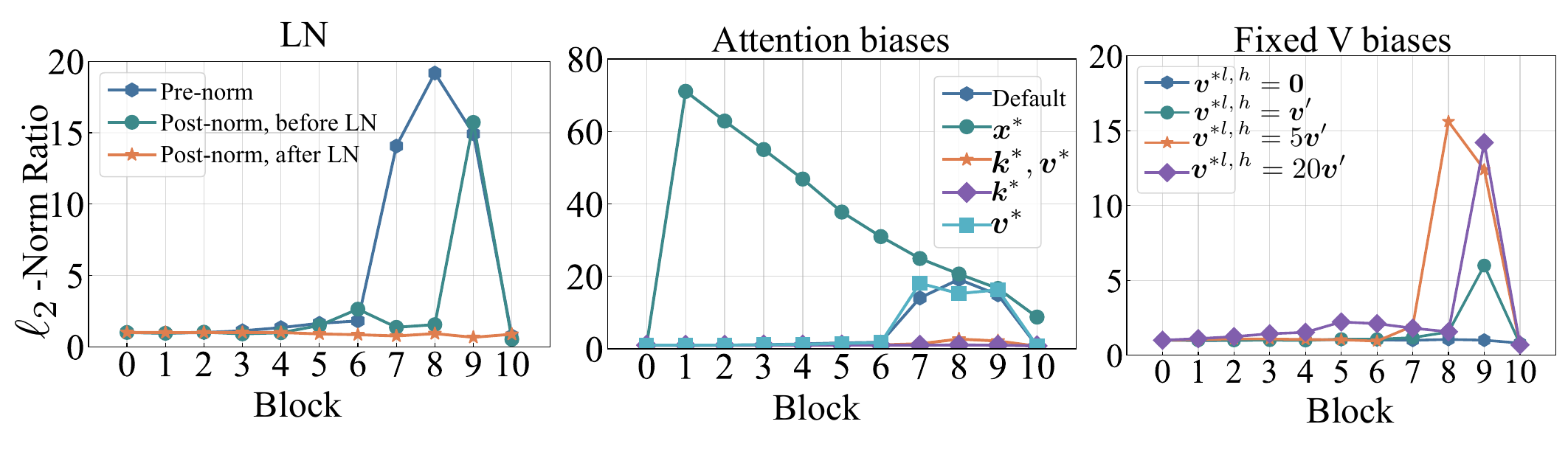}
\vspace{-0.85cm}
\caption{$\ell_2$-norm ratio of $\boldsymbol{h}_1^l$ and mean of $\boldsymbol{h}_{t\neq 0}^l$. (\emph{Left}) Massive activations exist not in hidden states in post-norm LMs, but in the features before LN. (\emph{Middle}) LMs with KV biases or K biases have no massive activations, while LMs with a learnable sink token or V biases have massive activations. (\emph{Right}) Massive activations emerge when increasing the $\ell_2$-norm of fixed $\boldsymbol{v}^{*l\textrm{,}\,h}$ for the setup of K biases.\looseness=-1 
}
\vspace{-0.35cm}
\label{massive_activations}
\end{figure}


\begin{table}[t]
\vspace{-1.1cm}
\caption{With comparable performance, LMs with sink token, KV biases, and K biases could shift attention sink from the first token to key biases' position. Value biases cannot affect attention sink.}
\vspace{-.4cm}
\begin{center}
\begin{tabular}{l|ccc}
\toprule
 Attention in each head  &  $\textrm{Sink}_{*}^{\epsilon}$(\%) &  $\textrm{Sink}_1^{\epsilon}$(\%) &valid loss \\
\midrule
 $\textrm{Softmax}\left(\frac{1}{\sqrt{d_h}}\boldsymbol{Q}^{l\textrm{,}h}{\boldsymbol{K}^{l\textrm{,}h}}^\top+ \boldsymbol{M}\right)\boldsymbol{V}^{l\textrm{,}h}$ & - & 18.18 & 3.73 \\
 $\textrm{Softmax}\left(\frac{1}{\sqrt{d_h}}\begin{bmatrix}
    \boldsymbol{q}^{*l,h} \\
        \boldsymbol{Q}^{l,h}
    \end{bmatrix}\begin{bmatrix}
       {\boldsymbol{k}^{*l\textrm{,}h}}^\top & {\boldsymbol{K}^{l\textrm{,}h}}^\top
    \end{bmatrix}+ \boldsymbol{M}\right)\begin{bmatrix}
    \boldsymbol{v}^{*l\textrm{,}h} \\
    \boldsymbol{V}^{l\textrm{,}h}
\end{bmatrix}$ & 74.12 & \,\,\,0.00 &  3.72\\
 $\textrm{Softmax}\left(\frac{1}{\sqrt{d_h}}\boldsymbol{Q}^{l\textrm{,}h}\begin{bmatrix}
       {\boldsymbol{k}^{*l\textrm{,}h}}^\top & {\boldsymbol{K}^{l\textrm{,}h}}^\top
    \end{bmatrix}+ \boldsymbol{M}\right)\begin{bmatrix}
    \boldsymbol{v}^{*l\textrm{,}h} \\
    \boldsymbol{V}^{l\textrm{,}h}
\end{bmatrix}$ & 72.76 & \,\,\,0.04 & 3.72\\
 $\textrm{Softmax}\left(\frac{1}{\sqrt{d_h}}\boldsymbol{Q}^{l\textrm{,}h}\begin{bmatrix}
       {\boldsymbol{k}^{*l\textrm{,}h}}^\top & {\boldsymbol{K}^{l\textrm{,}h}}^\top
    \end{bmatrix}+ \boldsymbol{M}\right)\begin{bmatrix}
    \boldsymbol{0} \\
    \boldsymbol{V}^{l\textrm{,}h}
\end{bmatrix}$ & 73.34 & \,\,\,0.00 & 3.72 \\
 $\textrm{Softmax}\left(\frac{1}{\sqrt{d_h}}\boldsymbol{Q}^{l\textrm{,}h}{\boldsymbol{K}^{l\textrm{,}h}}^\top+ \boldsymbol{M}\right)\boldsymbol{V}^{l\textrm{,}h}+\boldsymbol{v}^{*l\textrm{,}h}$ & - & 17.53 & 3.73\\
\bottomrule
\end{tabular}
\end{center}
\vspace{-.2cm}
\label{bias}
\end{table}

\vspace{-0.2cm}
\subsection{Attention biases}
\vspace{-0.2cm}
\textbf{Learnable biases in attention.} In Section~\ref{section3_1}, we have shown that the first token acts as a bias: its key $\boldsymbol{k}_1^{l\textrm{,}\,h}$ is distributed in a different manifold and its value $\boldsymbol{v}_1^{l\textrm{,}\,h}$ has small $\ell_2$ norm. \citet{xiao2023efficient} considered a learnable sink token in each chunk before the input tokens during LM pre-training. As this token is fixed in the first token, this could be considered as implicitly introducing biases $\boldsymbol{k}^{*l,h}$, $\boldsymbol{v}^{*l,h}$, $\boldsymbol{q}^{*l,h}$ in attention, as shown in the second row in Table~\ref{bias}. These biases are the MLP output of $\boldsymbol{x}^*\boldsymbol{W}_E$. \citet{sun2024massive} directly introducing $\boldsymbol{k}^{*l,h}$ and $\boldsymbol{v}^{*l,h}$ as learnable parameters in attention (\emph{KV biases}). They found that this design could alleviate the massive activations. Considering the important role of $\boldsymbol{k}^{*l,h}$ and small $\ell_2$-norm of $\boldsymbol{v}^{*l,h}$ , we propose introducing only the key biases  $\boldsymbol{k}^{*l,h}$ and fix value biases $\boldsymbol{v}^{*l,h}=\boldsymbol{0}$ (\emph{K biases}). As a control, we also consider only adding value biases (\emph{V biases}). The formulations of all these attention designs are shown in Table~\ref{bias}.\looseness=-1

\textbf{LMs need key biases.} After evaluating the LMs with setups in Table~\ref{bias}, we first observe that these LMs have comparable model performance. Moreover, as long as there are key biases $\boldsymbol{k}^{*l\textrm{,}\,h}$, attention sink disappears on the first token but on the biases. From the setup of K biases, we reaffirm that the sink token acts as key biases, storing extra attention scores, which could be completely non-informative and not contribute to the value computation. It is worth mentioning that the introduced learnable sink token, KV biases, and V biases become part of model parameters in LMs. Removing them will lead to no attention sink in the first position, but a significant drop in model performance. In Figure~\ref{massive_activations}(\emph{Middle}),  we also find that the LM with a learnable sink token has massive activations in $\boldsymbol{x}^*$. While LMs with KV biases and K biases have no massive activations.\looseness=-1


\begin{table}[t]
\vspace{-0.2cm}
\caption{Larger $\ell_2$-norm of fixed $\boldsymbol{v}^{*l\textrm{,}h}$ results in LMs allocating more attention on $\boldsymbol{x}_1$ instead of $\boldsymbol{k}^{*l\textrm{,}h}$.\looseness=-1}
\vspace{-.4cm}
\begin{center}
\begin{tabular}{l|ccccccc}
\toprule
$\boldsymbol{v}^{*l,h}$ & $\boldsymbol{0}$ & $\boldsymbol{v}'$ & 5$\boldsymbol{v}'$ & 20$\boldsymbol{v}'$ & $\boldsymbol{v}''$ & 5$\boldsymbol{v}''$  & 20$\boldsymbol{v}''$  \\
\midrule
$\textrm{Sink}_{*}^{\epsilon}$(\%) & 73.34 & 70.03 & 44.43 & \,\,\,1.51 & 69.74 & 27.99 &  \,\,\,0.00 \\
$\textrm{Sink}_{1}^{\epsilon}$(\%) & \,\,\,0.00 &  \,\,\,0.06 & \,\,\,3.71 & 25.88 & \,\,\,2.15 & \,\,\,5.93 & 11.21 \\
valid loss & \,\,\,3.72 & \,\,\,3.72 & \,\,\,3.72 & \,\,\,3.71 & \,\,\,3.72 & \,\,\,3.72 & \,\,\,3.73 \\
\bottomrule
\end{tabular}
\end{center}
\vspace{-.4cm}
\label{v_norm}
\end{table}


\textbf{Beyond all zeros in V biases.} In the setup of K biases, we fix $\boldsymbol{v}^{*l,h}=\boldsymbol{0}$. We wonder whether the fixed values of $\boldsymbol{v}^{*l,h}$ could affect the attention sink. We consider, $\boldsymbol{v}^{*l,h}=m\boldsymbol{v}'$ or $\boldsymbol{v}^{*l,h}=m\boldsymbol{v}''$, where $m$ is the controllable $\ell_2$ norm and $\boldsymbol{v}'=[1,0,0,..,0]$ and $\boldsymbol{v}''=[1,1,1,..,1]/ \sqrt{d_h}$. As shown in Table~\ref{v_norm}, with larger $\ell_2$-norm of $\boldsymbol{v}^{*l,h}$, attention sink shifts from $\boldsymbol{k}^{*l,h}$ to the first token. Intuitively, it is difficult for LMs to remove the effects of $\boldsymbol{v}^{*l,h}$ with larger $\ell_2$-norm in model predictions. Then they opt to optimize the keys and values of the first token to save extra attention. 

\textbf{Design space of biases.} In Table~\ref{multi_head}(\emph{Right}), the LM with head-sharing KV biases tends to shift the sink from $\boldsymbol{k}^{*l,h}$ back to the first token. While the one with head-sharing K biases is less affected. In Table~\ref{k_low_rank}, even with small learnable dimensions for key biases, they can still absorb large attention.\looseness=-1




\vspace{-0.2cm}
\subsection{Attention operation}
\vspace{-0.2cm}
\textbf{General formulation of attention.} In the last section, we realize that LMs need key biases to save extra attention. This motivates us to explore whether such a property is related to the dependence among attention scores due to the softmax operation. First, the attention output for the $i$-th token can be generalized as: $\boldsymbol{v}_i^{\dagger}=\left(\sum_{j'=1}^i\textrm{sim}(\varphi(\boldsymbol{q}_i)\textrm{,}\,\varphi(\boldsymbol{k}_{j'}))\right)^{-1}\sum_{j=1}^i\textrm{sim}(\varphi(\boldsymbol{q}_i)\textrm{,}\,\varphi(\boldsymbol{k}_j))\boldsymbol{v}_j=\boldsymbol{Z}_i^{-1}\sum_{j=1}^i\textrm{sim}(\varphi(\boldsymbol{q}_i)\textrm{,}\,\varphi(\boldsymbol{k}_j))\boldsymbol{v}_j$,
where we omit the PE, and block/head indexes for simplicity. $\boldsymbol{Z}_i$ is a normalization term and $\varphi(\cdot)$ is a kernel function. Normally, $\boldsymbol{Z}_i=\sum_{j'=1}^i\textrm{sim}(\varphi(\boldsymbol{q}_i)\textrm{,}\,\varphi(\boldsymbol{k}_j))$. For softmax attention, $\varphi(\cdot)$ is an identity kernel and $\textrm{sim}(\boldsymbol{q}_i\textrm{,}\,\boldsymbol{k}_j)=\textrm{exp}(\boldsymbol{q}_i\boldsymbol{k}_j^\top/\sqrt{d_h})$.


\textbf{Normalization.} In Table~\ref{reweight}(\emph{Left}) and Figure~\ref{appendix_normalizer}, we show that modifying normalization may result in less obvious attention sink but does not stop its emergence. So we consider removing the normalization. Since the exponential function in softmax tends to explode without normalization, we replace it with sigmoid or elu plus one. When evaluating the attention sink, we compute the \textit{proxy} attention scores by using the term $\boldsymbol{Z}_i=\sum_{j'=1}^i\textrm{sim}(\varphi(\boldsymbol{q}_i)\textrm{,}\,\varphi(\boldsymbol{k}_{j'}))$ for attention sink metric $\textrm{Sink}_1^{\epsilon}$. As shown in Table~\ref{operation}, without normalization, LMs still have comparable validation loss but no attention sink. With normalization, attention sink also emerges in LMs with sigmoid attention.\looseness=-1

\begin{table}[t]
\vspace{-1.1cm}
\caption{Normalization and selections of kernels in attention significantly affect the emergence of the attention sink. We use ``*'' to mark that the metric $\textrm{Sink}_1^{\epsilon}$ is computed by proxy attention scores.}
\vspace{-.2cm}
\begin{center}
\begin{tabular}{l|l|cc}
\toprule
  $\textrm{sim}(\varphi(\boldsymbol{q}_i)\textrm{,}\,\varphi(\boldsymbol{k}_j))$ & $\boldsymbol{Z}_i$ & $\textrm{Sink}_1^{\epsilon}$(\%) & valid loss \\
\midrule
 $\textrm{exp}(\frac{\boldsymbol{q}_i\boldsymbol{k}_j^\top}{\sqrt{d_h}})$ & $\sum_{j'=1}^i \textrm{exp}(\frac{\boldsymbol{q}_i\boldsymbol{k}_{j'}^\top}{\sqrt{d_h}})$ & 18.18 & 3.73 \\
 $\textrm{sigmoid}(\frac{\boldsymbol{q}_i\boldsymbol{k}_j^\top}{\sqrt{d_h}})$ & $1$ & \,\,\,\,\,\,0.44$^*$ &  3.70 \\
$\textrm{sigmoid}(\frac{\boldsymbol{q}_i\boldsymbol{k}_j^\top}{\sqrt{d_h}})$ & $\sum_{j'=1}^i \textrm{sigmoid}(\frac{\boldsymbol{q}_i\boldsymbol{k}_{j'}^\top}{\sqrt{d_h}})$ & 30.24 & 3.74 \\
 $\textrm{elu}(\frac{\boldsymbol{q}_i\boldsymbol{k}_j^\top}{\sqrt{d_h}})+1$ & $1$ & \,\,\,\,\,\,0.80$^*$ &  3.69 \\ 
 \midrule
$\frac{(\textrm{elu}(\boldsymbol{q}_i)+1)(\textrm{elu}(\boldsymbol{k}_j)+1)^\top}{\sqrt{d_h}}$ & $\sum_{j'=1}^i\frac{(\textrm{elu}(\boldsymbol{q}_i)+1)(\textrm{elu}(\boldsymbol{k}_{j'})+1)^\top}{\sqrt{d_h}}$ & \,\,\,53.65$^*$ & 4.19\\ 
$\frac{\boldsymbol{q}_i\boldsymbol{k}_{j}^\top}{\sqrt{d_h}}$ & $1$ & \,\,\,\,\,0.00$^*$ & 3.99 \\
$\frac{\textrm{mlp}(\boldsymbol{q}_i)\textrm{mlp}(\boldsymbol{k}_j)^\top}{\sqrt{d_h}}$ &  $\textrm{max}\left(\left|\sum_{j'=1}^i\frac{\textrm{mlp}(\boldsymbol{q}_i)\textrm{mlp}(\boldsymbol{k}_{j'})^\top}{\sqrt{d_h}}\right|\textrm{,}\,1\right)$ & \,\,\,\,\,0.19$^*$ &  3.85\\
$\frac{\textrm{mlp}(\boldsymbol{q}_i)\textrm{mlp}(\boldsymbol{k}_j)^\top}{\sqrt{d_h}}$ & $1$ & \,\,\,\,\,0.74$^*$ & 3.91 \\
\bottomrule
\end{tabular}
\end{center}
\vspace{-.2cm}
\label{operation}
\end{table}

\begin{figure}[t]
\centering
\includegraphics[width=\textwidth]{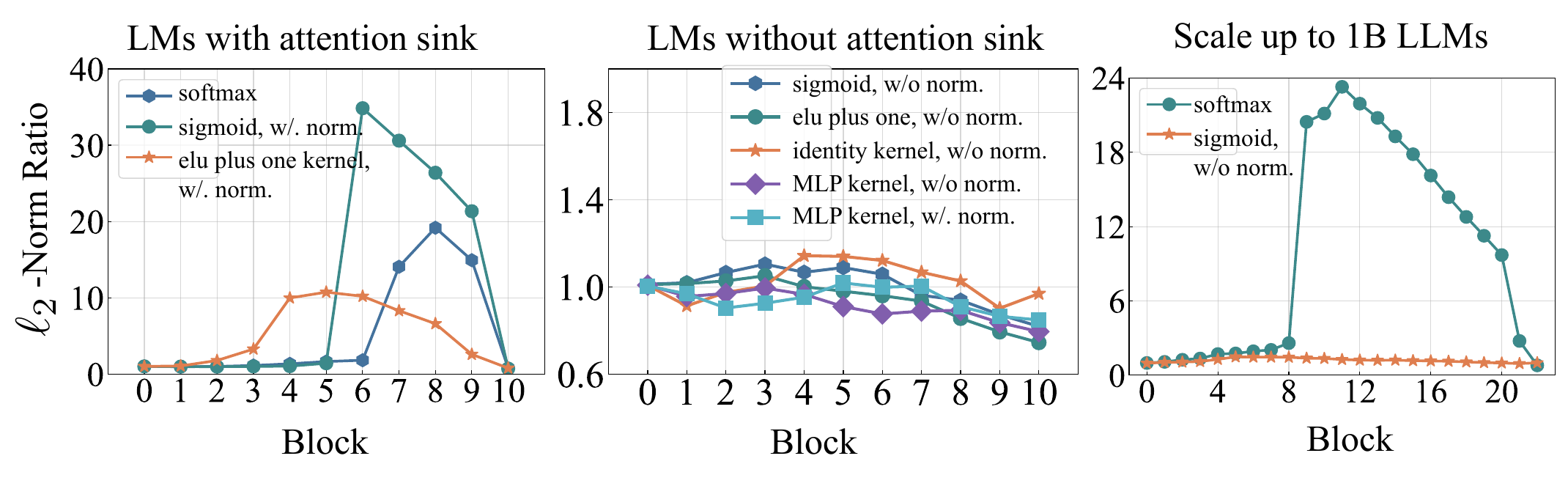}
\vspace{-0.85cm}
\caption{We visualize the $\ell_2$-norm ratio of $\boldsymbol{h}_1^l$ and mean of $\boldsymbol{h}_{t\neq 0}^l$. (\emph{Left}) Massive activations exist in LMs with attention scores that are non-negative and added up to one. (\emph{Middle}) Massive activations do not exist in LMs with independent attention scores. (\emph{Right}) When scaling the model size to 1B, LLMs with sigmoid attention (no normalization) still have no massive activations.\looseness=-1 
}
\vspace{-0.2cm}
\label{massive_activations_operation}
\end{figure}

\textbf{Kernel functions.} Motivated by linear attention~\citep{katharopoulos2020transformers}, we consider different kernel functions $\varphi(\cdot)$, including elu plus one, identity, and MLP. It is noted that  $\textrm{sim}(\varphi(\boldsymbol{q}_i)\textrm{,}\,\varphi(\boldsymbol{k}_{j'}))$ could be minus for identity and MLP kernels. This brings intrinsic difficulty for normalization during the training and calculation of $\textrm{Sink}_1^{\epsilon}$. For normalization in the training, we consider $\boldsymbol{Z}_i=\textrm{max}\big(\big|\sum_{j'=1}^i\textrm{sim}(\varphi(\boldsymbol{q}_i)\textrm{,}\,\varphi(\boldsymbol{k}_{j'}))\big|\textrm{,}\,1\big)$. When computing $\textrm{Sink}_1^{\epsilon}$, we consider the proxy attention scores $\left|\textrm{sim}(\varphi(\boldsymbol{q}_i)\textrm{,}\,\varphi(\boldsymbol{k}_j))\right|/\sum_{j'=1}^i\left|\textrm{sim}(\varphi(\boldsymbol{q}_i)\textrm{,}\,\varphi(\boldsymbol{k}_{j'}))\right|$. From Table~\ref{operation} (a full version in Table~\ref{appendix_operation}), we find that the LM with MLP kernel have no attention sink with or without normalization.\looseness=-1

\textbf{Inner dependence on attention scores.} We note that the LMs with no attention sink typically relax tokens' inner dependence on attention scores. Their attention scores during pre-training could be negative or not add up to one. This indicated that attention sink (at least partially) stems from such inner dependence. Besides the attention metric computed by proxy attention scores, we also observe that the above LMs also have no massive activations, as shown in Figure~\ref{massive_activations_operation}(\emph{Middle}). 

\textbf{Scale up to 1B parameters.} We compare model behaviors of 1B LMs with softmax attention and sigmoid attention (without normalization). Specifically, the latter achieves a validation loss of 3.10, slightly larger than the 3.07 achieved by the former. However, the attention sink metric significantly drops from 45.11\% to near zero: $\textrm{Sink}_1^{\epsilon}=\textrm{2.46\%}$ using the proxy attention scores. Meanwhile, as present in Figure~\ref{massive_activations_operation}(\emph{Right}), LLMs with sigmoid attention have no massive activations. Furthermore, they have no issues of training stability during continued supervised fine-tuning in Appendix~\ref{sft}.\looseness=-1

\begin{abox} 
    \looseness -1 \textbf{Takeaways:} 1.~Positional embedding, FFN design, LN location, and multi-head design do not affect the emergence of attention sink. 2.~Attention sink acts more like key biases, storing extra attention and meanwhile not contributing to the value computation. 3.~When relaxing tokens' inner dependence on attention scores, attention sink does not emerge in LMs. 
\end{abox}

\vspace{-0.1cm}
\section{Future work}
\vspace{-0.1cm}
This work focuses on the sink token in the first position. \citet{sun2024massive,yu2024unveiling} showed that attention sink can also appear on certain word tokens, e.g., period and newline tokens. However, these sink words may vary in different LMs and typically have no fixed positions. In the future, we will extend the research scope to explore how these sink tokens are related to the pre-training. Additionally, it remains unclear whether attention sink benefits LM downstream performance.\looseness=-1

\clearpage

\bibliography{ms}

\begin{thebibliography}{64}
\providecommand{\natexlab}[1]{#1}
\providecommand{\url}[1]{\texttt{#1}}
\expandafter\ifx\csname urlstyle\endcsname\relax
  \providecommand{\doi}[1]{doi: #1}\else
  \providecommand{\doi}{doi: \begingroup \urlstyle{rm}\Url}\fi

\bibitem[Ba et~al.(2016)Ba, Kiros, and Hinton]{ba2016layernormalization}
Jimmy~Lei Ba, Jamie~Ryan Kiros, and Geoffrey~E. Hinton.
\newblock Layer normalization.
\newblock \emph{arXiv preprint arXiv:1607.06450}, 2016.

\bibitem[Biderman et~al.(2023)Biderman, Schoelkopf, Anthony, Bradley, O’Brien, Hallahan, Khan, Purohit, Prashanth, Raff, et~al.]{biderman2023pythia}
Stella Biderman, Hailey Schoelkopf, Quentin~Gregory Anthony, Herbie Bradley, Kyle O’Brien, Eric Hallahan, Mohammad~Aflah Khan, Shivanshu Purohit, USVSN~Sai Prashanth, Edward Raff, et~al.
\newblock Pythia: A suite for analyzing large language models across training and scaling.
\newblock In \emph{International Conference on Machine Learning}, pp.\  2397--2430. PMLR, 2023.

\bibitem[Bondarenko et~al.(2023)Bondarenko, Nagel, and Blankevoort]{bondarenko2023quantizable}
Yelysei Bondarenko, Markus Nagel, and Tijmen Blankevoort.
\newblock Quantizable transformers: Removing outliers by helping attention heads do nothing.
\newblock \emph{Advances in Neural Information Processing Systems}, 36:\penalty0 75067--75096, 2023.

\bibitem[Brown et~al.(2020)Brown, Mann, Ryder, Subbiah, Kaplan, Dhariwal, Neelakantan, Shyam, Sastry, Askell, Agarwal, Herbert-Voss, Krueger, Henighan, Child, Ramesh, Ziegler, Wu, Winter, Hesse, Chen, Sigler, Litwin, Gray, Chess, Clark, Berner, McCandlish, Radford, Sutskever, and Amodei]{Brown2020language}
Tom Brown, Benjamin Mann, Nick Ryder, Melanie Subbiah, Jared~D Kaplan, Prafulla Dhariwal, Arvind Neelakantan, Pranav Shyam, Girish Sastry, Amanda Askell, Sandhini Agarwal, Ariel Herbert-Voss, Gretchen Krueger, Tom Henighan, Rewon Child, Aditya Ramesh, Daniel Ziegler, Jeffrey Wu, Clemens Winter, Chris Hesse, Mark Chen, Eric Sigler, Mateusz Litwin, Scott Gray, Benjamin Chess, Jack Clark, Christopher Berner, Sam McCandlish, Alec Radford, Ilya Sutskever, and Dario Amodei.
\newblock Language models are few-shot learners.
\newblock \emph{Advances in neural information processing systems}, 33, 2020.

\bibitem[Cancedda(2024)]{cancedda2024spectral}
Nicola Cancedda.
\newblock Spectral filters, dark signals, and attention sinks.
\newblock \emph{arXiv preprint arXiv:2402.09221}, 2024.

\bibitem[Chen et~al.(2024)Chen, Zhao, Liu, Bai, Lin, Zhou, and Chang]{chen2024image}
Liang Chen, Haozhe Zhao, Tianyu Liu, Shuai Bai, Junyang Lin, Chang Zhou, and Baobao Chang.
\newblock An image is worth 1/2 tokens after layer 2: Plug-and-play inference acceleration for large vision-language models.
\newblock \emph{arXiv preprint arXiv:2403.06764}, 2024.

\bibitem[Darcet et~al.(2023)Darcet, Oquab, Mairal, and Bojanowski]{darcet2023vision}
Timoth{\'e}e Darcet, Maxime Oquab, Julien Mairal, and Piotr Bojanowski.
\newblock Vision transformers need registers.
\newblock \emph{arXiv preprint arXiv:2309.16588}, 2023.

\bibitem[Dehghani et~al.(2023)Dehghani, Djolonga, Mustafa, Padlewski, Heek, Gilmer, Steiner, Caron, Geirhos, Alabdulmohsin, et~al.]{dehghani2023scaling}
Mostafa Dehghani, Josip Djolonga, Basil Mustafa, Piotr Padlewski, Jonathan Heek, Justin Gilmer, Andreas~Peter Steiner, Mathilde Caron, Robert Geirhos, Ibrahim Alabdulmohsin, et~al.
\newblock Scaling vision transformers to 22 billion parameters.
\newblock In \emph{International Conference on Machine Learning}, pp.\  7480--7512. PMLR, 2023.

\bibitem[Dettmers et~al.(2022)Dettmers, Lewis, Belkada, and Zettlemoyer]{dettmers2022gpt3}
Tim Dettmers, Mike Lewis, Younes Belkada, and Luke Zettlemoyer.
\newblock Gpt3. int8 (): 8-bit matrix multiplication for transformers at scale.
\newblock \emph{Advances in Neural Information Processing Systems}, 35:\penalty0 30318--30332, 2022.

\bibitem[Devlin et~al.(2019)Devlin, Chang, Lee, and Toutanova]{devlin2019bert}
Jacob Devlin, Ming-Wei Chang, Kenton Lee, and Kristina Toutanova.
\newblock Bert: Pre-training of deep bidirectional transformers for language understanding.
\newblock In \emph{Proceedings of the 2019 Conference of the North American Chapter of the Association for Computational Linguistics: Human Language Technologies, Volume 1 (Long and Short Papers)}, pp.\  4171--4186, 2019.

\bibitem[Devoto et~al.(2024)Devoto, Zhao, Scardapane, and Minervini]{devoto2024simple}
Alessio Devoto, Yu~Zhao, Simone Scardapane, and Pasquale Minervini.
\newblock A simple and effective $ l\_2 $ norm-based strategy for kv cache compression.
\newblock \emph{arXiv preprint arXiv:2406.11430}, 2024.

\bibitem[Ding et~al.(2023)Ding, Chen, Xu, Qin, Zheng, Hu, Liu, Sun, and Zhou]{ding2023enhancing}
Ning Ding, Yulin Chen, Bokai Xu, Yujia Qin, Zhi Zheng, Shengding Hu, Zhiyuan Liu, Maosong Sun, and Bowen Zhou.
\newblock Enhancing chat language models by scaling high-quality instructional conversations.
\newblock \emph{arXiv preprint arXiv:2305.14233}, 2023.

\bibitem[Dubey et~al.(2024)Dubey, Jauhri, Pandey, Kadian, Al-Dahle, Letman, Mathur, Schelten, Yang, Fan, et~al.]{dubey2024llama}
Abhimanyu Dubey, Abhinav Jauhri, Abhinav Pandey, Abhishek Kadian, Ahmad Al-Dahle, Aiesha Letman, Akhil Mathur, Alan Schelten, Amy Yang, Angela Fan, et~al.
\newblock The llama 3 herd of models.
\newblock \emph{arXiv preprint arXiv:2407.21783}, 2024.

\bibitem[Fedus et~al.(2022)Fedus, Zoph, and Shazeer]{fedus2022switch}
William Fedus, Barret Zoph, and Noam Shazeer.
\newblock Switch transformers: Scaling to trillion parameter models with simple and efficient sparsity.
\newblock \emph{Journal of Machine Learning Research}, 23\penalty0 (120):\penalty0 1--39, 2022.

\bibitem[Gao et~al.(2020)Gao, Biderman, Black, Golding, Hoppe, Foster, Phang, He, Thite, Nabeshima, et~al.]{gao2020pile}
Leo Gao, Stella Biderman, Sid Black, Laurence Golding, Travis Hoppe, Charles Foster, Jason Phang, Horace He, Anish Thite, Noa Nabeshima, et~al.
\newblock The pile: An 800gb dataset of diverse text for language modeling.
\newblock \emph{arXiv preprint arXiv:2101.00027}, 2020.

\bibitem[Gao et~al.(2024)Gao, Tow, Abbasi, Biderman, Black, DiPofi, Foster, Golding, Hsu, Le~Noac'h, Li, McDonell, Muennighoff, Ociepa, Phang, Reynolds, Schoelkopf, Skowron, Sutawika, Tang, Thite, Wang, Wang, and Zou]{eval-harness}
Leo Gao, Jonathan Tow, Baber Abbasi, Stella Biderman, Sid Black, Anthony DiPofi, Charles Foster, Laurence Golding, Jeffrey Hsu, Alain Le~Noac'h, Haonan Li, Kyle McDonell, Niklas Muennighoff, Chris Ociepa, Jason Phang, Laria Reynolds, Hailey Schoelkopf, Aviya Skowron, Lintang Sutawika, Eric Tang, Anish Thite, Ben Wang, Kevin Wang, and Andy Zou.
\newblock A framework for few-shot language model evaluation, 07 2024.
\newblock URL \url{https://zenodo.org/records/12608602}.

\bibitem[Ge et~al.(2023)Ge, Zhang, Liu, Zhang, Han, and Gao]{ge2023model}
Suyu Ge, Yunan Zhang, Liyuan Liu, Minjia Zhang, Jiawei Han, and Jianfeng Gao.
\newblock Model tells you what to discard: Adaptive kv cache compression for llms.
\newblock \emph{arXiv preprint arXiv:2310.01801}, 2023.

\bibitem[Gu \& Dao(2023)Gu and Dao]{gu2023mamba}
Albert Gu and Tri Dao.
\newblock Mamba: Linear-time sequence modeling with selective state spaces.
\newblock \emph{arXiv preprint arXiv:2312.00752}, 2023.

\bibitem[Guo et~al.(2024{\natexlab{a}})Guo, Pai, Bai, Jiao, Jordan, and Mei]{guo2024active}
Tianyu Guo, Druv Pai, Yu~Bai, Jiantao Jiao, Michael~I Jordan, and Song Mei.
\newblock Active-dormant attention heads: Mechanistically demystifying extreme-token phenomena in llms.
\newblock \emph{arXiv preprint arXiv:2410.13835}, 2024{\natexlab{a}}.

\bibitem[Guo et~al.(2024{\natexlab{b}})Guo, Kamigaito, and Watanabe]{guo2024attention}
Zhiyu Guo, Hidetaka Kamigaito, and Taro Watanabe.
\newblock Attention score is not all you need for token importance indicator in kv cache reduction: Value also matters.
\newblock \emph{arXiv preprint arXiv:2406.12335}, 2024{\natexlab{b}}.

\bibitem[Han et~al.(2024)Han, Wang, Peng, Xiong, Chen, Ji, and Wang]{han2024lm}
Chi Han, Qifan Wang, Hao Peng, Wenhan Xiong, Yu~Chen, Heng Ji, and Sinong Wang.
\newblock Lm-infinite: Zero-shot extreme length generalization for large language models.
\newblock In \emph{Proceedings of the 2024 Conference of the North American Chapter of the Association for Computational Linguistics: Human Language Technologies (Volume 1: Long Papers)}, pp.\  3991--4008, 2024.

\bibitem[He et~al.(2024)He, Noci, Paliotta, Schlag, and Hofmann]{he2024understanding}
Bobby He, Lorenzo Noci, Daniele Paliotta, Imanol Schlag, and Thomas Hofmann.
\newblock Understanding and minimising outlier features in transformer training.
\newblock In \emph{Advances in Neural Information Processing Systems}, 2024.

\bibitem[He et~al.(2016)He, Zhang, Ren, and Sun]{he2016deep}
Kaiming He, Xiangyu Zhang, Shaoqing Ren, and Jian Sun.
\newblock Deep residual learning for image recognition.
\newblock In \emph{Proceedings of the IEEE conference on computer vision and pattern recognition}, pp.\  770--778, 2016.

\bibitem[Hendrycks \& Gimpel(2016)Hendrycks and Gimpel]{hendrycks2016gaussian}
Dan Hendrycks and Kevin Gimpel.
\newblock Gaussian error linear units (gelus).
\newblock \emph{arXiv preprint arXiv:1606.08415}, 2016.

\bibitem[Huang et~al.(2024)Huang, Qin, Liu, Li, Liu, Benini, Magno, and Qi]{huang2024slim}
Wei Huang, Haotong Qin, Yangdong Liu, Yawei Li, Xianglong Liu, Luca Benini, Michele Magno, and Xiaojuan Qi.
\newblock Slim-llm: Salience-driven mixed-precision quantization for large language models.
\newblock \emph{arXiv preprint arXiv:2405.14917}, 2024.

\bibitem[Jiang et~al.(2023)Jiang, Sablayrolles, Mensch, Bamford, Chaplot, Casas, Bressand, Lengyel, Lample, Saulnier, et~al.]{jiang2023mistral}
Albert~Q Jiang, Alexandre Sablayrolles, Arthur Mensch, Chris Bamford, Devendra~Singh Chaplot, Diego de~las Casas, Florian Bressand, Gianna Lengyel, Guillaume Lample, Lucile Saulnier, et~al.
\newblock Mistral 7b.
\newblock \emph{arXiv preprint arXiv:2310.06825}, 2023.

\bibitem[Katharopoulos et~al.(2020)Katharopoulos, Vyas, Pappas, and Fleuret]{katharopoulos2020transformers}
Angelos Katharopoulos, Apoorv Vyas, Nikolaos Pappas, and Fran{\c{c}}ois Fleuret.
\newblock Transformers are rnns: Fast autoregressive transformers with linear attention.
\newblock In \emph{International conference on machine learning}, pp.\  5156--5165. PMLR, 2020.

\bibitem[Kaul et~al.(2024)Kaul, Ma, Elezi, and Deng]{kaul2024attention}
Prannay Kaul, Chengcheng Ma, Ismail Elezi, and Jiankang Deng.
\newblock From attention to activation: Unravelling the enigmas of large language models.
\newblock \emph{arXiv preprint arXiv:2410.17174}, 2024.

\bibitem[Kazemnejad et~al.(2024)Kazemnejad, Padhi, Natesan~Ramamurthy, Das, and Reddy]{kazemnejad2024impact}
Amirhossein Kazemnejad, Inkit Padhi, Karthikeyan Natesan~Ramamurthy, Payel Das, and Siva Reddy.
\newblock The impact of positional encoding on length generalization in transformers.
\newblock \emph{Advances in Neural Information Processing Systems}, 36, 2024.

\bibitem[Kingma \& Ba(2014)Kingma and Ba]{kingma2014adam}
Diederik Kingma and Jimmy Ba.
\newblock Adam: A method for stochastic optimization.
\newblock \emph{arXiv preprint arXiv:1412.6980}, 2014.

\bibitem[Lieber et~al.(2024)Lieber, Lenz, Bata, Cohen, Osin, Dalmedigos, Safahi, Meirom, Belinkov, Shalev-Shwartz, et~al.]{lieber2024jamba}
Opher Lieber, Barak Lenz, Hofit Bata, Gal Cohen, Jhonathan Osin, Itay Dalmedigos, Erez Safahi, Shaked Meirom, Yonatan Belinkov, Shai Shalev-Shwartz, et~al.
\newblock Jamba: A hybrid transformer-mamba language model.
\newblock \emph{arXiv preprint arXiv:2403.19887}, 2024.

\bibitem[Lin et~al.(2024)Lin, Xu, Wu, Cui, Zhang, Mou, Song, Sun, and Wei]{lin2024duquant}
Haokun Lin, Haobo Xu, Yichen Wu, Jingzhi Cui, Yingtao Zhang, Linzhan Mou, Linqi Song, Zhenan Sun, and Ying Wei.
\newblock Duquant: Distributing outliers via dual transformation makes stronger quantized llms.
\newblock In \emph{The Thirty-eighth Annual Conference on Neural Information Processing Systems}, 2024.

\bibitem[Liu et~al.(2024{\natexlab{a}})Liu, Zheng, Muennighoff, Zeng, Dou, Pang, Jiang, and Lin]{liu2024regmix}
Qian Liu, Xiaosen Zheng, Niklas Muennighoff, Guangtao Zeng, Longxu Dou, Tianyu Pang, Jing Jiang, and Min Lin.
\newblock Regmix: Data mixture as regression for language model pre-training.
\newblock \emph{arXiv preprint arXiv:2407.01492}, 2024{\natexlab{a}}.

\bibitem[Liu et~al.(2024{\natexlab{b}})Liu, Bai, Lin, Li, Gao, Xu, Hou, Yao, and Yuan]{liu2024intactkv}
Ruikang Liu, Haoli Bai, Haokun Lin, Yuening Li, Han Gao, Zhengzhuo Xu, Lu~Hou, Jun Yao, and Chun Yuan.
\newblock Intactkv: Improving large language model quantization by keeping pivot tokens intact.
\newblock \emph{arXiv preprint arXiv:2403.01241}, 2024{\natexlab{b}}.

\bibitem[Loshchilov \& Hutter(2017)Loshchilov and Hutter]{loshchilov2017decoupled}
Ilya Loshchilov and Frank Hutter.
\newblock Decoupled weight decay regularization.
\newblock \emph{arXiv preprint arXiv:1711.05101}, 2017.

\bibitem[Miller(2023)]{miller2023attention}
Evan Miller.
\newblock Attention is off by one.
\newblock \emph{URL https://www.evanmiller.org/attention-is-off-by-one.html}, 2023.

\bibitem[Ouyang et~al.(2022)Ouyang, Wu, Jiang, Almeida, Wainwright, Mishkin, Zhang, Agarwal, Slama, Ray, et~al.]{ouyang2022training}
Long Ouyang, Jeffrey Wu, Xu~Jiang, Diogo Almeida, Carroll Wainwright, Pamela Mishkin, Chong Zhang, Sandhini Agarwal, Katarina Slama, Alex Ray, et~al.
\newblock Training language models to follow instructions with human feedback.
\newblock \emph{Advances in neural information processing systems}, 35:\penalty0 27730--27744, 2022.

\bibitem[Press et~al.(2021)Press, Smith, and Lewis]{press2021train}
Ofir Press, Noah~A Smith, and Mike Lewis.
\newblock Train short, test long: Attention with linear biases enables input length extrapolation.
\newblock \emph{arXiv preprint arXiv:2108.12409}, 2021.

\bibitem[Radford et~al.(2019)Radford, Wu, Child, Luan, Amodei, Sutskever, et~al.]{radford2019language}
Alec Radford, Jeffrey Wu, Rewon Child, David Luan, Dario Amodei, Ilya Sutskever, et~al.
\newblock Language models are unsupervised multitask learners.
\newblock \emph{OpenAI blog}, 1\penalty0 (8):\penalty0 9, 2019.

\bibitem[Raffel et~al.(2020)Raffel, Shazeer, Roberts, Lee, Narang, Matena, Zhou, Li, and Liu]{raffel2020exploring}
Colin Raffel, Noam Shazeer, Adam Roberts, Katherine Lee, Sharan Narang, Michael Matena, Yanqi Zhou, Wei Li, and Peter~J Liu.
\newblock Exploring the limits of transfer learning with a unified text-to-text transformer.
\newblock \emph{Journal of machine learning research}, 21\penalty0 (140):\penalty0 1--67, 2020.

\bibitem[Ramachandran et~al.(2017)Ramachandran, Zoph, and Le]{ramachandran2017searching}
Prajit Ramachandran, Barret Zoph, and Quoc~V Le.
\newblock Searching for activation functions.
\newblock \emph{arXiv preprint arXiv:1710.05941}, 2017.

\bibitem[Ramapuram et~al.(2024)Ramapuram, Danieli, Dhekane, Weers, Busbridge, Ablin, Likhomanenko, Digani, Gu, Shidani, et~al.]{ramapuram2024theory}
Jason Ramapuram, Federico Danieli, Eeshan Dhekane, Floris Weers, Dan Busbridge, Pierre Ablin, Tatiana Likhomanenko, Jagrit Digani, Zijin Gu, Amitis Shidani, et~al.
\newblock Theory, analysis, and best practices for sigmoid self-attention.
\newblock \emph{arXiv preprint arXiv:2409.04431}, 2024.

\bibitem[Shazeer(2020)]{shazeer2020glu}
Noam Shazeer.
\newblock Glu variants improve transformer.
\newblock \emph{arXiv preprint arXiv:2002.05202}, 2020.

\bibitem[Shazeer et~al.(2017)Shazeer, Mirhoseini, Maziarz, Davis, Le, Hinton, and Dean]{shazeer2017outrageously}
Noam Shazeer, Azalia Mirhoseini, Krzysztof Maziarz, Andy Davis, Quoc Le, Geoffrey Hinton, and Jeff Dean.
\newblock Outrageously large neural networks: The sparsely-gated mixture-of-experts layer.
\newblock \emph{arXiv preprint arXiv:1701.06538}, 2017.

\bibitem[Su et~al.(2024)Su, Ahmed, Lu, Pan, Bo, and Liu]{su2024roformer}
Jianlin Su, Murtadha Ahmed, Yu~Lu, Shengfeng Pan, Wen Bo, and Yunfeng Liu.
\newblock Roformer: Enhanced transformer with rotary position embedding.
\newblock \emph{Neurocomputing}, 568:\penalty0 127063, 2024.

\bibitem[Sun et~al.(2024)Sun, Chen, Kolter, and Liu]{sun2024massive}
Mingjie Sun, Xinlei Chen, J~Zico Kolter, and Zhuang Liu.
\newblock Massive activations in large language models.
\newblock \emph{arXiv preprint arXiv:2402.17762}, 2024.

\bibitem[Team et~al.(2024)Team, Lenz, Arazi, Bergman, Manevich, Peleg, Aviram, Almagor, Fridman, Padnos, et~al.]{team2024jamba}
Jamba Team, Barak Lenz, Alan Arazi, Amir Bergman, Avshalom Manevich, Barak Peleg, Ben Aviram, Chen Almagor, Clara Fridman, Dan Padnos, et~al.
\newblock Jamba-1.5: Hybrid transformer-mamba models at scale.
\newblock \emph{arXiv preprint arXiv:2408.12570}, 2024.

\bibitem[Touvron et~al.(2023)Touvron, Martin, Stone, Albert, Almahairi, Babaei, Bashlykov, Batra, Bhargava, Bhosale, et~al.]{touvron2023llama}
Hugo Touvron, Louis Martin, Kevin Stone, Peter Albert, Amjad Almahairi, Yasmine Babaei, Nikolay Bashlykov, Soumya Batra, Prajjwal Bhargava, Shruti Bhosale, et~al.
\newblock Llama 2: Open foundation and fine-tuned chat models.
\newblock \emph{arXiv preprint arXiv:2307.09288}, 2023.

\bibitem[Vaswani et~al.(2017)Vaswani, Shazeer, Parmar, Uszkoreit, Jones, Gomez, Kaiser, and Polosukhin]{vaswani2017attention}
Ashish Vaswani, Noam Shazeer, Niki Parmar, Jakob Uszkoreit, Llion Jones, Aidan~N Gomez, {\L}ukasz Kaiser, and Illia Polosukhin.
\newblock Attention is all you need.
\newblock \emph{Advances in neural information processing systems}, 30, 2017.

\bibitem[Wan et~al.(2024)Wan, Wu, Liu, Huang, Zhu, Jin, Wang, and Yuan]{wan2024look}
Zhongwei Wan, Ziang Wu, Che Liu, Jinfa Huang, Zhihong Zhu, Peng Jin, Longyue Wang, and Li~Yuan.
\newblock Look-m: Look-once optimization in kv cache for efficient multimodal long-context inference.
\newblock \emph{arXiv preprint arXiv:2406.18139}, 2024.

\bibitem[Wang et~al.(2022)Wang, Roberts, Hesslow, Le~Scao, Chung, Beltagy, Launay, and Raffel]{wang2022language}
Thomas Wang, Adam Roberts, Daniel Hesslow, Teven Le~Scao, Hyung~Won Chung, Iz~Beltagy, Julien Launay, and Colin Raffel.
\newblock What language model architecture and pretraining objective works best for zero-shot generalization?
\newblock In \emph{International Conference on Machine Learning}, pp.\  22964--22984. PMLR, 2022.

\bibitem[Wortsman et~al.(2023)Wortsman, Liu, Xiao, Everett, Alemi, Adlam, Co-Reyes, Gur, Kumar, Novak, et~al.]{wortsman2023small}
Mitchell Wortsman, Peter~J Liu, Lechao Xiao, Katie Everett, Alex Alemi, Ben Adlam, John~D Co-Reyes, Izzeddin Gur, Abhishek Kumar, Roman Novak, et~al.
\newblock Small-scale proxies for large-scale transformer training instabilities.
\newblock \emph{arXiv preprint arXiv:2309.14322}, 2023.

\bibitem[Wu \& Tu(2024)Wu and Tu]{wu2024layer}
Haoyi Wu and Kewei Tu.
\newblock Layer-condensed kv cache for efficient inference of large language models.
\newblock \emph{arXiv preprint arXiv:2405.10637}, 2024.

\bibitem[Xiao et~al.(2023{\natexlab{a}})Xiao, Lin, Seznec, Wu, Demouth, and Han]{xiao2023smoothquant}
Guangxuan Xiao, Ji~Lin, Mickael Seznec, Hao Wu, Julien Demouth, and Song Han.
\newblock Smoothquant: Accurate and efficient post-training quantization for large language models.
\newblock In \emph{International Conference on Machine Learning}, pp.\  38087--38099. PMLR, 2023{\natexlab{a}}.

\bibitem[Xiao et~al.(2023{\natexlab{b}})Xiao, Tian, Chen, Han, and Lewis]{xiao2023efficient}
Guangxuan Xiao, Yuandong Tian, Beidi Chen, Song Han, and Mike Lewis.
\newblock Efficient streaming language models with attention sinks.
\newblock \emph{arXiv preprint arXiv:2309.17453}, 2023{\natexlab{b}}.

\bibitem[Yang et~al.(2024)Yang, Ge, Li, Chen, Ge, Shan, and Chen]{yang2024seed}
Shuai Yang, Yuying Ge, Yang Li, Yukang Chen, Yixiao Ge, Ying Shan, and Yingcong Chen.
\newblock Seed-story: Multimodal long story generation with large language model.
\newblock \emph{arXiv preprint arXiv:2407.08683}, 2024.

\bibitem[Yin et~al.(2024)Yin, He, Zhuang, Zhao, Yao, Shen, and Zhang]{yin2024stablemask}
Qingyu Yin, Xuzheng He, Xiang Zhuang, Yu~Zhao, Jianhua Yao, Xiaoyu Shen, and Qiang Zhang.
\newblock Stablemask: Refining causal masking in decoder-only transformer.
\newblock \emph{arXiv preprint arXiv:2402.04779}, 2024.

\bibitem[Yu et~al.(2024)Yu, Wang, Fu, Shi, Shaikh, and Lin]{yu2024unveiling}
Zhongzhi Yu, Zheng Wang, Yonggan Fu, Huihong Shi, Khalid Shaikh, and Yingyan~Celine Lin.
\newblock Unveiling and harnessing hidden attention sinks: Enhancing large language models without training through attention calibration.
\newblock \emph{arXiv preprint arXiv:2406.15765}, 2024.

\bibitem[Zellers et~al.(2019)Zellers, Holtzman, Bisk, Farhadi, and Choi]{zellers2019hellaswag}
Rowan Zellers, Ari Holtzman, Yonatan Bisk, Ali Farhadi, and Yejin Choi.
\newblock Hellaswag: Can a machine really finish your sentence?
\newblock \emph{arXiv preprint arXiv:1905.07830}, 2019.

\bibitem[Zhai et~al.(2023)Zhai, Likhomanenko, Littwin, Busbridge, Ramapuram, Zhang, Gu, and Susskind]{zhai2023stabilizing}
Shuangfei Zhai, Tatiana Likhomanenko, Etai Littwin, Dan Busbridge, Jason Ramapuram, Yizhe Zhang, Jiatao Gu, and Joshua~M Susskind.
\newblock Stabilizing transformer training by preventing attention entropy collapse.
\newblock In \emph{International Conference on Machine Learning}, pp.\  40770--40803. PMLR, 2023.

\bibitem[Zhang \& Sennrich(2019)Zhang and Sennrich]{zhang2019root}
Biao Zhang and Rico Sennrich.
\newblock Root mean square layer normalization.
\newblock \emph{Advances in Neural Information Processing Systems}, 32, 2019.

\bibitem[Zhang et~al.(2024{\natexlab{a}})Zhang, Zeng, Wang, and Lu]{zhang2024tinyllama}
Peiyuan Zhang, Guangtao Zeng, Tianduo Wang, and Wei Lu.
\newblock Tinyllama: An open-source small language model.
\newblock \emph{arXiv preprint arXiv:2401.02385}, 2024{\natexlab{a}}.

\bibitem[Zhang et~al.(2022)Zhang, Roller, Goyal, Artetxe, Chen, Chen, Dewan, Diab, Li, Lin, et~al.]{zhang2022opt}
Susan Zhang, Stephen Roller, Naman Goyal, Mikel Artetxe, Moya Chen, Shuohui Chen, Christopher Dewan, Mona Diab, Xian Li, Xi~Victoria Lin, et~al.
\newblock Opt: Open pre-trained transformer language models.
\newblock \emph{arXiv preprint arXiv:2205.01068}, 2022.

\bibitem[Zhang et~al.(2024{\natexlab{b}})Zhang, Liu, Chen, Kailkhura, Chen, and Wang]{zhang2024q}
Zhenyu Zhang, Shiwei Liu, Runjin Chen, Bhavya Kailkhura, Beidi Chen, and Atlas Wang.
\newblock Q-hitter: A better token oracle for efficient llm inference via sparse-quantized kv cache.
\newblock \emph{Proceedings of Machine Learning and Systems}, 6:\penalty0 381--394, 2024{\natexlab{b}}.

\end{thebibliography}
\bibliographystyle{iclr2025_conference}


\clearpage

\appendix

\section{Related Work}\label{related_work}

\vspace{-0.05cm}
\subsection{Attention sink phenomenon} 
\vspace{-0.05cm}
Although the term \textit{attention sink} was introduced by \citet{xiao2023efficient}, similar observations have been reported in prior studies. Among them, \citet{zhai2023stabilizing} demonstrated that attention weights in Transformer models~\citep{vaswani2017attention} collapse into one-hot vectors, a phenomenon termed \textit{attention entropy collapse}. This issue has been identified as \textit{attention logit growth} by \citet{wortsman2023small, dehghani2023scaling}. Furthermore, \citet{bondarenko2023quantizable} observed the emergence of strong outliers in Transformer models, linked to attention heads that learn not to update residuals. Although not explicitly using the term attention sink, \citet{han2024lm} highlighted the significance of the first few tokens in language models (LMs), noting their distinct representational space. In vision transformers (ViTs), \citet{darcet2023vision} identified high-norm outlier tokens as artifacts in attention maps. Overall, the attention sink phenomenon is prevalent across Transformer models, including ViTs, encoder-only LMs, such as BERT~\citep{devlin2019bert}, and auto-regressive LMs. Notably, in auto-regressive LMs, attention sink consistently occurs on the first token in addition to tokens with limited semantic information in the middle context~\citep{sun2024massive,yu2024unveiling}.\looseness=-1

\vspace{-0.05cm}
\subsection{Attention sink and activation outliers}
\vspace{-0.05cm}
\citet{cancedda2024spectral} found that early FFNs in LLaMA2 blast off the large norm of hidden states for the first token, leading to attention sink in later layers. This is referred to as \textit{massive activations} (very few activations exhibit disproportionately large values) in \citet{sun2024massive} and \textit{token-wise activation outlier} in outlier literature~\citep{lin2024duquant}. Similar observations were made by \citet{bondarenko2023quantizable}. In contrast, \textit{channel-wise activation outlier}~\citep{dettmers2022gpt3, xiao2023smoothquant} refer to that outliers appear in specific channels across all token positions. A concurrent study~\citep{kaul2024attention} found that these two types of outliers are distinct and require different mitigation strategies. Despite the high magnitude in hidden states, \citet{bondarenko2023quantizable, guo2024attention} observed the values associated with sink tokens are typically smaller than those of other tokens.\looseness=-1

\vspace{-0.05cm}
\subsection{Understanding and mitigate attention sink}
\vspace{-0.05cm}
To understand the attention sink phenomenon in general Transformer models, \citet{bondarenko2023quantizable} hypothesized that the interplay among softmax, residual connections, and LayerNorm encourages models to learn not to update residuals. In auto-regressive LMs, \citet{sun2024massive} attributed attention sink at the first position to implicit biases in keys and values. A concurrent work by \citet{guo2024active} provided empirical and theoretical analyses of attention sink in very simple Transformer LMs on the Bigram-Backcopy (BB) task, identifying a mutual reinforcement mechanism as the underlying cause. Their findings on the BB task indicated that (i) replacing softmax with ReLU removes attention sink; (ii) switching the optimizer Adam to SGD eliminates massive activations but attention sink remains. 

Besides the above literature, \citet{xiao2023efficient, kaul2024attention} found that replacing softmax with softmax-off-by-one~\citep{miller2023attention} alleviates attention sink at the first position, though \citet{xiao2023efficient} noted its potential emergence at other initial token positions. \citet{yin2024stablemask} proposed refining the casual mask with pseudo-attention values on the ``future tokens'', which could reduce attention sink. Though not explicitly targeting attention sink, methods such as $\sigma$Reparam~\citep{zhai2023stabilizing} and qk-norm~\citep{dehghani2023scaling} were introduced to address attention entropy collapse. To mitigate activation outliers, \citet{he2024understanding} developed an outlier-protected block and also highlighted the role of optimization in mitigating the outliers. Besides, \citet{bondarenko2023quantizable} proposed clipped softmax and gated attention, while \citet{kaul2024attention} introduced OrthoAdam to reduce activation outliers.\looseness=-1


\vspace{-0.05cm}
\subsection{Applications of attention sink}
\vspace{-0.05cm}
Sink tokens in LMs are functionally important and absorb significant attention. This enables computational savings by prioritizing initial and recent tokens over middle-context tokens in attention calculations. Token-wise activation outliers pose challenges for quantization, necessitating specialized handling to facilitate quantization processes. These insights have motivated various applications, including streaming/long context generation~\citep{xiao2023efficient,han2024lm,yang2024seed}, KV cache optimization~\citep{ge2023model,wan2024look,wu2024layer}, efficient inference~\citep{zhang2024q,chen2024image}, model quantization~\citep{liu2024intactkv,huang2024slim}, and others.\looseness=-1



\newpage
\section{Detailed formulations of positional embedding}\label{appendix_llm}

In this section, we provide detailed formulations of positional embedding (PE) in LMs. PEs could be classified into two categories. NoPE, absolute positional embedding, and learnable positional embedding belong to the same category since they are added to the initial hidden states: $\boldsymbol{H}^0=\boldsymbol{X}\boldsymbol{W}_E+\boldsymbol{P}$. Here $\boldsymbol{P}=\left\{\boldsymbol{p}_1\textrm{,}\,\boldsymbol{p}_2\textrm{,}\,\cdots\textrm{,}\,\boldsymbol{p}_T\right\}\in\mathbb{R}^{T\times d}$. Meanwhile, the dot product between each query and key is computed as  $\langle\boldsymbol{q}_i\textrm{,}\,\boldsymbol{k}_j\rangle=\boldsymbol{q}_i\boldsymbol{k}_j^\top$.

\textbf{NoPE.} NoPE~\citep{kazemnejad2024impact} refers to no positional embedding. Therefore, $\boldsymbol{P}=\boldsymbol{0}$.

\textbf{Absolute PE.} Each position vector $\boldsymbol{p}_t$ in absolute positional embedding is a periodic function of token position $t$ following \citet{vaswani2017attention}:

\begin{equation}
    \boldsymbol{p}_t=\begin{bmatrix}
        \sin(\omega_1 t) & \cos(\omega_1 t)  & \sin(\omega_2 t) & \cos(\omega_2 t) & \cdots & \sin(\omega_{d/2} t)& \cos(\omega_{d/2} t)\\
    \end{bmatrix}\textrm{,}
\end{equation}
where $\omega_i=1/\textrm{10000}^{2(i-1)/d}$. 

\textbf{Learnable PE.} Each position vector $\boldsymbol{p}_t$ in learnable positional embeddings is a learnable parameter.

Relative positional embedding, ALibi, and rotary belong to another category since they consider the relative distance among tokens. Therefore, the initial hidden states are $\boldsymbol{H}^0=\boldsymbol{X}\boldsymbol{W}_E$ but how to compute the dot product between each query and key is modified. 

\textbf{Relative PE.} The relative positional embeddings are adopted in T5 models~\citep{raffel2020exploring}. A bias term is adopted for the dot product: $\langle\boldsymbol{q}_i\textrm{,}\,\boldsymbol{k}_j\rangle=\boldsymbol{q}_i\boldsymbol{k}_j^\top+g(i-j)$, where the definition for distance function $g(\cdot)$ is:

\begin{equation}
    g(i-j)=\left\{
    \begin{array}{lcl}
        i-j & &\textrm{if} \,\,\,\, i-j < \mathcal{B}/2 \\
        \frac{\mathcal{B}}{2} + \lfloor\frac{\log(\frac{i-j}{\mathcal{B}/2})}{\log(\frac{\mathcal{D}}{\mathcal{B}/2})}\rfloor\times \frac{\mathcal{B}}{2} && \textrm{if} \,\,\,\, \mathcal{B}/2 \leq i-j < \mathcal{D}\\
        \mathcal{B}-1 && \textrm{if} \,\,\,\, i-j\geq\mathcal{D}
    \end{array}
    \right.
\end{equation}

Here $\mathcal{B}$ and $\mathcal{D}$ refer to the number of buckets and maximum distance, respectively. In T5 models, $\mathcal{B}=\textrm{32}$ and $\mathcal{D}=\textrm{128}$.

\textbf{ALiBi.} Similarly, ALiBi~\citep{press2021train} also adds a bias term to the dot product: $\langle\boldsymbol{q}_i\textrm{,}\,\boldsymbol{k}_j\rangle=\boldsymbol{q}_i\boldsymbol{k}_j^\top+g(i-j)$, where $g(i-j)=-(i-j)\cdot m$. $m$ is a head-specific slope fixed:
\begin{equation}
    m=2^{-h\cdot 2^{-\log_2 H +3}}\textrm{,}
\end{equation}
where $1\leq h\leq H$ is the head index and $H$ is the number of heads in the multi-head self-attention (MHSA). This slope $m$ is a geometric sequence. For instance, when $H=8$, the sequence is $\frac{1}{2^1}\textrm{,}\,\frac{1}{2^2}\textrm{,}\,\cdots\textrm{,}\,\frac{1}{2^8}$. When $H=16$, the sequence is $\frac{1}{2^{0.5}}\textrm{,}\,\frac{1}{2^1}\textrm{,}\,\frac{1}{2^{1.5}}\textrm{,}\,\cdots\textrm{,}\,\frac{1}{2^8}$.

\textbf{Rotary.} Rotary~\citep{su2024roformer} is the most adopted position encoding approach in the LLM community. It projects queries and keys into another space through rotations: 

\begin{equation}
    \langle\boldsymbol{q}_i\textrm{,}\, \boldsymbol{k}_j\rangle=\left(\boldsymbol{q}_i\boldsymbol{R}_{\Theta\textrm{,}\,-i}\right)\left( \boldsymbol{k}_j\boldsymbol{R}_{\Theta\textrm{,}\,-j}\right)^\top=\boldsymbol{q}_i\boldsymbol{R}_{\Theta\textrm{,}\,-i}\boldsymbol{R}_{\Theta\textrm{,}\,j}\boldsymbol{k}_j^\top=\boldsymbol{q}_i\boldsymbol{R}_{\Theta\textrm{,}\,j-i} \boldsymbol{k}_j^\top\textrm{,}
\end{equation}
where $\boldsymbol{R}_{\Theta\textrm{,}\,(\cdot)}$ is a pre-defined rotation matrix:
\begin{equation}\boldsymbol{R}_{\Theta,m}=
\left(
    \begin{array}{ccccccc}
       \cos m\omega_1  & -\sin m\omega_1 & 0 & 0 & \cdots & 0 & 0 \\
        \sin m\omega_1  & \cos m\omega_1 & 0 & 0 & \cdots & 0 & 0 \\
        0 & 0 & \cos m\omega_2  & -\sin m\omega_2  &\cdots & 0 & 0  \\
        0 & 0 & \sin m\omega_2  & \cos m\omega_2  &\cdots & 0 & 0  \\
        \vdots & \vdots & \vdots & \vdots & \ddots & \vdots & \vdots \\
        0 & 0 &  0 & 0 & \cdots & \cos m\omega_{d_h/2}  & -\sin m\omega_{d_h/2}\\
         0 & 0 &  0 & 0 & \cdots &  \sin m\omega_{d_h/2}  & \cos m\omega_{d_h/2}\\
    \end{array}
    \right)\textrm{.}
\end{equation}

Here $\omega_i=1/\textrm{10000}^{2(i-1)/d_h}$ and $d_h=d/H$ is the hidden dimension in each head. From the above definition, it is noted that the rotation matrix $\boldsymbol{R}_{\Theta,m}^{d_h}$ satisfies $\boldsymbol{R}_{\Theta,m}^\top=\boldsymbol{R}_{\Theta,-m}$ and $\boldsymbol{R}_{\Theta,i}\boldsymbol{R}_{\Theta,j}=\boldsymbol{R}_{\Theta,i+j}$\looseness=-1.

\section{Attention sink in open-sourced LMs}

\subsection{How positional embedding relates to attention sink}\label{explain_pe}

In Section~\ref{section3_3}, we have shown that Mistral-7B, LLaMA2-7B Base, and LLaMA3-8B Base, which adopt rotary as PE, have no attention sink for repeated token sequences. While GPT2, which adopts learnable PE, has the attention sink in such a scenario. To further understand this, we explore how positional embedding plays a role through a theoretical perspective.

\textbf{Activations.} Suppose the repeated sequence is $\boldsymbol{X}=\{\boldsymbol{x}_1\textrm{,}\,\boldsymbol{x}_2\textrm{,}\,\ldots\textrm{,}\,\boldsymbol{x}_T\}$ and each $\boldsymbol{x}_t=\boldsymbol{x}$. For LMs with NoPE/relative PE/ALiBi/Rotary, we have the initial hidden states $\boldsymbol{h}_t^{0}=\boldsymbol{x}\boldsymbol{W}_E$, which are the same among all the token positions since $\boldsymbol{P}=\boldsymbol{0}$. Then for the first transformer block $l=1$, we know that $\boldsymbol{k}_t^{1\textrm{,}h}=\textrm{LN}(\boldsymbol{h}_t^{0})\boldsymbol{W}_{K}^{1\textrm{,}h}=\textrm{LN}(\boldsymbol{x}\boldsymbol{W}_E)\boldsymbol{W}_{K}^{1\textrm{,}h}$, $\boldsymbol{q}_t^{1\textrm{,}h}=\textrm{LN}(\boldsymbol{h}_t^{0})\boldsymbol{W}_{Q}^{1\textrm{,}h}=\textrm{LN}(\boldsymbol{x}\boldsymbol{W}_E)\boldsymbol{W}_{Q}^{1\textrm{,}h}$, $\boldsymbol{v}_t^{1\textrm{,}h}=\textrm{LN}(\boldsymbol{h}_t^{0})\boldsymbol{W}_{V}^{1\textrm{,}h}=\textrm{LN}(\boldsymbol{x}\boldsymbol{W}_E)\boldsymbol{W}_{V}^{1\textrm{,}h}$. Then all tokens have the same $\boldsymbol{k}_t^{1\textrm{,}h}$, $\boldsymbol{q}_t^{1\textrm{,}h}$, and $\boldsymbol{v}_t^{1\textrm{,}h}$. Then the attention output is
\begin{equation}
\boldsymbol{v}_t^{\dagger 1\textrm{,}\,h}=\sum_{i=1}^t\boldsymbol{A}_{ti}^{1\textrm{,}\,h}\boldsymbol{v}_i^{1\textrm{,}\,h}=\boldsymbol{v}_t^{1\textrm{,}\,h}
\end{equation}

Then $\boldsymbol{o}_t^1=\textrm{Concat}_{h=1}^H(\boldsymbol{v}_t^{1\textrm{,}\,h})\boldsymbol{W}_O^1$, we have the hidden states after the first block:
\begin{align}
    \boldsymbol{h}_t^1&=\textrm{FFN}(\textrm{LN}(\boldsymbol{o}_t^1+\boldsymbol{h}_t^0))+\boldsymbol{o}_t^1+\boldsymbol{h}_t^0\\
    &=\textrm{FFN}(\textrm{LN}(\textrm{Concat}_{h=1}^H(\boldsymbol{v}_t^{1\textrm{,}\,h})\boldsymbol{W}_O^1+\boldsymbol{h}_t^0))+\textrm{Concat}_{h=1}^H(\boldsymbol{v}_t^{1\textrm{,}\,h})\boldsymbol{W}_O^1+\boldsymbol{h}_t^0\\
    &=\textrm{FFN}(\textrm{LN}(\textrm{Concat}_{h=1}^H(\textrm{LN}(\boldsymbol{h}_t^{0})\boldsymbol{W}_{V}^{1\textrm{,}h})\boldsymbol{W}_O^1+\boldsymbol{h}_t^0))\\
    &+\textrm{Concat}_{h=1}^H(\textrm{LN}(\boldsymbol{h}_t^{0})\boldsymbol{W}_{V}^{1\textrm{,}h})\boldsymbol{W}_O^1+\boldsymbol{h}_t^0\textrm{.}
\end{align}

Since $\boldsymbol{h}_1^0=\boldsymbol{h}_2^0=\cdots=\boldsymbol{h}_T^0$, we have $\boldsymbol{h}_1^1=\boldsymbol{h}_2^1=\cdots=\boldsymbol{h}_T^1$ based on the above equation.  Using this induction, we could prove that
\begin{align}
    \boldsymbol{h}_t^l&=\textrm{FFN}(\textrm{LN}(\textrm{Concat}_{h=1}^H(\textrm{LN}(\boldsymbol{h}_t^{l-1})\boldsymbol{W}_{V}^{l\textrm{,}h})\boldsymbol{W}_O^l+\boldsymbol{h}_t^{l-1}))\\
    &+\textrm{Concat}_{h=1}^H(\textrm{LN}(\boldsymbol{h}_t^{l-1})\boldsymbol{W}_{V}^{l\textrm{,}h})\boldsymbol{W}_O^l+\boldsymbol{h}_t^{l-1}\textrm{,}\,\,\,\,\forall\,\,\, 1\leq l\leq L\textrm{.}
\end{align}

\begin{equation}
\boldsymbol{h}_1^l=\boldsymbol{h}_2^l=\cdots=\boldsymbol{h}_T^l\textrm{,}\,\,\,\,\forall\,\,\, 0\leq l\leq L\textrm{.}
\end{equation}

Typically, the hidden states of the first token $\boldsymbol{h}_1^l$ in specific blocks have massive activations. Due to the above equality, all the repeated tokens have massive activations. Furthermore, we could derive the closed form or upper bounds for attention scores under the repeated token sequence.

\begin{proposition}\label{prop1}
    For LMs with NoPE, the attention scores for $t$ repeated tokens are $t^{-1}$ uniformly, i.e., there is no attention sink.
\end{proposition}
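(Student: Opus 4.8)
The plan is to leverage the positional-invariance of hidden states already established in this appendix and combine it with the fact that NoPE leaves the query--key dot product unmodified. Recall that for a repeated input with $\boldsymbol{x}_t=\boldsymbol{x}$ we have shown $\boldsymbol{h}_1^l=\boldsymbol{h}_2^l=\cdots=\boldsymbol{h}_T^l$ for every $0\leq l\leq L$. Since the queries and keys in block $l$, head $h$ are linear (post-LN) functions of these hidden states, I would first conclude that they are themselves position-independent: there exist vectors $\boldsymbol{q}^{l,h}$ and $\boldsymbol{k}^{l,h}$ with $\boldsymbol{q}_t^{l,h}=\boldsymbol{q}^{l,h}$ and $\boldsymbol{k}_j^{l,h}=\boldsymbol{k}^{l,h}$ for all $t,j$.

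The crucial point specific to NoPE is that $\boldsymbol{P}=\boldsymbol{0}$ and the similarity is the plain dot product $\langle\boldsymbol{q}_i,\boldsymbol{k}_j\rangle=\boldsymbol{q}_i^T\boldsymbol{k}_j$, carrying no additive bias $g(i-j)$ and no rotation $\boldsymbol{R}_{\Theta,j-i}$. Therefore every pre-softmax logit in the $t$-th row collapses to a single constant independent of the key index: $\boldsymbol{q}_t^{l,h}{\boldsymbol{k}_j^{l,h}}^T/\sqrt{d_h}=\boldsymbol{q}^{l,h}{\boldsymbol{k}^{l,h}}^T/\sqrt{d_h}=:c^{l,h}$ for all $j\leq t$. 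I would then apply the causal softmax: because all $t$ admissible logits in row $t$ equal $c^{l,h}$, normalization gives $\boldsymbol{A}_{t,j}^{l,h}=\exp(c^{l,h})/\big(t\exp(c^{l,h})\big)=t^{-1}$ for every $j\leq t$, uniformly across all blocks and heads (consistent with $\boldsymbol{A}_{1,1}^{l,h}=1=1^{-1}$). The conclusion then follows immediately: since $\boldsymbol{A}_{t,1}^{l,h}=t^{-1}=\mathrm{mean}(\boldsymbol{A}_{t,j\neq1}^{l,h})$, the defining inequality $\boldsymbol{A}_{i,1}\gg\mathrm{mean}(\boldsymbol{A}_{i,j\neq1})$ for an attention sink fails, so no sink exists.

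There is essentially no computationally hard step here; the only thing to be careful about is emphasizing \emph{why} this exact uniform result is special to NoPE and does not extend verbatim to relative PE, ALiBi, or Rotary. For those encodings the position-dependent term $g(i-j)$ or $\boldsymbol{R}_{\Theta,j-i}$ survives even when the queries and keys coincide across positions, so the logits within a row are no longer constant and one can only bound, rather than equate, the attention scores. That distinction is exactly what motivates treating those three cases separately in the propositions that follow, where closed forms or upper bounds replace the clean $t^{-1}$ obtained here.
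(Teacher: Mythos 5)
Your proof is correct and follows essentially the same route as the paper's: both rely on the previously established equality of hidden states (hence of queries and keys) across positions for a repeated sequence, observe that NoPE leaves the logits as a single constant $\boldsymbol{q}^{l\textrm{,}h}{\boldsymbol{k}^{l\textrm{,}h}}^T$ independent of the key index, and conclude $\boldsymbol{A}_{t\textrm{,}\,i}^{l\textrm{,}\,h}=1/t$ by softmax normalization. Your added remark on why the argument does not transfer verbatim to relative PE, ALiBi, or Rotary is a fair gloss on why the paper treats those cases in separate propositions, but it is not a different method.
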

\begin{proof}
    We have that
    \begin{equation}
\boldsymbol{A}_{ti}^{l\textrm{,}\,h}=\frac{e^{\langle\boldsymbol{q}_t^{l\textrm{,}h}\textrm{,}\,\boldsymbol{k}_i^{l\textrm{,}h}\rangle}}{\sum_{j=1}^t e^{\langle\boldsymbol{q}_t^{l\textrm{,}h}\textrm{,}\,\boldsymbol{k}_j^{l\textrm{,}h}\rangle}}=\frac{e^{\boldsymbol{q}_t^{l\textrm{,}h}{\boldsymbol{k}_i^{l\textrm{,}h}}^\top}}{\sum_{j=1}^te^{\boldsymbol{q}_t^{l\textrm{,}h}{\boldsymbol{k}_j^{l\textrm{,}h}}^\top}}=\frac{e^{\boldsymbol{q}^{l\textrm{,}h}{\boldsymbol{k}^{l\textrm{,}h}}^\top}}{te^{\boldsymbol{q}^{l\textrm{,}h}{\boldsymbol{k}^{l\textrm{,}h}}^\top}}=\frac{1}{t}\textrm{.}
\end{equation}
Therefore, the attention scores follow a uniform distribution over all previous tokens.
\end{proof}

\begin{proposition}\label{prop2}
    For LMs with relative PE, there is no attention sink for $t$ repeated tokens.
\end{proposition}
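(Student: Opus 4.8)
The plan is to follow the template of Proposition~\ref{prop1}, now carrying along the distance bias $g$. The starting point is the induction already established for repeated inputs, namely $\boldsymbol{h}_1^l=\boldsymbol{h}_2^l=\cdots=\boldsymbol{h}_T^l$ for every $0\le l\le L$. First I would observe that equal hidden states feed the MHSA layer with position-independent queries and keys, $\boldsymbol{q}_t^{l,h}=\boldsymbol{q}^{l,h}$ and $\boldsymbol{k}_i^{l,h}=\boldsymbol{k}^{l,h}$, exactly as in the NoPE case. The only difference from Proposition~\ref{prop1} is that the logit now carries the relative term $\langle\boldsymbol{q}_t^{l,h},\boldsymbol{k}_i^{l,h}\rangle=\boldsymbol{q}_t^{l,h}{\boldsymbol{k}_i^{l,h}}^T+g(t-i)$.

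Next I would substitute the collapsed vectors into this logit. The dot product $\boldsymbol{q}^{l,h}{\boldsymbol{k}^{l,h}}^T$ is a single constant, independent of both $t$ and $i$, so it factors out of the exponential and cancels between numerator and denominator of the softmax, leaving the closed form
\begin{equation}
\boldsymbol{A}_{ti}^{l,h}=\frac{e^{g(t-i)}}{\sum_{j=1}^t e^{g(t-j)}}=\frac{e^{g(t-i)}}{\sum_{d=0}^{t-1}e^{g(d)}}\textrm{.}
\end{equation}
Thus the attention a query at position $t$ places on key $i$ is governed solely by the relative distance $t-i$ through the fixed, bounded, non-decreasing function $g$; the first token enjoys no position-specific privilege beyond sitting at the largest distance $t-1$ from the query. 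This is the direct analogue of the uniform law $\boldsymbol{A}_{ti}^{l,h}=1/t$ from Proposition~\ref{prop1}, merely reweighted by $g$.

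The last and hardest step is to convert this distance-only dependence into the quantitative claim that $\alpha_1^{l,h}=\tfrac1T\sum_{t=1}^T\boldsymbol{A}_{t1}^{l,h}$ stays below the sink threshold. Writing $S_t=\sum_{d=0}^{t-1}e^{g(d)}$ I would use the telescoping identity $\boldsymbol{A}_{t1}^{l,h}=1-S_{t-1}/S_t$ together with $1-x\le-\ln x$ to obtain $\sum_{t=1}^T\boldsymbol{A}_{t1}^{l,h}\le 1+\ln S_T$, hence $\alpha_1^{l,h}\le(1+\ln T+g_{\max})/T$ with $g_{\max}=g(T-1)$. The crucial input is the saturation of $g$: because the bucketing forces all sufficiently large distances to share a single maximal bias value, the many distant positions split the ``would-be sink'' evenly among themselves, so $\boldsymbol{A}_{t1}^{l,h}$ decays like one over the number of saturated-distance tokens for the bulk of queries. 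This dispersion across the identically (and massively) activated tokens is precisely why no concentrated sink survives. The main obstacle is that the crude logarithmic bound above is not by itself sharp enough to fall below $\epsilon=0.3$ at $T=64$; the rigorous conclusion needs the finer observation that the small, $O(1/t)$-scale contributions from large $t$ dominate the average over query positions, outweighing the handful of short-range queries for which the first token can still attract sizable attention.
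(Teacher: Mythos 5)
Your derivation is the paper's own proof: you use the same collapse $\boldsymbol{h}_1^l=\cdots=\boldsymbol{h}_T^l$ to cancel the constant $\boldsymbol{q}^{l,h}{\boldsymbol{k}^{l,h}}^T$ and arrive at the identical closed form $\boldsymbol{A}_{t,i}^{l,h}=e^{g(t-i)}/\sum_{j}e^{g(t-j)}$, and the same key observation that the first token is favored only through the saturating distance function $g$, so the maximal attention value is shared equally by every token at distance beyond the bucket cutoff $\mathcal{D}$ rather than concentrating on position~1. The quantitative coda bounding $\alpha_1^{l,h}$ goes beyond what the paper attempts (its proof stops at this qualitative observation), and as you yourself note your logarithmic bound does not get below the threshold $\epsilon$; that extra step is not needed for the proposition as the paper states and proves it.
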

\begin{proof}
    For LMs with relative PE, the dot product between each query and key is
\begin{equation}
\langle\boldsymbol{q}_t^{l\textrm{,}h}\textrm{,}\,\boldsymbol{k}_i^{l\textrm{,}h}\rangle=\boldsymbol{q}_t^{l\textrm{,}h}{\boldsymbol{k}_i^{l\textrm{,}h}}^\top + g_{\textrm{rel}}(t-i)=\boldsymbol{q}^{l\textrm{,}h}{\boldsymbol{k}^{l\textrm{,}h}}^\top + g_{\textrm{rel}}(t-i)\textrm{,}
\end{equation}
then we have the attention scores
\begin{equation}
\boldsymbol{A}_{t\textrm{,}\,i}^{l\textrm{,}\,h}=\frac{e^{\langle\boldsymbol{q}_t^{l\textrm{,}h}\textrm{,}\,\boldsymbol{k}_i^{l\textrm{,}h}\rangle}}{\sum_{j=1}^t e^{\langle\boldsymbol{q}_t^{l\textrm{,}h}\textrm{,}\,\boldsymbol{k}_j^{l\textrm{,}h}\rangle}}=\frac{e^{\boldsymbol{q}^{l\textrm{,}h}{\boldsymbol{k}^{l\textrm{,}h}}^\top+g_{\textrm{rel}}(t-i)}}{\sum_{j=1}^te^{\boldsymbol{q}^{l\textrm{,}h}{\boldsymbol{k}^{l\textrm{,}h}}^\top+g_{\textrm{rel}}(t-j)}}=\frac{e^{g_{\textrm{rel}}(t-i)}}{\sum_{j=1}^te^{g_{\textrm{rel}}(t-j)}}\textrm{.}
\end{equation}
Considering $g_{\textrm{rel}}(t-i)$ is a monotonic non-increasing function of $t-i$ and $g_{\textrm{rel}}(t-i)=\mathcal{B}-1$ when $t-i>\mathcal{D}$, then $\boldsymbol{A}_{t\textrm{,}\,1}^{l\textrm{,}\,h}=\boldsymbol{A}_{t\textrm{,}\,1}^{l\textrm{,}\,h}=\cdots=\boldsymbol{A}_{t\textrm{,}\,t-\mathcal{D}}^{l\textrm{,}\,h}$ are the largest values. Therefore, there is no attention sink on the first token.
\end{proof}

\begin{proposition}\label{prop3}
    For LMs with ALiBi, there is no attention sink for $t$ repeated tokens.
\end{proposition}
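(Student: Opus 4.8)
The plan is to mirror the structure of the proofs of Propositions~\ref{prop1} and~\ref{prop2}, exploiting the fact established earlier in this appendix that for repeated-token inputs under NoPE/relative PE/ALiBi/Rotary all token positions share identical hidden states $\boldsymbol{h}_1^l=\cdots=\boldsymbol{h}_T^l$, and consequently identical queries, keys, and values at every block and head. I would therefore write $\boldsymbol{q}_t^{l\textrm{,}h}=\boldsymbol{q}^{l\textrm{,}h}$ and $\boldsymbol{k}_i^{l\textrm{,}h}=\boldsymbol{k}^{l\textrm{,}h}$ for all positions, so that the only position-dependent quantity entering the attention logits is the ALiBi bias.

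First I would substitute the ALiBi dot product $\langle\boldsymbol{q}_t^{l\textrm{,}h}\textrm{,}\,\boldsymbol{k}_i^{l\textrm{,}h}\rangle=\boldsymbol{q}^{l\textrm{,}h}{\boldsymbol{k}^{l\textrm{,}h}}^T+g_{\textrm{alibi}}(t-i)$, with $g_{\textrm{alibi}}(t-i)=-(t-i)m$ and head-specific slope $m>0$, into the softmax. As in the two preceding propositions, the content term $\boldsymbol{q}^{l\textrm{,}h}{\boldsymbol{k}^{l\textrm{,}h}}^T$ is constant across $i$ and cancels between numerator and denominator, leaving
\begin{equation}
\boldsymbol{A}_{t\textrm{,}\,i}^{l\textrm{,}\,h}=\frac{e^{-(t-i)m}}{\sum_{j=1}^t e^{-(t-j)m}}\textrm{.}
\end{equation}

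Next I would read off monotonicity: since $m>0$, the exponent $-(t-i)m$ is strictly increasing in $i$, so $\boldsymbol{A}_{t\textrm{,}\,i}^{l\textrm{,}\,h}$ grows with $i$ and is maximized at the most recent token $i=t$ and minimized at the first token $i=1$. Hence the first token receives the \emph{smallest} attention weight, which immediately rules out an attention sink there. To make the decay explicit I would evaluate the denominator as a finite geometric series, $\sum_{j=1}^t e^{-(t-j)m}=\sum_{k=0}^{t-1}e^{-km}=(1-e^{-tm})/(1-e^{-m})$, yielding the closed form $\boldsymbol{A}_{t\textrm{,}\,1}^{l\textrm{,}\,h}=e^{-(t-1)m}(1-e^{-m})/(1-e^{-tm})$, which decays exponentially in $t$.

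I do not expect any genuine obstacle: once the repeated-token equality is invoked, the argument reduces to the same cancellation-plus-monotonicity calculation used for relative PE, and the sign $m>0$ guarantees the positional bias pushes attention toward recent rather than initial tokens. The only point requiring care is to correctly carry over the earlier induction so that $\boldsymbol{q}^{l\textrm{,}h}$ and $\boldsymbol{k}^{l\textrm{,}h}$ are genuinely position-independent; everything downstream is routine.
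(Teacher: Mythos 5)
Your proposal is correct and follows essentially the same route as the paper's proof: invoke the position-independence of queries and keys for repeated tokens, cancel the constant content term in the softmax, and conclude from the monotonic decrease of the ALiBi bias in $t-i$ that the first token gets the least attention. The explicit geometric-series closed form is a nice but inessential addition beyond what the paper records.
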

\begin{proof}
    For LMs with ALiBi, similar to relative PE, the dot product between each query and key is
\begin{equation}
\langle\boldsymbol{q}_t^{l\textrm{,}h}\textrm{,}\,\boldsymbol{k}_i^{l\textrm{,}h}\rangle=\boldsymbol{q}_t^{l\textrm{,}h}{\boldsymbol{k}_i^{l\textrm{,}h}}^\top + g_{\textrm{alibi}}^h(t-i)=\boldsymbol{q}^{l\textrm{,}h}{\boldsymbol{k}^{l\textrm{,}h}}^\top + g_{\textrm{alibi}}^h(t-i)\textrm{,}
\end{equation}
then we have the attention scores
\begin{equation}
\boldsymbol{A}_{t\textrm{,}\,i}^{l\textrm{,}\,h}=\frac{e^{\langle\boldsymbol{q}_t^{l\textrm{,}h}\textrm{,}\,\boldsymbol{k}_i^{l\textrm{,}h}\rangle}}{\sum_{j=1}^t e^{\langle\boldsymbol{q}_t^{l\textrm{,}h}\textrm{,}\,\boldsymbol{k}_j^{l\textrm{,}h}\rangle}}=\frac{e^{\boldsymbol{q}^{l\textrm{,}h}{\boldsymbol{k}^{l\textrm{,}h}}^\top+g_{\textrm{alibi}}^h(t-i)}}{\sum_{j=1}^te^{\boldsymbol{q}^{l\textrm{,}h}{\boldsymbol{k}^{l\textrm{,}h}}^\top+g_{\textrm{alibi}}^h(t-j)}}=\frac{e^{g_{\textrm{alibi}}^h(t-i)}}{\sum_{j=1}^te^{g_{\textrm{alibi}}^h(t-j)}}\textrm{.}
\end{equation}

Here $g_{\textrm{alibi}}^h(t-i)$ is monotonic decreasing function of $t-i$, so there is no attention sink on the first token.\looseness=-1 
\end{proof}

\begin{proposition}\label{prop4}
    For LMs with Rotary, there is no attention sink for $t$ repeated tokens when $t$ is large if $\|\boldsymbol{q}^{l\textrm{,}h}\|\cdot\|\boldsymbol{k}^{l\textrm{,}h}\|\leq \xi$ for a constant $\xi$.
\end{proposition}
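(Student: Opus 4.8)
The plan is to follow the template of Propositions~\ref{prop1}--\ref{prop3}: use the equality of hidden states across repeated tokens to collapse the queries and keys to single vectors, reduce the attention scores to an expression controlled only by the rotation matrices, and then exploit orthogonality to bound every logit uniformly. By the induction preceding Proposition~\ref{prop1}, for a sequence of $t$ identical tokens all positions share the same query, key, and value in every block, so I may write $\boldsymbol{q}_t^{l\textrm{,}h}=\boldsymbol{q}^{l\textrm{,}h}$ and $\boldsymbol{k}_i^{l\textrm{,}h}=\boldsymbol{k}^{l\textrm{,}h}$; moreover these vectors do not depend on the sequence length, since each attention output equals the common value vector whatever the weights, so the forward pass producing $\boldsymbol{q}^{l\textrm{,}h}\textrm{,}\,\boldsymbol{k}^{l\textrm{,}h}$ is unaffected by $t$. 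Substituting the Rotary dot product gives, for $1\le i\le t$,
\begin{equation}
\boldsymbol{A}_{t\textrm{,}\,i}^{l\textrm{,}\,h}=\frac{e^{\boldsymbol{q}^{l\textrm{,}h}\boldsymbol{R}_{\Theta\textrm{,}\,i-t}{\boldsymbol{k}^{l\textrm{,}h}}^T}}{\sum_{j=1}^t e^{\boldsymbol{q}^{l\textrm{,}h}\boldsymbol{R}_{\Theta\textrm{,}\,j-t}{\boldsymbol{k}^{l\textrm{,}h}}^T}}\textrm{.}
\end{equation}

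The key step is a uniform bound on the exponents. Because $\boldsymbol{R}_{\Theta\textrm{,}\,m}$ is orthogonal it preserves norms, so $\|\boldsymbol{q}^{l\textrm{,}h}\boldsymbol{R}_{\Theta\textrm{,}\,i-t}\|=\|\boldsymbol{q}^{l\textrm{,}h}\|$, and Cauchy--Schwarz together with the hypothesis yields
\begin{equation}
\left|\boldsymbol{q}^{l\textrm{,}h}\boldsymbol{R}_{\Theta\textrm{,}\,i-t}{\boldsymbol{k}^{l\textrm{,}h}}^T\right|\le\|\boldsymbol{q}^{l\textrm{,}h}\|\cdot\|\boldsymbol{k}^{l\textrm{,}h}\|\le\xi
\end{equation}
for every $i$. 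This is the one spot where Rotary parts ways with learnable PE: the rotation can change the \emph{direction} of the query--key interaction but never its magnitude, so all $t$ logits are trapped in the fixed band $[-\xi\textrm{,}\,\xi]$ regardless of position or sequence length.

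It then remains to feed this two-sided bound into the softmax. The numerator of $\boldsymbol{A}_{t\textrm{,}\,1}^{l\textrm{,}\,h}$ is at most $e^{\xi}$, while the denominator is a sum of $t$ terms each at least $e^{-\xi}$, so
\begin{equation}
\boldsymbol{A}_{t\textrm{,}\,1}^{l\textrm{,}\,h}\le\frac{e^{\xi}}{t\,e^{-\xi}}=\frac{e^{2\xi}}{t}\textrm{,}
\end{equation}
which tends to $0$ as $t\to\infty$; the same bound caps every $\boldsymbol{A}_{t\textrm{,}\,i}^{l\textrm{,}\,h}$, so no position can retain a share growing with $t$ and the sink cannot form. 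I expect the only subtle point to be the bounding step: recognizing that orthogonality forces the spread of the logits to be at most the constant $2\xi$ (rather than growing with $t$), which is exactly what distinguishes the bounded, sink-free Rotary case from the position-dependent embeddings that do sustain a sink.
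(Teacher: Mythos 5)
Your proposal is correct and follows essentially the same route as the paper's proof: both reduce the repeated-token case to a single query--key pair, bound every Rotary logit in the fixed band $[-\xi\textrm{,}\,\xi]$ (you via orthogonality plus Cauchy--Schwarz, the paper by writing the dot product as $\|\boldsymbol{q}^{l\textrm{,}h}\|\|\boldsymbol{k}^{l\textrm{,}h}\|\cos(\beta_{t-i})$ and using $|\cos|\leq 1$), and then conclude that each softmax score is $O(e^{2\xi}/t)\to 0$. Your bound $e^{2\xi}/t$ and the paper's $e^{2\xi}/(e^{2\xi}+(t-1))$ are interchangeable for this purpose.
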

\begin{proof}
    For LMs with Rotary, the dot product between each query and key is 
\begin{align}
\langle\boldsymbol{q}_t^{l\textrm{,}h}\textrm{,}\,\boldsymbol{k}_i^{l\textrm{,}h}\rangle&=\boldsymbol{q}_t^{l\textrm{,}h}\boldsymbol{R}_{\Theta\textrm{,}\,i-t}{\boldsymbol{k}_i^{l\textrm{,}h}}^\top\\
&=\boldsymbol{q}^{l\textrm{,}h}\boldsymbol{R}_{\Theta\textrm{,}\,i-t}{\boldsymbol{k}^{l\textrm{,}h}}^\top\\
&=\left\|\boldsymbol{q}^{l\textrm{,}h}\right\|\left\|\boldsymbol{k}^{l\textrm{,}h}\boldsymbol{R}_{\Theta\textrm{,}\,t-i}\right\|\textrm{cos}\left(\frac{\boldsymbol{q}^{l\textrm{,}h}\boldsymbol{R}_{\Theta\textrm{,}\,i-t}{\boldsymbol{k}^{l\textrm{,}h}}^\top}{\left\|\boldsymbol{q}^{l\textrm{,}h}\right\|\left\|\boldsymbol{k}^{l\textrm{,}h}\boldsymbol{R}_{\Theta\textrm{,}\,t-i}\right\|}\right)\\
&=\left\|\boldsymbol{q}^{l\textrm{,}h}\right\|\left\|\boldsymbol{k}^{l\textrm{,}h}\right\|\textrm{cos}(\beta_{t-i})\textrm{,}
\end{align}
where $\beta_{j-t}$ is the angle between the rotated query and the rotated key. Then the attention scores are
\begin{equation}
\boldsymbol{A}_{t\textrm{,}\,i}^{l\textrm{,}\,h}=\frac{e^{\langle\boldsymbol{q}_t^{l\textrm{,}h}\textrm{,}\,\boldsymbol{k}_i^{l\textrm{,}h}\rangle}}{\sum_{j=1}^t e^{\langle\boldsymbol{q}_t^{l\textrm{,}h}\textrm{,}\,\boldsymbol{k}_j^{l\textrm{,}h}\rangle}}=\frac{e^{\boldsymbol{q}^{l\textrm{,}h}\boldsymbol{R}_{\Theta\textrm{,}\,j-i}{\boldsymbol{k}^{l\textrm{,}h}}^\top}}{\sum_{j=1}^te^{\boldsymbol{q}^{l\textrm{,}h}\boldsymbol{R}_{\Theta\textrm{,}\,j-i}{\boldsymbol{k}^{l\textrm{,}h}}^\top}}=\frac{e^{\left\|\boldsymbol{q}^{l\textrm{,}h}\right\|\left\|\boldsymbol{k}^{l\textrm{,}h}\right\|\textrm{cos}(\beta_{t-i})}}{\sum_{j=1}^te^{\left\|\boldsymbol{q}^{l\textrm{,}h}\right\|\left\|\boldsymbol{k}^{l\textrm{,}h}\right\|\textrm{cos}(\beta_{t-j})}}\textrm{.}
\end{equation}
Suppose the norm of multiplication for query and key  $\left\|\boldsymbol{q}^{l\textrm{,}h}\right\|\left\|\boldsymbol{k}^{l\textrm{,}h}\right\|=\xi$. Considering $-1\leq\textrm{cos}(\beta_{t-j})\leq 1$, then we have
\begin{align}
\boldsymbol{A}_{t\textrm{,}\,i}^{l\textrm{,}\,h}&=\frac{e^{\xi\textrm{cos}(\beta_{t-i})}}{\sum_{j=1}^te^{\xi\textrm{cos}(\beta_{t-j})}}
=\frac{1}{1+\frac{\sum_{j\neq i}e^{\xi\textrm{cos}(\beta_{t-j})}}{e^{\xi\textrm{cos}(\beta_{t-i})}}}
\leq\frac{e^{2\xi}}{e^{2\xi}+(t-1)}
\end{align}

Then the attention scores for each token are upper-bounded and decrease to $0$ as $t$ grows.
\end{proof}

For LMs with absolute PE/learnable PE, the initial hidden states $\boldsymbol{h}_t^0=\boldsymbol{x}\boldsymbol{W}_E+\boldsymbol{p}_t$. Although the word embeddings are the same for repeated tokens, $\boldsymbol{p}_t$ for different token positions $t$ is different. Therefore, GPT2 models have no the above equality. From Table~\ref{input}(\emph{Left}), GPT2-XL still allocates significant attention to the first token even with repeated tokens, which motivates us to explore whether attention sink is related to these learned positional embedding vectors $\boldsymbol{p}_{1:T}$ after LM pre-training.

Therefore, we conduct two experiments on GPT2-XL. Firstly, we replace the first positional embedding vector $\boldsymbol{p}_1$ with other vectors $\boldsymbol{p}_{t\neq 1}$. In Table~\ref{learnable_pe}, we find that the amplitude of attention sink on the first token is significantly reduced. Then we also consider swapping the first positional embedding vector $\boldsymbol{p}_1$ with another position $\boldsymbol{p}_{t\neq 1}$. Consequently, the $t$-th token becomes the new sink token. Therefore, attention sink in GPT2-XL is strongly attached to the first positional embedding vector $\boldsymbol{p}_1$.

\begin{table}[htbp]
\caption{{In GPT2-XL, replacing or swapping the first positional embedding vector $\boldsymbol{p}_1$ with another position $\boldsymbol{p}_{t\neq 1}$ significantly impact the amplitude and position of attention sink.}}
\vspace{-.2cm}
\begin{center}
\begin{tabular}{l|ccccccc}
\toprule
Replaced position $t$ & no & 5 & 10 & 15 & 20 & 25 & 25 \\
\midrule
$\textrm{Sink}_1^{\epsilon}$(\%) & 62.28 & \,\,\,0.20 & \,\,\,2.36 & \,\,\,7.73 & 10.63 & 10.97 & 10.21 \\
$\textrm{Sink}_t^{\epsilon}$(\%) & - &  \,\,\,0.00 & \,\,\,0.00 & \,\,\,0.00 & \,\,\,0.00 & \,\,\,0.00 & \,\,\,0.00\\
\midrule
Swapped position $t$ & no & 5 & 10 & 15 & 20 & 25 & 25 \\
\midrule
$\textrm{Sink}_1^{\epsilon}$(\%) & 62.28 & \,\,\,1.44 & \,\,\,3.73 & \,\,\,6.78 & \,\,\,8.95 & \,\,\,9.42 & \,\,\,9.73 \\
$\textrm{Sink}_t^{\epsilon}$(\%) &  - & 57.63 & 54.48 & 52.81 & 51.70 & 51.13 & 50.22 \\
\bottomrule
\end{tabular}
\end{center}
\vspace{-.2cm}
\label{learnable_pe}
\end{table}


\subsection{Attention sink under different data domains}\label{vis_domains}

There are 17 available data domains in the Pile dataset~\citep{gao2020pile}, including Pile-CC, PubMed Central, ArXiv, Github, FreeLaw, Stack Exchange, USPTO Backgrounds, Pubmed Abstracts, Gutenberg (PG-19), Wikipedia (en), DM Mathematics, Ubuntu IRC, EuroParl, HackerNews, PhilPapers, NIH ExPorter, and Enron Emails. We sample 100 data from each domain and then evaluate the attention sink metric for GPT2-XL/Mistral-7B/LLaMA2-7B Base/LLaMA3-8B base. As shown in Figure~\ref{all_domains}, the evaluated attention sink metrics $\textrm{Sink}_1^{\epsilon}$ are similar across different domains when $\epsilon=\textrm{0.2}$ and $\epsilon=\textrm{0.3}$. Small fluctuations appear when $\epsilon=\textrm{0.4}$.

\begin{figure}[htbp]
\centering
\includegraphics[width=\textwidth]{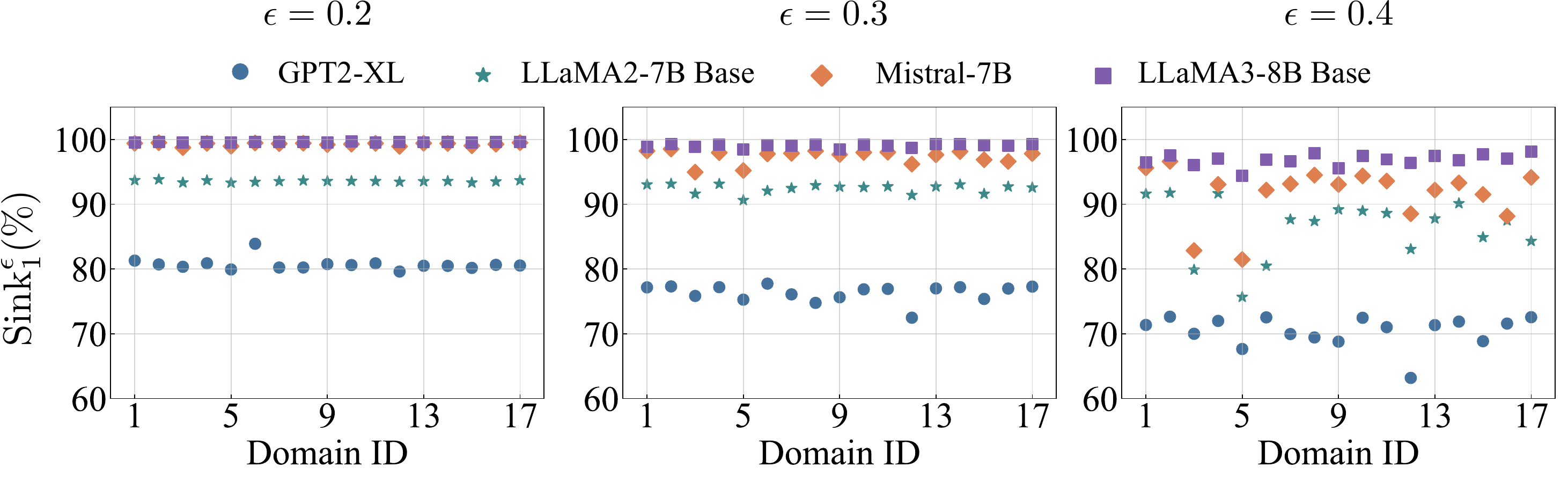}
\vspace{-0.2cm}
\caption{Input domains have negligible effects on attention sink metric $\textrm{Sink}_1^{\epsilon}$ for (\emph{left}) $\epsilon=\textrm{0.2}$ and (\emph{middle}) $\epsilon=\textrm{0.3}$. There are small fluctuations for (\emph{right}) $\epsilon=\textrm{0.4}$.}
\vspace{-0.2cm}
\label{all_domains}
\end{figure}

\subsection{Attention sink under different pre-trained LMs}\label{vis_norm}

\textbf{Relation to LM performance.} We leverage the platform~\citep{eval-harness} to evaluate the performance of open-sourced LMs, including LLaMA2/LLaMA3/OPT/Pythia/GPT2 families, on downstream LM task, e.g., HellaSwag~\citep{zellers2019hellaswag}. The results are visualized parallel with our attention sink metric in Figure~\ref{lm_performance}, including both accuracy (Acc) and accuracy under normalization (Acc\_Norm). We find that within the same LM family, with the increase of model scale, both attention sink amplitude and downstream LM performance are increasing. However, across different LM families, stronger attention sink does not always correlate with better performance. For instance, the OPT family has stronger attention sink than Pythia under the comparable model scale. However, the downstream LM performance is comparable. 

\begin{figure}[htbp]
\centering
\includegraphics[width=\textwidth]{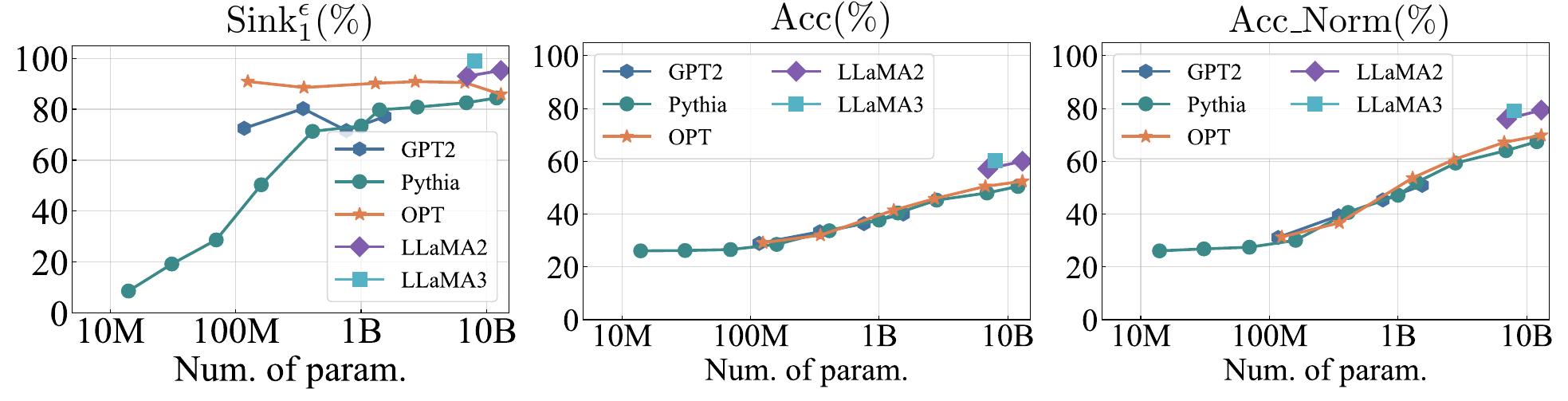}
\vspace{-0.2cm}
\caption{Attention sink and downstream performance for various pre-trained LMs.}
\vspace{-0.2cm}
\label{lm_performance}
\end{figure}

\textbf{$\ell_2$-norm.} We first show that large $\ell_2$-norm of hidden states $\boldsymbol{h}_1^l$ and small $\ell_2$-norm of keys $\boldsymbol{k}_1^{l\textrm{,}\,h}$ and values $\boldsymbol{v}_1^{l\textrm{,}\,h}$ (especially for values) universally exist in open-sourced LMs, including LLaMA2-7B Base (Figure~\ref{appendix_l2_norm_llama2_7b}), GPT2-Large (Figure~\ref{appendix_l2_norm_gpt2_large}), Mistral-7B (Figure~\ref{appendix_l2_norm_mistral7b}), and Pythia-1B (Figure~\ref{appendix_l2_norm_pythia_1b}). It is noted that for the final transformer block $l=L$, we take the hidden states before LN. We note that different LMs may have different starting blocks where massive activations appear.\looseness=-1

\begin{figure}[t]
\centering
\includegraphics[width=\textwidth]{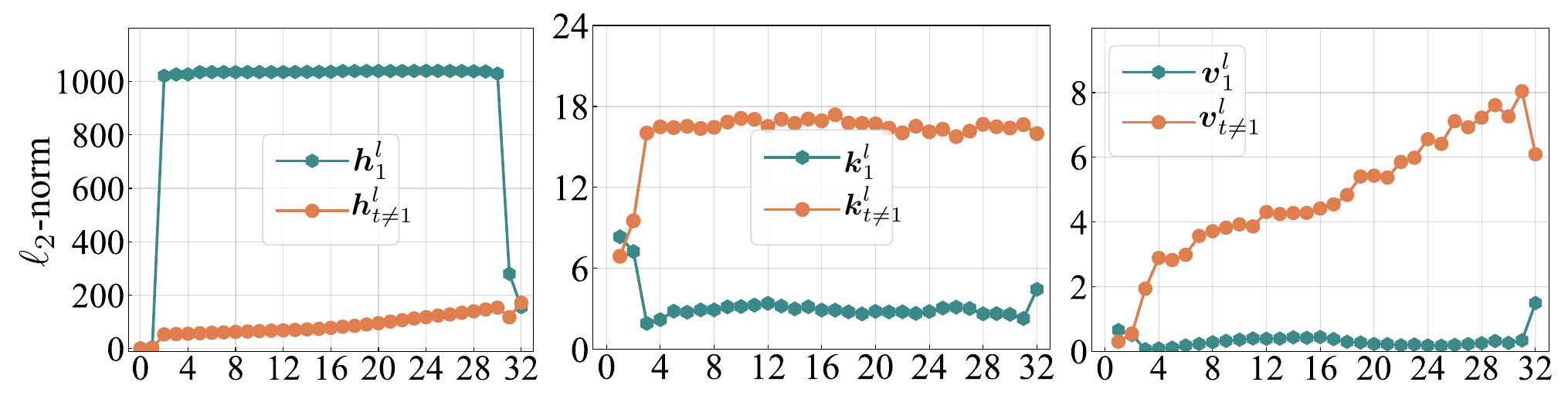}
\caption{$\ell_2$-norm of hidden states/keys/values of the first token/other tokens in LLaMA2-7B Base.\looseness=-1}
\label{appendix_l2_norm_llama2_7b}
\end{figure}

\begin{figure}[t]
\centering
\includegraphics[width=\textwidth]{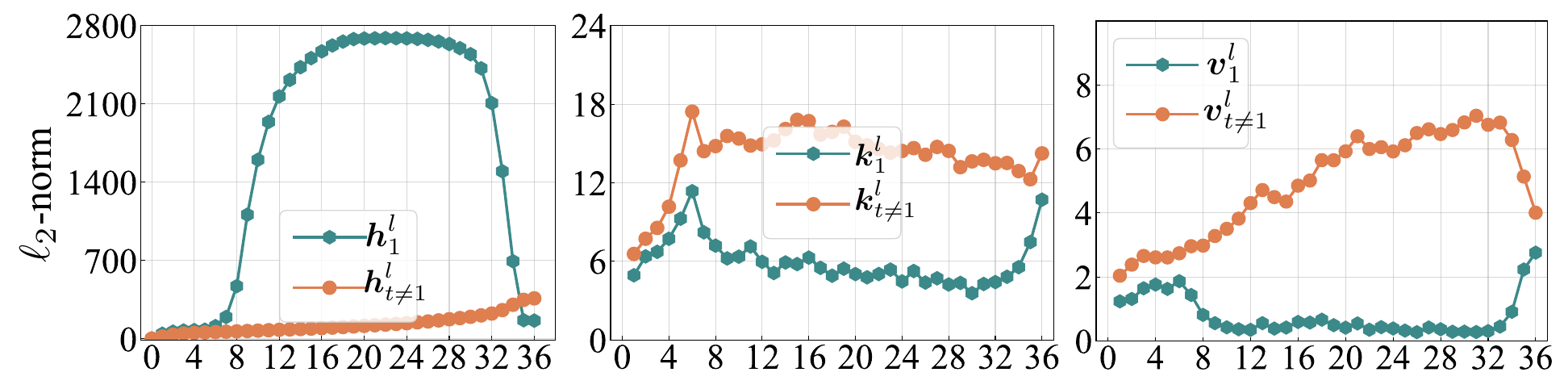}
\caption{$\ell_2$-norm of hidden states/keys/values of the first token/other tokens in GPT2-Large.\looseness=-1}
\label{appendix_l2_norm_gpt2_large}
\end{figure}

\begin{figure}[t]
\centering
\includegraphics[width=\textwidth]{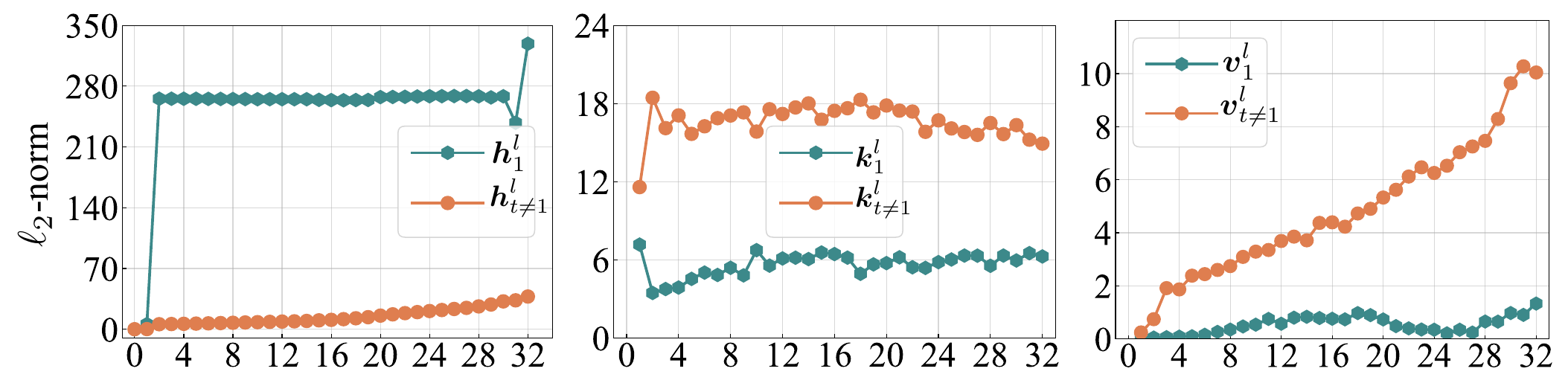}
\caption{$\ell_2$-norm of hidden states/keys/values of the first token/other tokens in Mistral-7B.\looseness=-1}
\label{appendix_l2_norm_mistral7b}
\end{figure}

\begin{figure}[t]
\centering
\includegraphics[width=\textwidth]{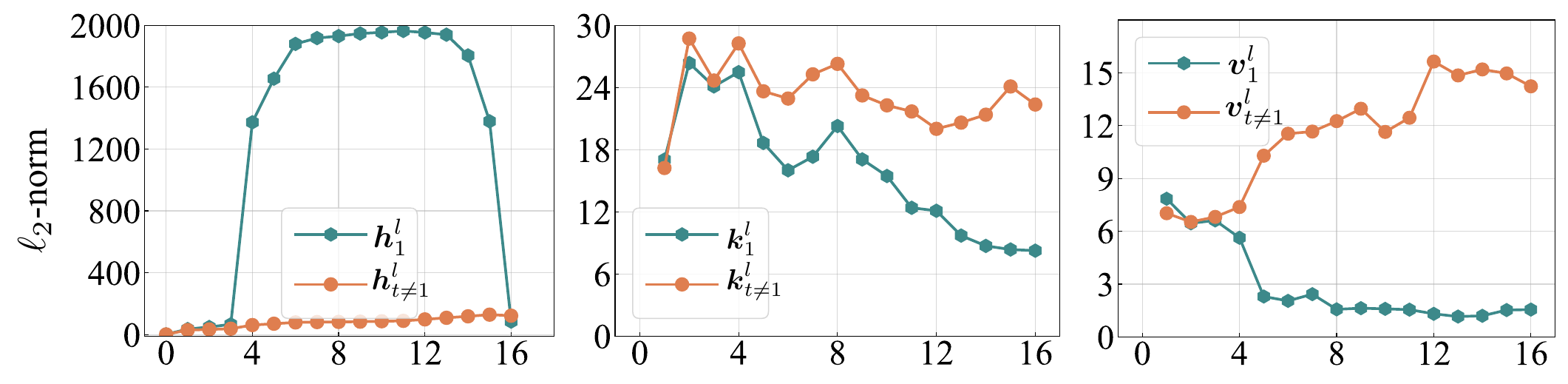}
\caption{$\ell_2$-norm of hidden states/keys/values of the first token/other tokens in Pythia-1B.\looseness=-1}
\label{appendix_l2_norm_pythia_1b}
\end{figure}


\clearpage
\textbf{QK angles.} Then we further demonstrate that QK angles contribute to attention sink through more visualizations, including LLaMA3-8B Base (Figure~\ref{appendix_qk_angle_llama3_8b}), LLaMA2-7B Base (Figure~\ref{appendix_qk_angle_llama2_7b}), Mistral-7B (Figure~\ref{appendix_qk_angle_mistral_7b}), and GPT2-Large (Figure~\ref{appendix_qk_angle_gpt2_large}).

\begin{figure}[t]
\centering
\includegraphics[width=\textwidth]{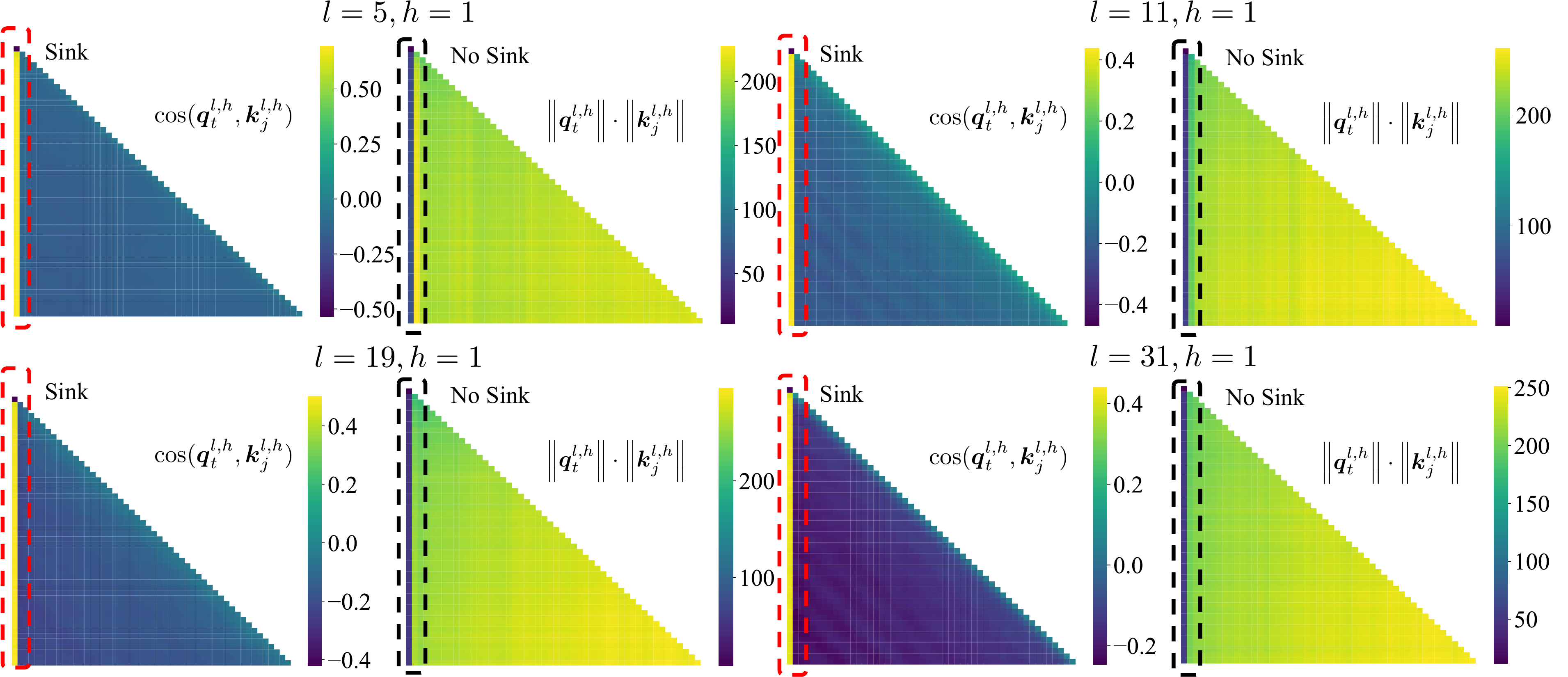}
\vspace{-0.2cm}
\caption{Cosine similarity and $\ell_2$-norm product between keys and queries in LLaMA3-8B Base.\looseness=-1}
\vspace{-0.2cm}
\label{appendix_qk_angle_llama3_8b}
\end{figure}

\begin{figure}[t]
\centering
\includegraphics[width=\textwidth]{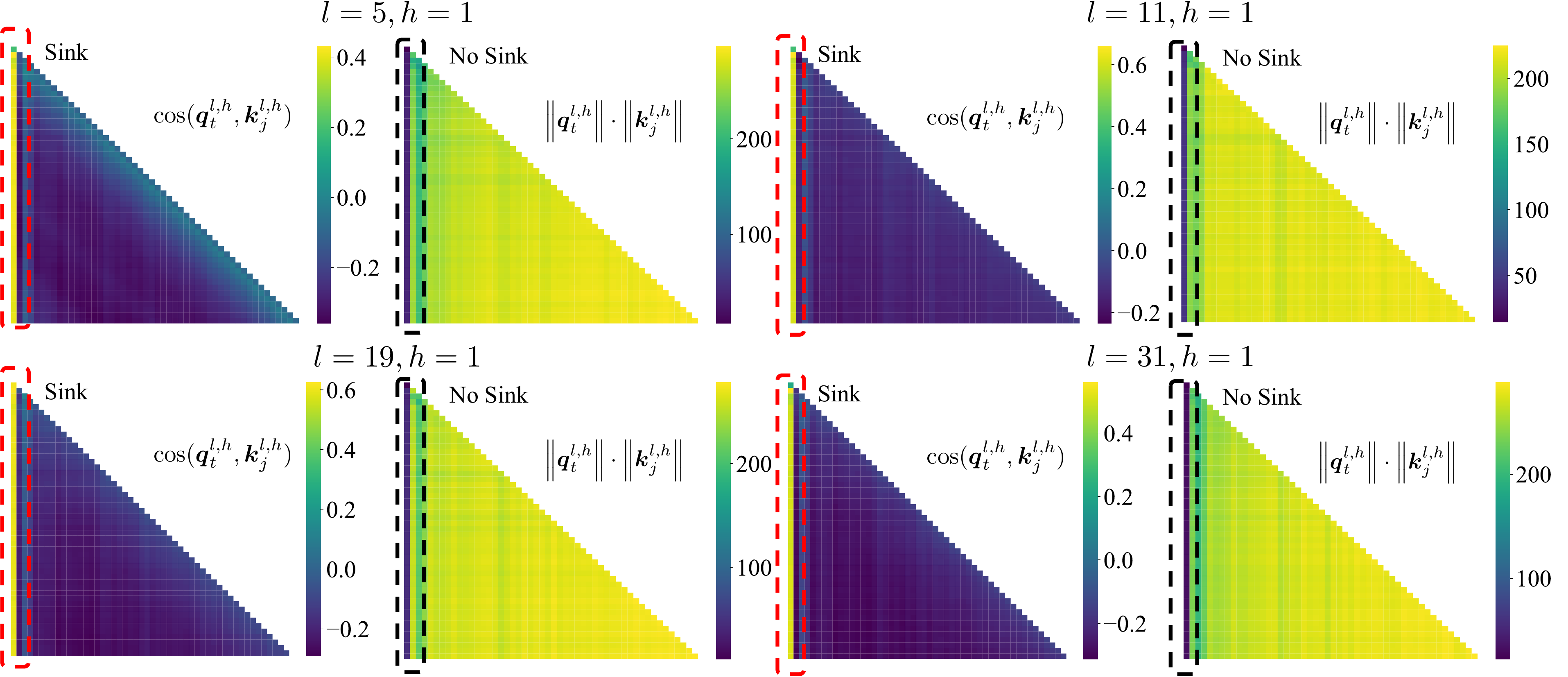}
\vspace{-0.2cm}
\caption{Cosine similarity and $\ell_2$-norm product between keys and queries in LLaMA2-7B Base.\looseness=-1}
\vspace{-0.2cm}
\label{appendix_qk_angle_llama2_7b}
\end{figure}

\begin{figure}[htbp]
\centering
\includegraphics[width=\textwidth]{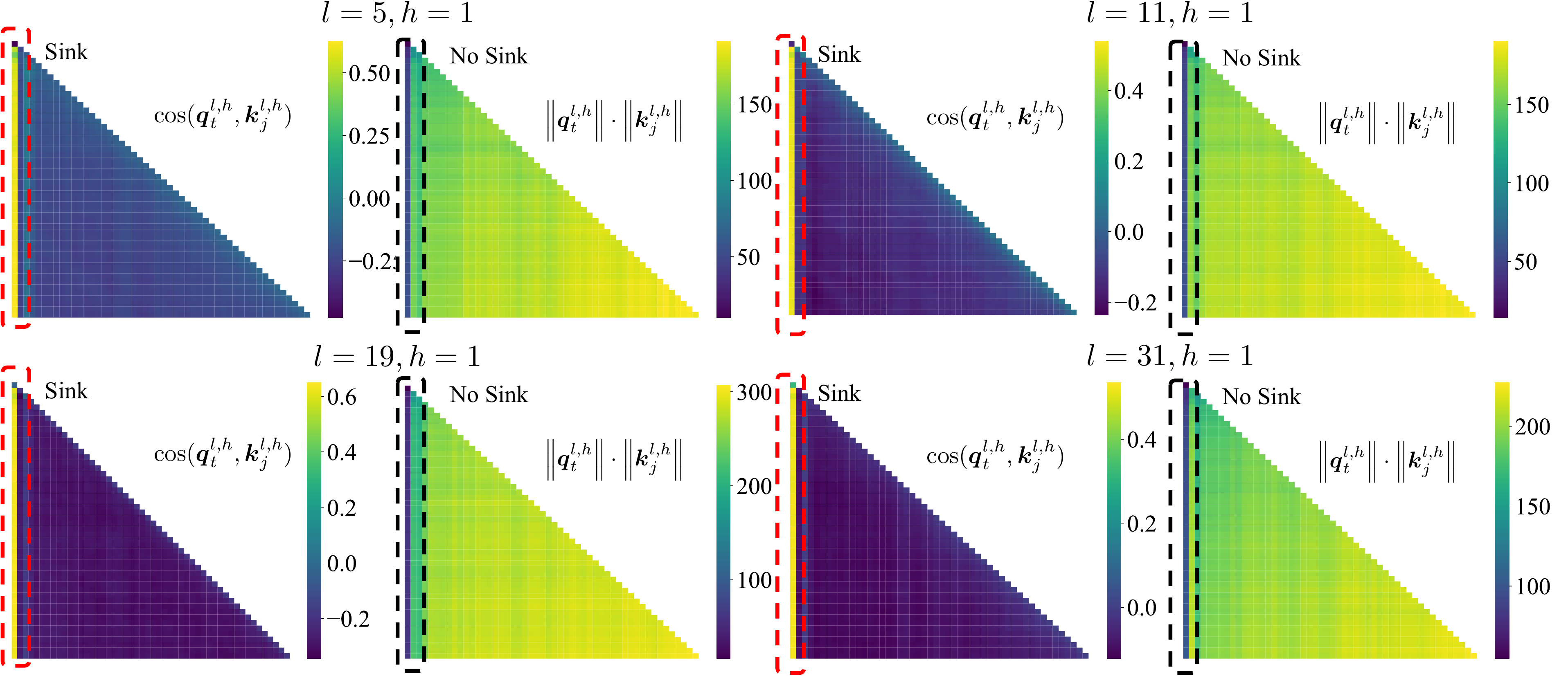}
\vspace{-0.2cm}
\caption{Cosine similarity and $\ell_2$-norm product between keys and queries in Mistral-7B.\looseness=-1}
\vspace{-0.2cm}
\label{appendix_qk_angle_mistral_7b}
\end{figure}

\begin{figure}[htbp]
\centering
\includegraphics[width=\textwidth]{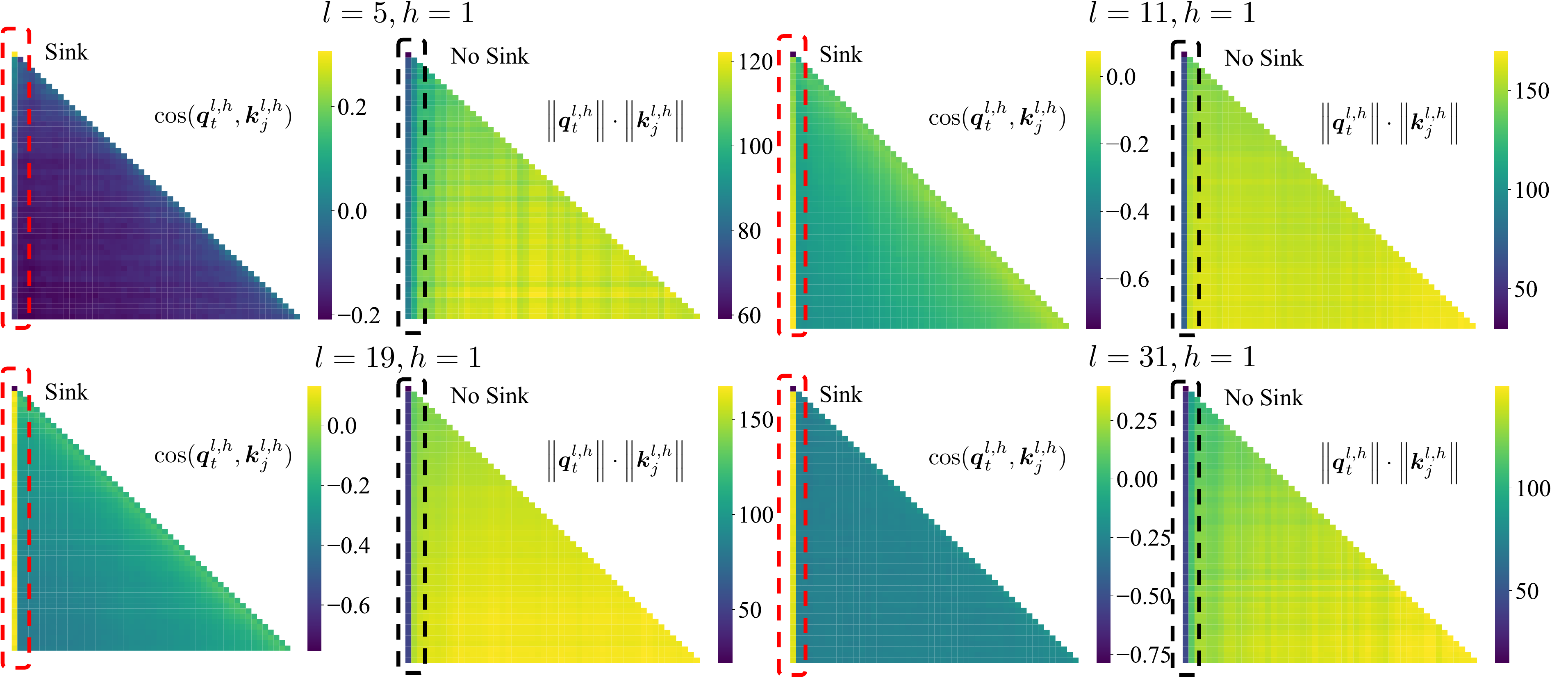}
\vspace{-0.2cm}
\caption{Cosine similarity and $\ell_2$-norm product between keys and queries in GPT2-Large.\looseness=-1}
\vspace{-0.2cm}
\label{appendix_qk_angle_gpt2_large}
\end{figure}

\textbf{Block-wise and head-wise property.} In the main paper, we mainly discuss the ratio of heads that have attention sink in the definition of attention sink metric $\textrm{Sink}_k^{\epsilon}$. Here we visualize the locations of these attention sink heads in open-sourced LMs, including LLaMA2 family (Figure~\ref{appendix_layer_head_llama2}), LLaMA3/LLaMA3.1 family (Figure~\ref{appendix_layer_head_llama3}), Mistral family (Figure~\ref{appendix_layer_head_mistral}), GPT2 family (Figure~\ref{appendix_layer_head_gpt2}), Pythia family (Figure~\ref{appendix_layer_head_pythia}), and OPT family (Figure~\ref{appendix_layer_head_opt}). We visualize the distributions of importance scores for the first token $\alpha_1^{l\textrm{,}\,h}$ across different transformer blocks $1 \leq l \leq L$ and different heads $1 \leq h \leq H$ before computing the attention sink metric. We find that (1) different pre-trained LMs have various attention sink distributions but they tend to have less obvious attention sink in earlier transformer blocks; (2) instruction tuning does not significantly modify such attention sink distributions when comparing base versions and chat/instruct versions.

\begin{figure}[htbp]
\centering
\includegraphics[width=\textwidth]{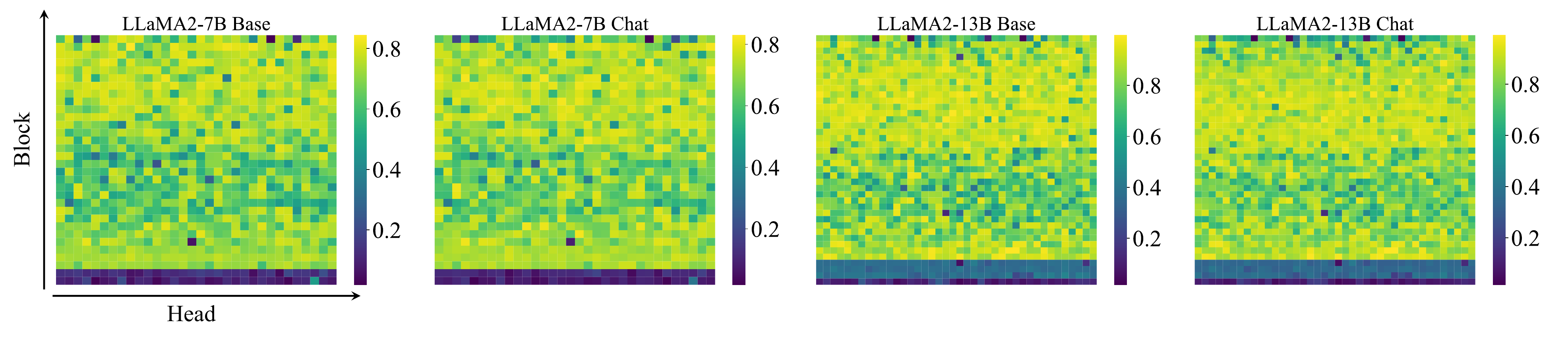}
\caption{Distribution of importance scores for the first token across different blocks and heads in the LLaMA2 family.\looseness=-1}
\label{appendix_layer_head_llama2}
\end{figure}

\begin{figure}[htbp]
\centering
\includegraphics[width=\textwidth]{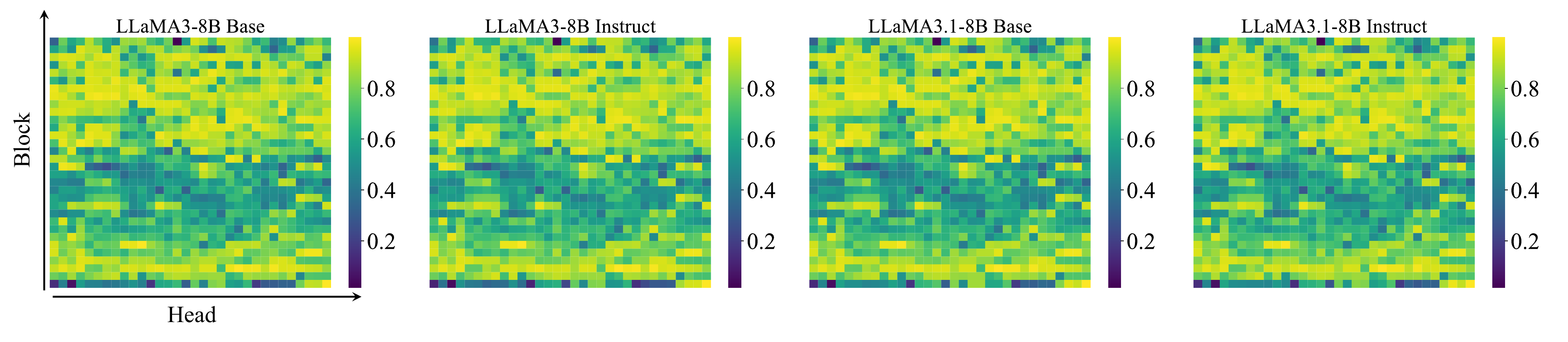}
\caption{Distribution of importance scores for the first token across blocks and heads in the LLaMA3/LLaMA3.1 family.\looseness=-1}
\label{appendix_layer_head_llama3}
\end{figure}

\begin{figure}[htbp]
\centering
\includegraphics[width=0.5\textwidth]{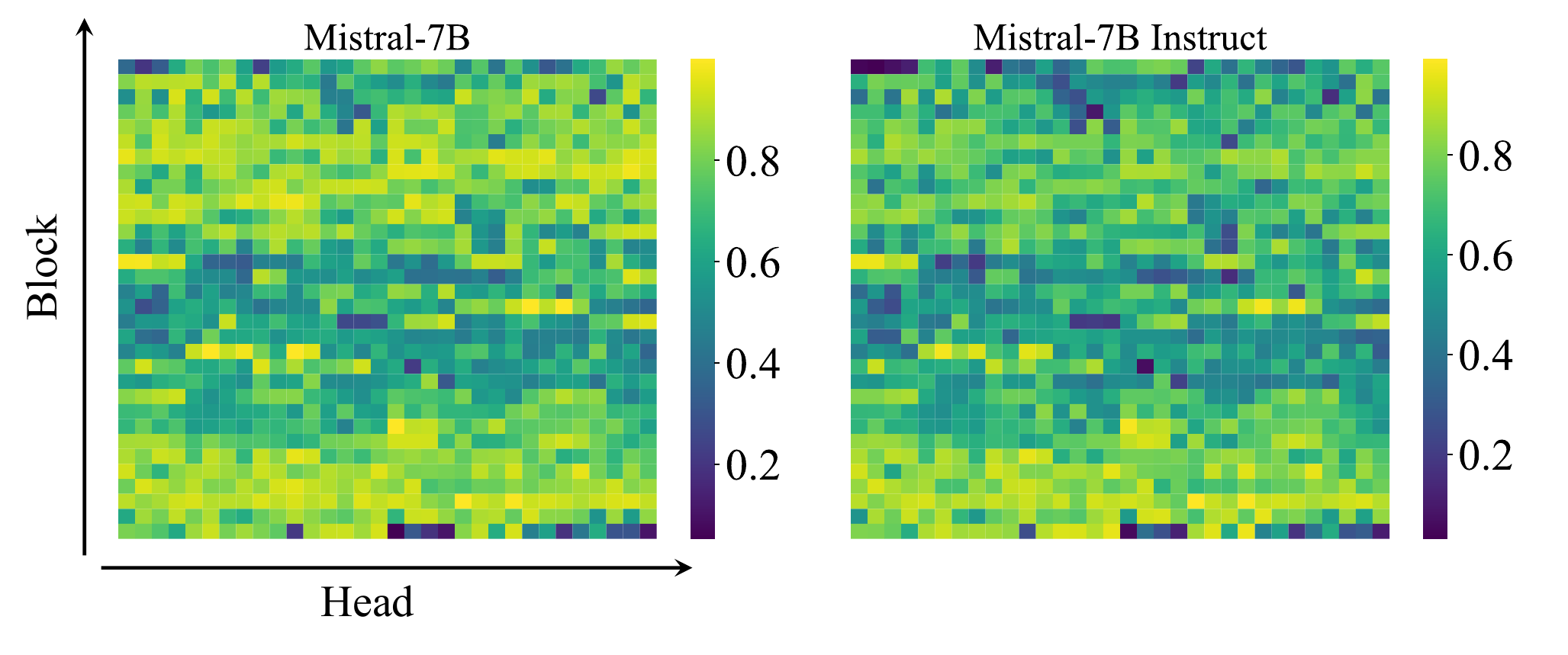}
\caption{Distribution of importance scores for the first token across blocks and heads in the Mistral family.\looseness=-1}
\label{appendix_layer_head_mistral}
\end{figure}

\begin{figure}[htbp]
\centering
\includegraphics[width=\textwidth]{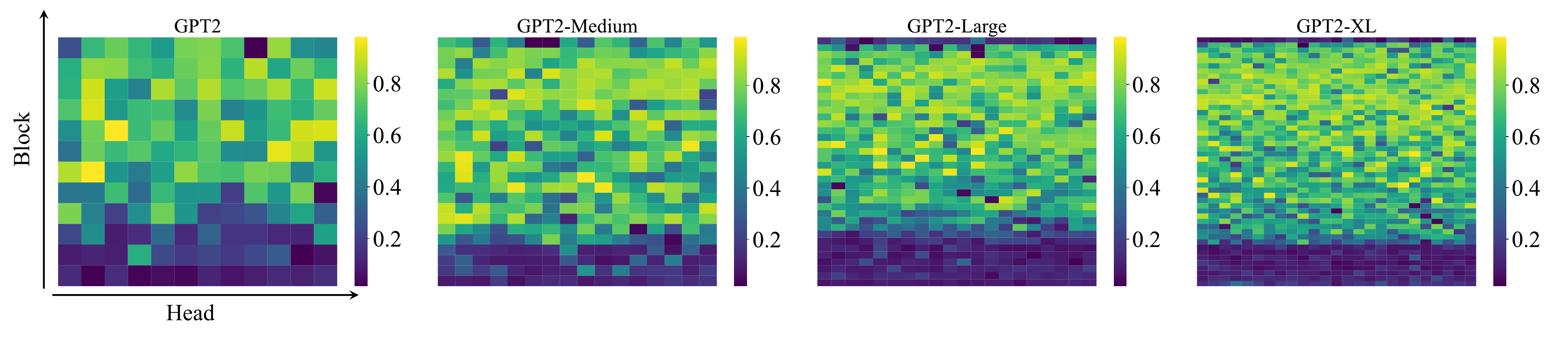}
\caption{Distribution of importance scores for the first token across blocks and heads in the GPT2 family.\looseness=-1}
\label{appendix_layer_head_gpt2}
\end{figure}

\begin{figure}[htbp]
\centering
\includegraphics[width=\textwidth]{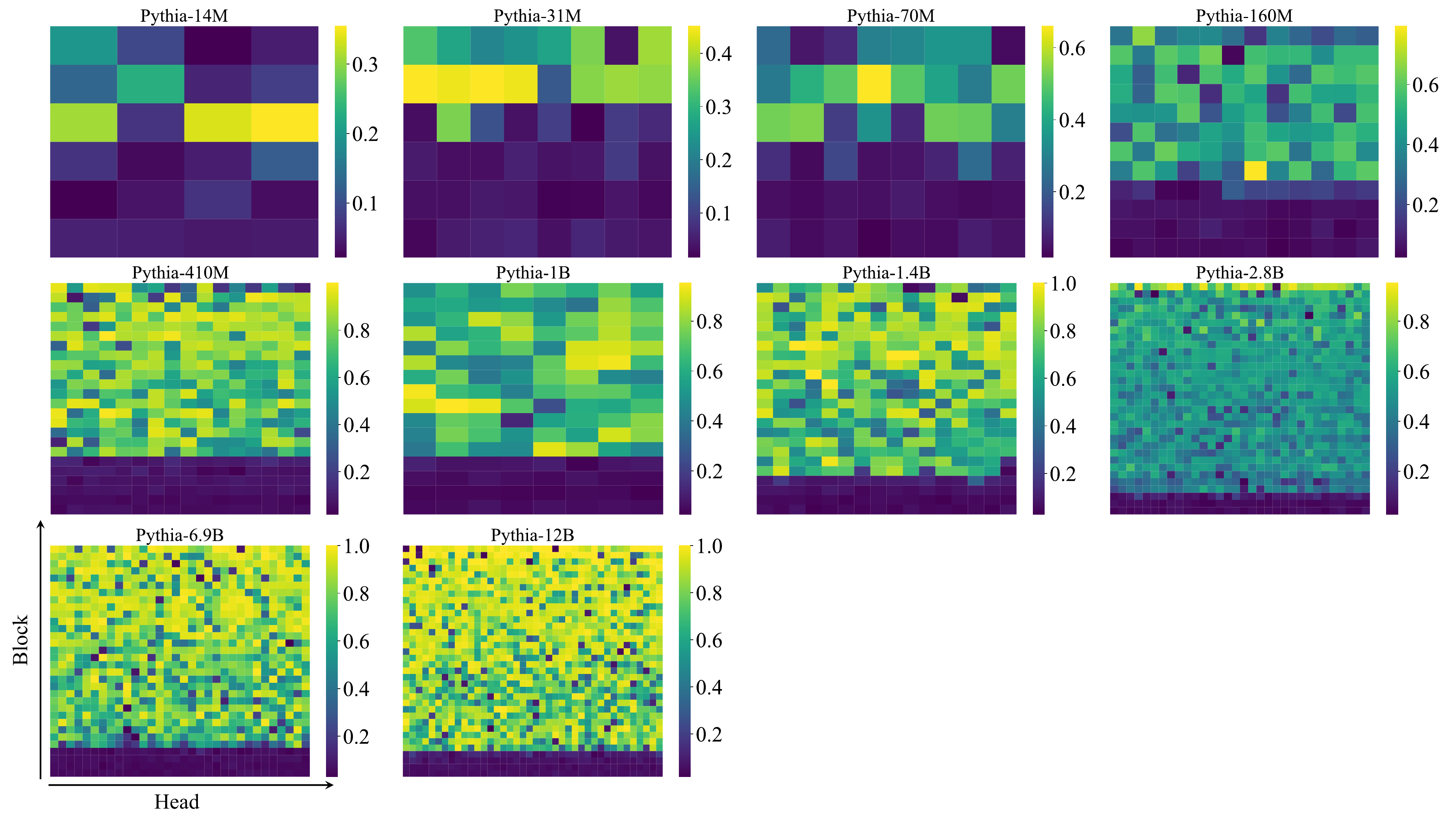}
\caption{Distribution of importance scores for the first token across blocks and heads in the Pythia family.\looseness=-1}
\label{appendix_layer_head_pythia}
\end{figure}

\clearpage

\textbf{Jamba.} Besides the auto-regressive Transformers, we also consider Jamba~\citep{lieber2024jamba,team2024jamba}, a new foundation language model. Both Jamba-v0.1~\citep{lieber2024jamba} and Jamba-1.5 Mini~\citep{team2024jamba} have 4 Jamba blocks, each of which includes 3 Mamba layers~\citep{gu2023mamba}, 4 Mamba MoE layers~\citep{shazeer2017outrageously, fedus2022switch}, and 1 Transformer layer. This adds to 32 layers (including 4 Transformer attention layers), 52B available parameters, and 12B active parameters in total. Firstly, we evaluate the attention sink metric and find that $\textrm{Sink}_1^{\epsilon}=\textrm{88.48}\%$ for Jamba-v0.1 and $\textrm{Sink}_1^{\epsilon}=\textrm{87.88}\%$ for Jamba-1.5 Mini, which indicates a strong attention sink on the first token. Then we visualize attention scores in several heads, as shown in Figure~\ref{appendix_sink_jamba_v0.1} and Figure~\ref{appendix_sink_jamba_1.5_mini}.  We also visualize the distribution of importance scores for the first token across blocks and heads in Jamba models in Figure~\ref{appendix_layer_head_jamba}. We observe that most heads have obvious attention sink, except for several heads in the 3rd Transformer layer.

\begin{figure}[t]
\centering
\includegraphics[width=\textwidth]{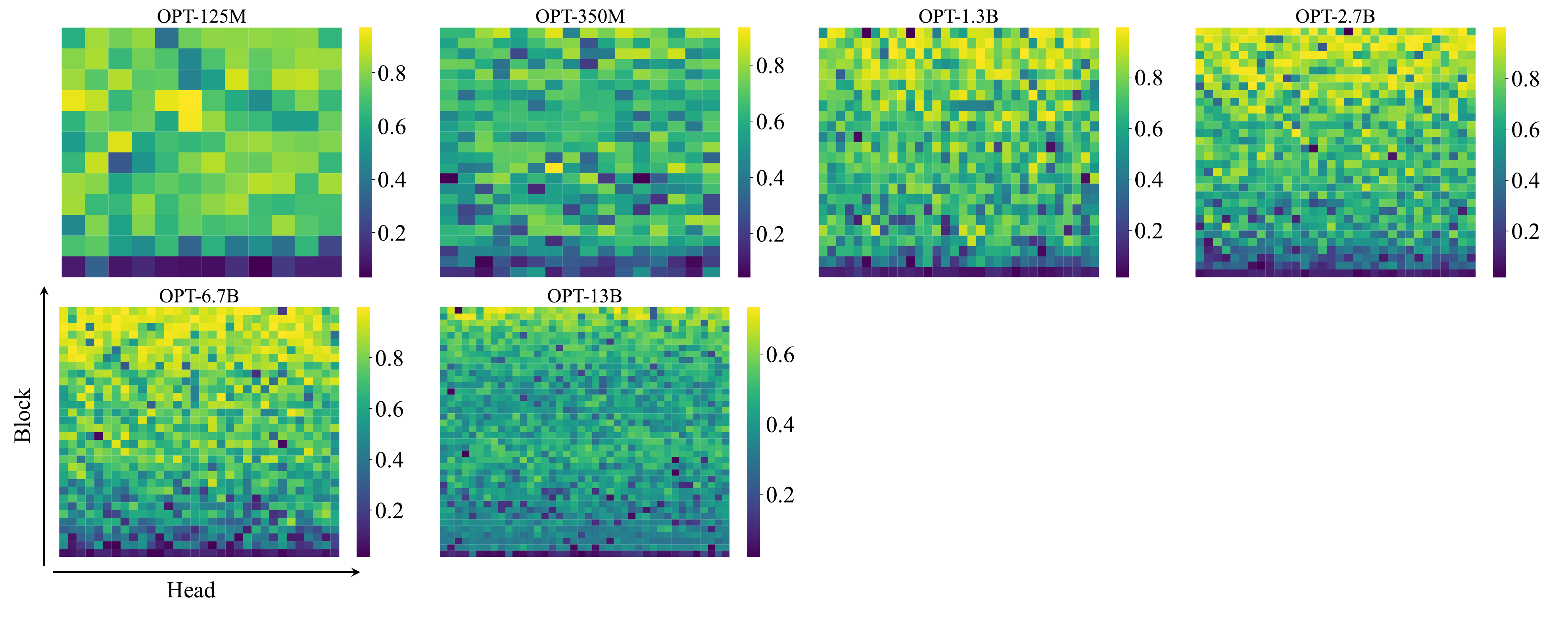}
\caption{Distribution of importance scores for the first token across blocks and heads in the OPT family.\looseness=-1}
\label{appendix_layer_head_opt}
\end{figure}

\begin{figure}[htbp]
\centering
\includegraphics[width=\textwidth]{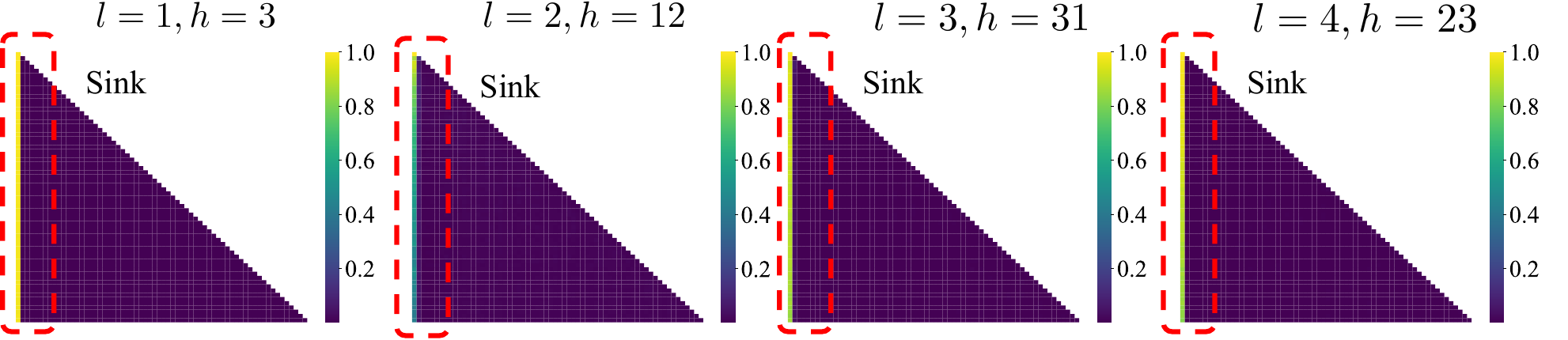}
\caption{Attention sink in Jamba-v0.1.\looseness=-1}
\label{appendix_sink_jamba_v0.1}
\end{figure}

\begin{figure}[htbp]
\centering
\includegraphics[width=\textwidth]{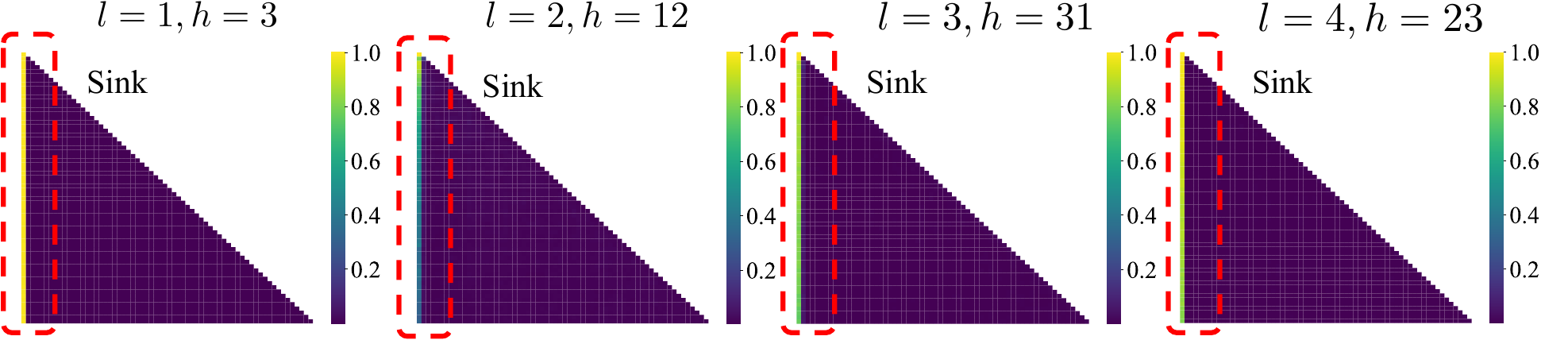}
\caption{Attention sink in Jamba-1.5 Mini.\looseness=-1}
\label{appendix_sink_jamba_1.5_mini}
\end{figure}

\clearpage

\begin{figure}[htbp]
\centering
\includegraphics[width=\textwidth]{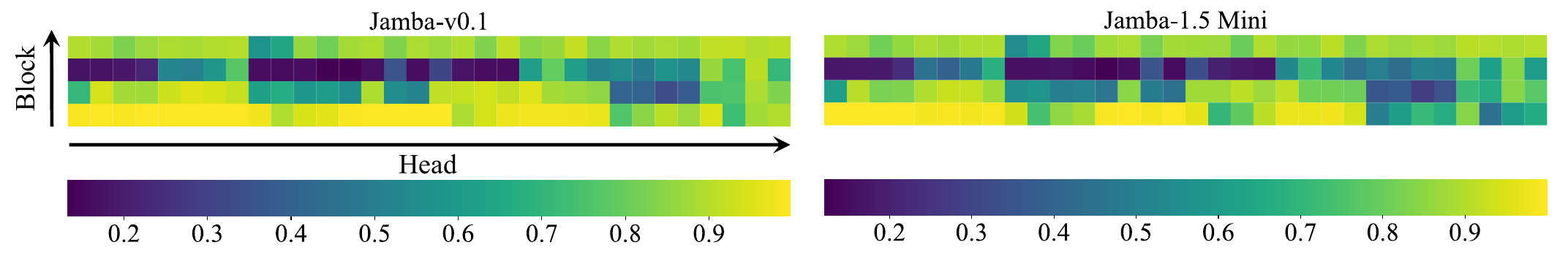}
\caption{Distribution of importance scores for the first token across blocks and heads in the Jamba models.\looseness=-1}
\label{appendix_layer_head_jamba}
\end{figure}

\subsection{Huggingface links for open-sourced LMs}


\begin{table}[htbp]
\vspace{-.2cm}
\caption{Huggingface links for open-sourced LMs we used in this paper.\looseness=-1}
\vspace{-.3cm}
\begin{center}
\begin{tabular}{l|l}
\toprule
Model & Huggingface link \\
\midrule
LLaMA2-7B Base & \href{https://huggingface.co/meta-llama/Llama-2-7b-hf}{meta-llama/Llama-2-7b-hf} \\
LLaMA2-7B Chat & \href{https://huggingface.co/meta-llama/Llama-2-7b-chat-hf}{{meta-llama/Llama-2-7b-chat-hf}} \\
LLaMA2-13B Base & \href{https://huggingface.co/meta-llama/Llama-2-13b-hf}{meta-llama/Llama-2-13b-hf} \\
LLaMA2-13B Chat & \href{https://huggingface.co/meta-llama/Llama-2-13b-chat-hf}{meta-llama/Llama-2-13b-chat-hf} \\
LLaMA3-8B Base & \href{https://huggingface.co/meta-llama/Meta-Llama-3-8B}{meta-llama/Meta-Llama-3-8B} \\
LLaMA3-8B Instruct & \href{https://huggingface.co/meta-llama/Meta-Llama-3-8B-Instruct}{meta-llama/Meta-Llama-3-8B-Instruct} \\
LLaMA3.1-8B Base & \href{https://huggingface.co/meta-llama/Meta-Llama-3.1-8B}{meta-llama/Meta-Llama-3.1-8B} \\
LLaMA3.1-8B Instruct & \href{https://huggingface.co/meta-llama/Meta-Llama-3.1-8B-Instruct}{meta-llama/Meta-Llama-3.1-8B-Instruct} \\
\midrule
GPT2 & \href{https://huggingface.co/openai-community/gpt2}{openai-community/gpt2}\\
GPT2-Medium &\href{https://huggingface.co/openai-community/gpt2-medium}{openai-community/gpt2-medium} \\
GPT2-Large & \href{https://huggingface.co/openai-community/gpt2-large}{openai-community/gpt2-large}\\
GPT2-XL & \href{https://huggingface.co/openai-community/gpt2-xl}{openai-community/gpt2-xl}\\
\midrule
Mistral-7B & \href{https://huggingface.co/mistralai/Mistral-7B-v0.1}{mistralai/Mistral-7B-v0.1}\\
Mistral-7B Instruct & \href{https://huggingface.co/mistralai/Mistral-7B-Instruct-v0.1}{mistralai/Mistral-7B-Instruct-v0.1} \\
\midrule
Pythia-14M & \href{https://huggingface.co/EleutherAI/pythia-14m}{EleutherAI/pythia-14m} \\
Pythia-31M & \href{https://huggingface.co/EleutherAI/pythia-31m}{EleutherAI/pythia-31m} \\
Pythia-70M & \href{https://huggingface.co/EleutherAI/pythia-70m}{EleutherAI/pythia-70m} \\
Pythia-160M & \href{https://huggingface.co/EleutherAI/pythia-160m}{EleutherAI/pythia-160m} \\
Pythia-410M & \href{https://huggingface.co/EleutherAI/pythia-410m}{EleutherAI/pythia-410m} \\
Pythia-1B & \href{https://huggingface.co/EleutherAI/pythia-1b}{EleutherAI/pythia-1b} \\
Pythia-1.4B & \href{https://huggingface.co/EleutherAI/pythia-1.4b}{EleutherAI/pythia-1.4b} \\
Pythia-2.8B & \href{https://huggingface.co/EleutherAI/pythia-2.8b}{EleutherAI/pythia-2.8b} \\
Pythia-6.9B & \href{https://huggingface.co/EleutherAI/pythia-6.9b}{EleutherAI/pythia-6.9b} \\
Pythia-12B & \href{https://huggingface.co/EleutherAI/pythia-12b}{EleutherAI/pythia-12b} \\
\midrule
OPT-125M & \href{https://huggingface.co/facebook/opt-125m}{facebook/opt-125m} \\
OPT-350M & \href{https://huggingface.co/facebook/opt-350m}{facebook/opt-350m} \\
OPT-1.3B & \href{https://huggingface.co/facebook/opt-1.3b}{facebook/opt-1.3b} \\
OPT-2.7B & \href{https://huggingface.co/facebook/opt-2.7b}{facebook/opt-2.7b} \\
OPT-6.7B & \href{https://huggingface.co/facebook/opt-6.7b}{facebook/opt-6.7b} \\
OPT-13B & \href{https://huggingface.co/facebook/opt-13b}{facebook/opt-13b} \\
\midrule
Jamba-v0.1 &  \href{https://huggingface.co/ai21labs/Jamba-v0.1}{ai21labs/Jamba-v0.1} \\
Jamba-1.5 Mini & \href{https://huggingface.co/ai21labs/AI21-Jamba-1.5-Mini}{ai21labs/AI21-Jamba-1.5-Mini} \\
\bottomrule
\end{tabular}
\end{center}
\label{model_link}
\end{table}


\clearpage
\section{More experiments in LM pre-training}\label{appendix_model}

\subsection{Optimization} 

\textbf{Learning rate.} In Section~\ref{sec4}, we find that attention sink appears less obvious in LMs trained with small learning rates. This conclusion holds even if we compensate for more training steps. In our basic setup, we adopt a learning rate of 4e-4 for 20k training steps. When we scale the learning rate to half, i.e., 2e-4, we scale the training steps to 2 times, i.e., 40k.  As shown in Table~\ref{lr}, when we keep the multiply between learning rate and training steps constant (highlighted using cyan color), LMs trained with smaller learning rates tend to exhibit less obvious attention sink. Therefore, we conclude that small learning rates not only slow down the increase of attention sink but also mitigate attention sink even with longer training durations.\looseness=-1

\begin{table}[htbp]
\caption{Attention sink appears less obvious in LMs trained with small learning rates even compensating for more training steps.}
\vspace{-.2cm}
\begin{center}
\begin{tabular}{c|ccc}
\toprule
learning rate & training steps (k) & $\textrm{Sink}_1^{\epsilon}$(\%)  & valid loss \\
\midrule
\rowcolor{LightCyan}
8e-4 & 10 & 23.44 & 3.79 \\
8e-4 & 20 & 32.23  & 3.70 \\
\rowcolor{LightCyan}
4e-4 & 20 & 18.18 &  3.73 \\
2e-4 & 20 & 11.21 & 3.78 \\
\rowcolor{LightCyan}
2e-4 & 40 & 16.81 & 3.68 \\ 
1e-4 & 20 & \,\,\,2.90 & 3.92\\
\rowcolor{LightCyan}
1e-4 & 80 & \,\,\,6.29 & 3.67 \\
\bottomrule
\end{tabular}
\end{center}
\vspace{-.2cm}
\label{lr}
\end{table}

\textbf{Batch size.} During the pre-training, we consider different batch sizes with other hyper-parameters fixed. As shown in Table~\ref{batch_fix}(\emph{Left}), batch size does not affect the emergence of attention sink.


\subsection{Data distribution} 

\textbf{Training data amount.} In Section~\ref{sec5}, we find that with less training data, attention sink also disappears. Meanwhile, LMs are also prone for overfitting. To disentangle the effects of training data amount and overfitting on attention sink, we monitor the dynamics of train/valid loss and attention sink metric during the LM pre-training in a more granular level, as present in Figure~\ref{appendix_overfit}.  With only 50M and 100M training data, LMs overfit at very early stages, between 1k and 2k steps. Meanwhile, $\textrm{Sink}_1^{\epsilon}$ maintains a very small value (less than 1\%). While for the setup of 5B training data, $\textrm{Sink}_1^{\epsilon}$ keeps increasing after a certain step. This indicates that the amount of training data, instead of overfitting, plays an important role in the emergence of attention sink.\looseness=-1

\begin{figure}[htbp]
\centering
\includegraphics[width=\textwidth]{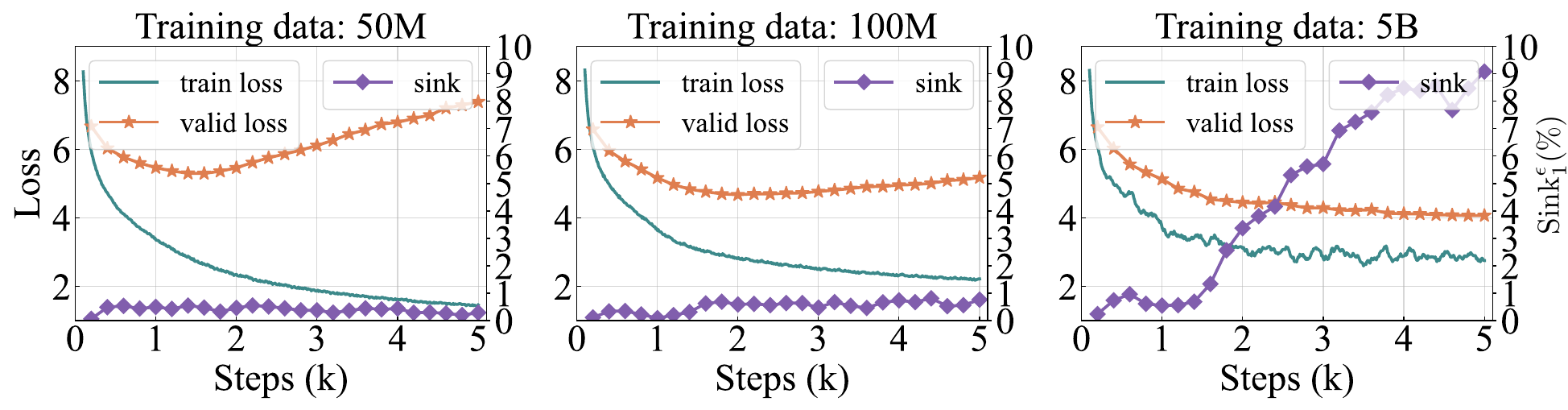}
\caption{Dynamics of train/valid loss and $\textrm{Sink}_1^{\epsilon}$ during LM pre-training under different amounts of training data: (\emph{Left}) 50M; (\emph{Middle}) 100M; (\emph{Right}) 5B.\looseness=-1}
\label{appendix_overfit}
\end{figure}

\textbf{Fixing token in a specific position.} During the pre-training, we consider fixing the token $\boldsymbol{x}_{\textrm{fix}}$ in the first/second/third position. Consequently, when evaluating the attention sink metric, in Table~\ref{batch_fix}(\emph{Right}), we find that attention sink appears in the appears in the fixed token instead of the first token.\looseness=-1

\begin{table}[htbp]
\caption{(\emph{Left}) Batch size has no effect on the emergence of attention sink. (\emph{Right}) Attention sink appears in the appears in the fixed token instead of the first token.\looseness=-1}
\begin{minipage}[t]{0.52\linewidth}
\begin{center}
\begin{tabular}[t]{l|cccc}
\toprule
Batch size & 0.25M & 0.5M & 1M & 2M \\
\midrule
$\textrm{Sink}_1^{\epsilon}$(\%) & 16.45 & 17.19 & 18.18 & 16.19 \\
valid loss & \,\,\,3.92 & \,\,\,3.78 & \,\,\,3.73 & \,\,\,3.68 \\
\bottomrule
\end{tabular}
\end{center}
\end{minipage}
\hfill
\begin{minipage}[t]{0.47\linewidth}
\begin{center}
\begin{tabular}[t]{l|cccc}
\toprule  
Fixed position & 1 & 2 & 3 \\
\midrule
$\textrm{Sink}_1^\epsilon$ (\%) & 74.11 & \,\,\,0.00 & \,\,\,0.00 \\
$\textrm{Sink}_2^\epsilon$ (\%) & \,\,\,0.00 & 69.03 & \,\,\,0.00 \\
$\textrm{Sink}_3^\epsilon$ (\%) & \,\,\,0.00 & \,\,\,0.00 & 69.64 \\
$\textrm{Sink}_4^\epsilon$ (\%) & \,\,\,0.01 & \,\,\,0.01 & \,\,\,0.00 \\
\bottomrule
\end{tabular}
\end{center}
\end{minipage}
\label{batch_fix}
\end{table}


\subsection{FFN design} 

\textbf{Activation functions.} Since FFN in the earlier transformer block blasts off the $\ell_2$-norm of the first token, we are wondering whether the choices of activation functions in FFN will affect the attention sink. Besides the SwiGLU activations used in our default setup, we also consider other activation functions, as present in Table~\ref{activation}. We observe that different FFN designs do not affect the emergence of attention sink.\looseness=-1

\begin{table}[htbp]
\caption{Modifying the activation functions in FFN does not affect the emergence of attention sink.}
\vspace{-.2cm}
\begin{center}
\begin{tabular}{l|lccc}
\toprule
Activation functions & $\boldsymbol{F}$ & $\textrm{Sink}_1^{\epsilon}$(\%)  & valid loss \\
\midrule
ReLU & $\textrm{ReLU}(\boldsymbol{h}\boldsymbol{W}_1)\boldsymbol{W}_2$ & 16.90 &  3.82\\
GeLU~\citep{hendrycks2016gaussian} & $\textrm{GeLU}(\boldsymbol{h}\boldsymbol{W}_1)\boldsymbol{W}_2$ & 14.76   &  3.79 \\
Swish~\citep{ramachandran2017searching} & $\textrm{Swish}(\boldsymbol{h}\boldsymbol{W}_1)\boldsymbol{W}_2$ &  17.89 & 3.80 \\
ReGLU~\citep{shazeer2020glu} & $\left(\textrm{ReLU}(\boldsymbol{h}\boldsymbol{W}_1)\odot\boldsymbol{h}\boldsymbol{W}_2\right)\boldsymbol{W}_3$ & 13.88  & 3.75\\
GeGLU~\citep{shazeer2020glu} & $\left(\textrm{GeLU}(\boldsymbol{h}\boldsymbol{W}_1)\odot\boldsymbol{h}\boldsymbol{W}_2\right)\boldsymbol{W}_3$ &  17.86 & 3.73\\
SwiGLU~\citep{shazeer2020glu} & $\left(\textrm{Swish}(\boldsymbol{h}\boldsymbol{W}_1)\odot\boldsymbol{h}\boldsymbol{W}_2\right)\boldsymbol{W}_3$ &  18.18 & 3.73\\
\bottomrule
\end{tabular}
\end{center}
\vspace{-.2cm}
\label{activation}
\end{table}

\subsection{Attention design}

\textbf{Multi-head design in attention.} We consider two perspectives in multi-head design in attention. Firstly, we explore whether the number of heads has an impact on attention sink, especially for $H=\textrm{1}$, which refers to single-head self-attention. Second, attention output from each head is concatenated: $\boldsymbol{O}^l=\textrm{Concat}_{h=1}^H\left(\boldsymbol{A}^{l\textrm{,}h}\boldsymbol{V}^{l\textrm{,}h}\right)\boldsymbol{W}_{O}^l$. We replace such concatenation operation with addition operation: $\boldsymbol{O}^l=\sum_{h=1}^H\left(\boldsymbol{A}^{l\textrm{,}h}\boldsymbol{V}^{l\textrm{,}h}\right)\boldsymbol{W}_{O}^l$. As shown in Table~\ref{multi_head}(\emph{Left}), multi-head design does not affect the emergence of attention sink.

\begin{table}[htbp]
\caption{(\emph{Left}) Modifying multi-head design does not affect the emergence of attention sink. (\emph{Right}) Sharing KV biases across heads in each block results in attention sink shifting back to the first token from K biases.}
\begin{minipage}[t]{0.4\linewidth}
\begin{center}
\begin{tabular}[t]{l|cc}
\toprule
Multi-head & $\textrm{Sink}_1^{\epsilon}$ (\%) & valid loss \\
\midrule
$H=\textrm{8}$ & 18.18 & 3.73\\
$H=\textrm{4}$ & 10.68 & 3.74\\
$H=\textrm{2}$ & 12.95 & 3.76\\
$H=\textrm{1}$ & 19.50 & 3.78\\
addition & 21.76 & 3.74\\
\bottomrule
\end{tabular}
\end{center}
\end{minipage}
\hfill
\begin{minipage}[t]{0.58\linewidth}
\begin{center}
\begin{tabular}[t]{l|cccc}
\toprule
Head-sharing & $\checkmark$ & $\checkmark$  & $\times$ & $\times$ \\
Biases type & KV & KV & K & K \\
\midrule
$\textrm{Sink}_{*}^{\epsilon}$(\%)  & 72.76 & 56.61 & 73.34 & 68.31  \\
$\textrm{Sink}_1^{\epsilon}$(\%) &   \,\,\,0.04 & 12.44 & \,\,\,0.00 & \,\,\,0.23 \\
valid loss & \,\,\,3.72 & \,\,\,3.72 & \,\,\,3.72 & \,\,\,3.72\\
\bottomrule
\end{tabular}
\end{center}
\end{minipage}
\label{multi_head}
\end{table}



\textbf{Head-sharing K/KV biases in attention.} In the main paper, we consider both KV biases and V biases in attention. These biases are not shared by different heads in each block. We further explore whether head-sharing patterns will affect their functionality. As present in Table~\ref{multi_head}(\emph{Right}), LMs with KV biases are more likely affected by the head-sharing pattern: attention sink shifts from the K biases to the first token. While LMs with K biases are less affected. 

\begin{table}[htbp]
\vspace{-0.2cm}
\caption{Even with very few learnable dimensions for $\boldsymbol{k}^{*l\textrm{,}h}$, large attention appears in $\boldsymbol{k}^{*l\textrm{,}h}$.\looseness=-1}
\vspace{-.4cm}
\begin{center}
\begin{tabular}{l|ccccccc}
\toprule
$d_a$ & 1 & 2 & 4 & 8 & 16 & 32  & 64  \\
\midrule
$\textrm{Sink}_{*}^{\epsilon}$(\%) & 32.18 & 30.88 & 30.94 & 31.39 & 23.30 & 51.23 &  69.19 \\
$\textrm{Sink}_{1}^{\epsilon}$(\%) & \,\,\,4.74 &  \,\,\,4.96 & \,\,\,4.39 & \,\,\,4.54 & \,\,\,2.19 & \,\,\,1.94 & \,\,\,0.04 \\
valid loss & \,\,\,3.73 & \,\,\,3.72 & \,\,\,3.72 & \,\,\,3.73 & \,\,\,3.73 & \,\,\,3.73 & \,\,\,3.72 \\
\bottomrule
\end{tabular}
\end{center}
\vspace{-.4cm}
\label{k_low_rank}
\end{table}

\textbf{Learnable dimensions of K biases.} In the setup of K biases, we set all dimensions of $\boldsymbol{k}^{*l\textrm{,}\,h}$ as learnable weights. In Section~\ref{section3_1}, we have shown that K biases are distributed in a different manifold, with low rank. Therefore, we consider only $d_a$ dimensions of $\boldsymbol{k}^{*l\textrm{,}\,h}$ are adjustable/learnable while the other $d_h-d_a$ dimensions are zeros. As present in Table~\ref{k_low_rank}, with very few learnable dimensions, even for $d_a=\textrm{1}$, $\boldsymbol{k}^{*l\textrm{,}\,h}$ are still allocated significant attention. With more learnable dimensions, the attention sink appears more obvious. 

\textbf{Scaling the normalization in softmax attention.} Before our experiments about replacing softmax attention, we first explore the effects of normalization scales in softmax attention. In the main paper, the attention output for the $i$-th output is
\begin{equation}
\boldsymbol{v}_i^{\dagger}=\sum_{j=1}^i\frac{\textrm{sim}(\varphi(\boldsymbol{q}_i)\textrm{,}\,\varphi(\boldsymbol{k}_j))}{\sum_{j'=1}^i\textrm{sim}(\varphi(\boldsymbol{q}_i)\textrm{,}\,\varphi(\boldsymbol{k}_{j'}))}\boldsymbol{v}_j=\sum_{j=1}^i\frac{\textrm{sim}(\varphi(\boldsymbol{q}_i)\textrm{,}\,\varphi(\boldsymbol{k}_j))}{\boldsymbol{Z}_i}\boldsymbol{v}_j\textrm{.}
\end{equation}

We consider a scale factor $\alpha$, the normalization term is $\boldsymbol{Z}_i=\frac{1}{\alpha}\sum_{j'=1}^i\textrm{sim}(\varphi(\boldsymbol{q}_i)\textrm{,}\,\varphi(\boldsymbol{k}_j))$, and then the attention score are sum up to $\alpha$. For default setup, $\alpha=\textrm{1}$. As shown in Table~\ref{reweight}(\emph{Left}), with a smaller normalization scale, attention sink tends to appear in fewer heads. From another perspective,
\begin{equation}
\boldsymbol{v}_i^{\dagger}=\sum_{j=1}^i\frac{\alpha
\textrm{sim}(\varphi(\boldsymbol{q}_i)\textrm{,}\,\varphi(\boldsymbol{k}_j))}{\sum_{j'=1}^i\textrm{sim}(\varphi(\boldsymbol{q}_i)\textrm{,}\,\varphi(\boldsymbol{k}_{j'}))}\boldsymbol{v}_j=\sum_{j=1}^i\frac{
\textrm{sim}(\varphi(\boldsymbol{q}_i)\textrm{,}\,\varphi(\boldsymbol{k}_j))}{\sum_{j'=1}^i\textrm{sim}(\varphi(\boldsymbol{q}_i)\textrm{,}\,\varphi(\boldsymbol{k}_{j'}))}\boldsymbol{h}_j(\alpha\boldsymbol{W}_V)\textrm{,}
\end{equation}
\begin{equation}
    \boldsymbol{o}_i'=\textrm{Concat}_{h=1}^H (\boldsymbol{v}_i^{'h})\boldsymbol{W}_O\textrm{.}
\end{equation}

Therefore, this normalization scale can be regarded as the scale for $\boldsymbol{W}_V$ or $\boldsymbol{W}_O$. We show that this normalization scaling could be implemented by scaling learning rates and initialization. We use the $s$ to represent the optimization step, and $s=\textrm{0}$ refers to the initialization. When scaling the normalization, we have the following SGD update rule (take $\boldsymbol{W}_O$ for example):
\begin{align}
    \boldsymbol{W}_O^{s+1}&=\boldsymbol{W}_O^{s}-\eta\nabla_{\boldsymbol{W}_O^{s}} \mathcal{L}(\alpha\boldsymbol{W}_O^{s})\\
    &=\boldsymbol{W}_O^{s}-\alpha\eta\nabla_{\boldsymbol{W}} \mathcal{L}(\boldsymbol{W})|_{\boldsymbol{W}=\alpha\boldsymbol{W}_O^{s}}\textrm{,}
\end{align}
where $\eta$ is the original learning rate. Suppose we only modify the learning rate and initialization, to ensure each optimization step $\hat{\boldsymbol{W}}_O^{s}=\alpha\boldsymbol{W}_O^{s}$, we need first to ensure $\hat{\boldsymbol{W}}_O^{0}=\alpha\boldsymbol{W}_O^{0}$. Suppose that we have $\hat{\boldsymbol{W}}_O^{s}=\boldsymbol{W}_O^{s}$, then the update rule is:
\begin{align}
\hat{\boldsymbol{W}}_O^{s+1}&=\hat{\boldsymbol{W}}_O^{s}-\eta'\nabla_{\hat{\boldsymbol{W}}_O^{s}} \mathcal{L}(\hat{\boldsymbol{W}}_O^{s})\\
&=\alpha\boldsymbol{W}_O^{s}-\eta'\nabla_{\boldsymbol{W}} \mathcal{L}(\boldsymbol{W})|_{\boldsymbol{W}=\alpha\boldsymbol{W}_O^{s}}\textrm{,}
\end{align}

 To ensure $\hat{\boldsymbol{W}}_O^{s+1}=\alpha\boldsymbol{W}_O^{s+1}$, we need the new learning rate $\eta'$ meets $\eta'=\alpha^2\eta$. For advanced optimization algorithms, e.g., Adam~\citep{kingma2014adam} and AdamW~\citep{loshchilov2017decoupled}. We have the following update rule (take AdamW for example, $\gamma$ refers to the weight decay ratio): 
 \begin{align}
\boldsymbol{g}_{s+1} &= \nabla_{\boldsymbol{W}_O^{s}} \mathcal{L}(\alpha\boldsymbol{W}_O^{s}) = \alpha\nabla_{\boldsymbol{W}} \mathcal{L}(\boldsymbol{W})|_{\boldsymbol{W}=\alpha\boldsymbol{W}_O^{s}} \\
\boldsymbol{m}_{s+1} &= \beta_1\boldsymbol{m}_{s} + (1-\beta_1)\boldsymbol{g}_{s+1} \\
\boldsymbol{v}_{s+1} &= \beta_2\boldsymbol{v}_{s} + (1-\beta_2)\boldsymbol{g}^2_{s+1} \\
\boldsymbol{W}_O^{s+1}&=(1-\eta\gamma) \boldsymbol{W}_O^{s} - \eta \frac{\boldsymbol{m}_{s+1} / (1 - \beta_1^t)}{\sqrt{\boldsymbol{v}_{s+1}/ (1-\beta_2^t)} + \epsilon}
\end{align}

We denote $\hat{\boldsymbol{g}}_{s+1},\,\hat{\boldsymbol{m}}_{s+1},\,\hat{\boldsymbol{v}}_{s+1}$ represents the intermediate counterparts for update of scenario where we only modify learning rate and initialization. First, we also need to ensure the initialization $\hat{\boldsymbol{W}}_O^{0}=\alpha\boldsymbol{W}_O^{0}$. Then we assume that we have already matched $\hat{\boldsymbol{W}}_O^{s}=\alpha\boldsymbol{W}_O^{s}$. The gradient for each step is\looseness=-1
\begin{equation}
    \hat{\boldsymbol{g}}_{s+1} = \nabla_{\hat{\boldsymbol{W}}_O^{s}} \mathcal{L}(\hat{\boldsymbol{W}}_O^{s}) = \nabla_{\boldsymbol{W}} \mathcal{L}(\boldsymbol{W})|_{\boldsymbol{W}=\alpha\boldsymbol{W}_O^{s}}= \boldsymbol{g}_{s+1} / \alpha
\end{equation}

Then the first and second moment will be $\hat{\boldsymbol{m}}_{s+1}=\boldsymbol{m}_{s+1} / \alpha$ and  $\hat{\boldsymbol{v}}_{s+1}=\boldsymbol{v}_{s+1} / \alpha^2$. The updated weights will be 
\begin{align}
\hat{\boldsymbol{W}}_O^{s+1}&=(1-\eta'\gamma') \hat{\boldsymbol{W}}_O^{s} - \eta' \frac{\hat{\boldsymbol{m}}_{s+1} / (1 - \beta_1^t)}{\sqrt{\hat{\boldsymbol{v}}_{s+1}/ (1-\beta_2^t)} + \epsilon'}\\    
&=(1-\eta'\gamma') \alpha\boldsymbol{W}_O^{s} - \eta' \frac{\boldsymbol{m}_{s+1} /\alpha(1 - \beta_1^t)}{\sqrt{\boldsymbol{v}_{s+1}/ \alpha^2(1-\beta_2^t)} + \epsilon'} 
\end{align}

Therefore, to ensure $\hat{\boldsymbol{W}}_O^{s+1}=\alpha\boldsymbol{W}_O^{s+1}$, one solution is $\eta'=\alpha\eta$ and $\epsilon'=\epsilon / \alpha$ and $\gamma'=\gamma / \alpha$.



\begin{table}[htbp]
\caption{(\emph{Left}) Scaling the normalization in Softmax attention to less than one can mitigate attention sink but not prevent its emergence. (\emph{Right}) LMs with sigmoid attention (without sigmoid attention) trained by different learning rates and weight decay ratios have no attention sink.}
\begin{minipage}[t]{0.38\linewidth}
\begin{center}
\begin{tabular}[t]{c|cc}
\toprule
Scale $\alpha$ &$\textrm{Sink}_1^{\epsilon}$(\%) & valid loss\\
\midrule
2.0 & 29.10 & 3.72\\
1.0 & 18.18 & 3.73\\
0.2 & \,\,\,9.41 & 3.72\\
0.1 & \,\,\,3.59 & 3.76\\
0.05\,\,\, & \,\,\,4.53 & 3.78 \\
\bottomrule
\end{tabular}
\end{center}
\end{minipage}
\hfill
\begin{minipage}[t]{0.60\linewidth}
\begin{center}
\begin{tabular}[t]{cc|cc}
\toprule
Learning rate & Weight decay & $\textrm{Sink}_1^{\epsilon}$(\%) & valid loss \\
\midrule
4e-4 & 0.0 & 0.64 & 3.77 \\
4e-4 & 0.1 & 0.44 & 3.70 \\
4e-4 & 0.5 & 0.18 & 3.76 \\
4e-4 & 1.0 & 0.30 & 4.06 \\
1e-3 & 0.1 & 0.81 & 3.68 \\
1e-4 & 0.1 & 0.36 & 4.08\\
\bottomrule
\end{tabular}
\end{center}
\end{minipage}
\label{reweight}
\end{table}



\textbf{Normalizer.} Besides the normalizer $\boldsymbol{Z}_i=\sum_{j'=1}^i\textrm{sim}(\varphi(\boldsymbol{q}_i)\textrm{,}\,\varphi(\boldsymbol{k}_{j'}))$ considered in the main paper, we consider alternative normalizer: $\boldsymbol{Z}_i=\left(\sum_{j'=1}^i\textrm{sim}(\varphi(\boldsymbol{q}_i)\textrm{,}\,\varphi(\boldsymbol{k}_{j'}))^p\right)^{\frac{1}{p}}$, which gives us following attention operation: 
\begin{equation}
\boldsymbol{v}_i^{\dagger}=\frac{\sum_{j=1}^i\textrm{sim}(\varphi(\boldsymbol{q}_i)\textrm{,}\,\varphi(\boldsymbol{k}_j))\boldsymbol{v}_j}{\boldsymbol{Z}_i}=\frac{\sum_{j=1}^i\textrm{sim}(\varphi(\boldsymbol{q}_i)\textrm{,}\,\varphi(\boldsymbol{k}_j))\boldsymbol{v}_j}{\left(\sum_{j'=1}^i\textrm{sim}(\varphi(\boldsymbol{q}_i)\textrm{,}\,\varphi(\boldsymbol{k}_{j'}))^p\right)^{\frac{1}{p}}}\textrm{.}
\end{equation}

For softmax attention, we have $\textrm{sim}(\varphi(\boldsymbol{q}_i)\textrm{,}\,\varphi(\boldsymbol{k}_j))=\textrm{exp}(\frac{\boldsymbol{q}_i\boldsymbol{k}_j^\top}{\sqrt{d_h}})$, then we can derive that
\begin{equation}
    \boldsymbol{v}_i^{\dagger}=\frac{\sum_{j=1}^i\textrm{exp}(\frac{\boldsymbol{q}_i\boldsymbol{k}_j^\top}{\sqrt{d_h}}))\boldsymbol{v}_j}{\left(\sum_{j'=1}^i\textrm{exp}(\frac{\boldsymbol{q}_i\boldsymbol{k}_{j'}^\top}{\sqrt{d_h}})^p\right)^{\frac{1}{p}}}=\sum_{j=1}^i \left(\frac{\textrm{exp}(\frac{\boldsymbol{q}_i\boldsymbol{k}_j^\top}{\sqrt{d_h}/p})}{\sum_{j'=1}^i\textrm{exp}(\frac{\boldsymbol{q}_i\boldsymbol{k}_{j'}^\top}{\sqrt{d_h}/p})}\right)^{\frac{1}{p}}\boldsymbol{v}_j\textrm{.}
\end{equation}

This is equivalent to adding a temperature $1/p$ into the softmax attention logits, and then taking the $p$-root of attention scores after softmax. We call this $p$-normalized softmax attention. $p=\textrm{1}$ in regular softmax attention. Similarly, we construct the LMs with $p$-normalized sigmoid attention. 

We find that when $p=2$ or $p=3$ or $p=4$, pre-training of $p$-normalized softmax attention diverges and the loss goes infinity. When $p=1/2$ or $p=1/3$ or $p=1/4$, LM pre-training converges. As the attention scores are not added up to one, we investigate the massive activations instead, as visualized in Figure~\ref{appendix_normalizer}(\emph{Left}). With smaller $p$, massive activations are mitigated to some extent, but not as effective as sigmoid attention without normalization. Intuitively, smaller $p$ induces a larger temperature in softmax operation, which leads to flattened attention logits. 

Afterward, we conduct experiments on $p$-normalized sigmoid attention. There is no training problem with $p$ larger than 1. As visualized in Figure~\ref{appendix_normalizer}(\emph{Right}), LMs with $p$-normalized sigmoid attention still demonstrate strong massive activations. To conclude, different normalizers may affect the amplitude of attention sink, but not stop its emergence.

\begin{figure}[htbp]
\centering
\includegraphics[width=0.7\textwidth]{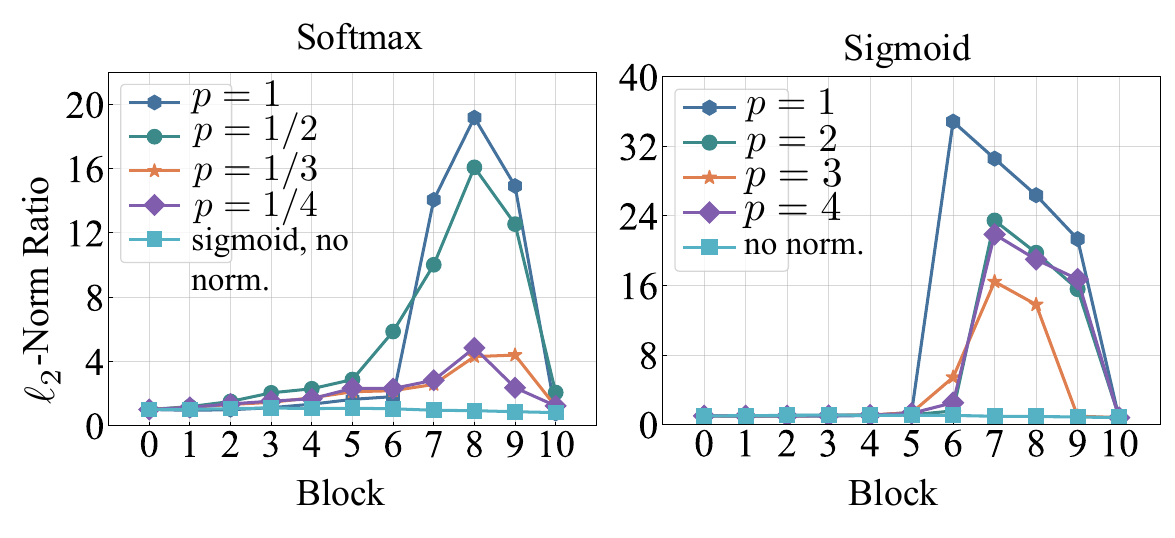}
\caption{$\ell_2$-norm ratio of $\boldsymbol{h}_1^l$ and mean of $\boldsymbol{h}_{t\neq 0}^l$. (\emph{Left}) $p$-normalized softmax attention and also sigmoid attention without normalization (as reference). (\emph{Right}) $p$-normalized sigmoid attention and also sigmoid attention without normalization (as reference).\looseness=-1}
\label{appendix_normalizer}
\end{figure}

\textbf{Attention operations.} Firstly, we present all attempted attention operations in Table~\ref{appendix_operation}. It is noted that several setups lead to training failure. For LMs with sigmoid attention without normalization, we vary the learning rates or weight decay ratios $\gamma$. Consequently, as shown in Table~\ref{reweight}(\emph{Right}), the trained LMs still have no attention sink, which further confirms our conclusion in the main paper.

\begin{table}[htbp]
\caption{Normalization and selections of kernels in attention significantly affect the emergence of the attention sink. We use ``*'' to mark that the metric $\textrm{Sink}_1^{\epsilon}$ is computed by proxy attention scores. We use ``-'' to represent the training failure under the setup.}
\vspace{-.2cm}
\begin{center}
\begin{tabular}{l|l|cc}
\toprule
  $\textrm{sim}(\varphi(\boldsymbol{q}_i)\textrm{,}\,\varphi(\boldsymbol{k}_j))$ & $\boldsymbol{Z}_i$ & $\textrm{Sink}_1^{\epsilon}$(\%) & valid loss \\
\midrule
 $\textrm{exp}(\frac{\boldsymbol{q}_i\boldsymbol{k}_j^\top}{\sqrt{d_h}})$ & $\sum_{j'=1}^i \textrm{exp}(\frac{\boldsymbol{q}_i\boldsymbol{k}_{j'}^\top}{\sqrt{d_h}})$ & 18.18 & 3.73 \\
 $\textrm{sigmoid}(\frac{\boldsymbol{q}_i\boldsymbol{k}_j^\top}{\sqrt{d_h}})$ & $1$ & \,\,\,\,\,\,0.44$^*$ &  3.70 \\
$\textrm{sigmoid}(\frac{\boldsymbol{q}_i\boldsymbol{k}_j^\top}{\sqrt{d_h}})$ & $\sum_{j'=1}^i \textrm{sigmoid}(\frac{\boldsymbol{q}_i\boldsymbol{k}_{j'}^\top}{\sqrt{d_h}})$ & 30.24 & 3.74 \\
 $\textrm{elu}(\frac{\boldsymbol{q}_i\boldsymbol{k}_j^\top}{\sqrt{d_h}})+1$ & $1$ & \,\,\,\,\,\,0.80$^*$ &  3.69 \\ 
 $\textrm{elu}(\frac{\boldsymbol{q}_i\boldsymbol{k}_{j}^\top}{\sqrt{d_h}})+1$ & $\sum_{j'=1}^i \textrm{elu}(\frac{\boldsymbol{q}_i\boldsymbol{k}_{j'}^\top}{\sqrt{d_h}})+1$ & - & - \\ 
 \midrule
$\frac{(\textrm{elu}(\boldsymbol{q}_i)+1)(\textrm{elu}(\boldsymbol{k}_j)+1)^\top}{\sqrt{d_h}}$ & $\sum_{j'=1}^i\frac{(\textrm{elu}(\boldsymbol{q}_i)+1)(\textrm{elu}(\boldsymbol{k}_{j'})+1)^\top}{\sqrt{d_h}}$ & \,\,\,53.65$^*$ & 4.19\\ 
$\frac{(\textrm{elu}(\boldsymbol{q}_i)+1)(\textrm{elu}(\boldsymbol{k}_j)+1)^\top}{\sqrt{d_h}}$ & $1$ & -  & - \\ 
$\frac{\boldsymbol{q}_i\boldsymbol{k}_{j}^\top}{\sqrt{d_h}}$ &  $\textrm{max}\left(\left|\sum_{j'=1}^i\frac{\boldsymbol{q}_i\boldsymbol{k}_{j'}^\top}{\sqrt{d_h}}\right|\textrm{,}\,1\right)$ & - &  -\\
$\frac{\boldsymbol{q}_i\boldsymbol{k}_{j}^\top}{\sqrt{d_h}}$ & $1$ & \,\,\,0.00$^*$ & 3.99 \\
$\frac{\textrm{mlp}(\boldsymbol{q}_i)\textrm{mlp}(\boldsymbol{k}_j)^\top}{\sqrt{d_h}}$ &  $\textrm{max}\left(\left|\sum_{j'=1}^i\frac{\textrm{mlp}(\boldsymbol{q}_i)\textrm{mlp}(\boldsymbol{k}_{j'})^\top}{\sqrt{d_h}}\right|\textrm{,}\,1\right)$ & \,\,\,0.19$^*$ &  3.85\\
$\frac{\textrm{mlp}(\boldsymbol{q}_i)\textrm{mlp}(\boldsymbol{k}_j)^\top}{\sqrt{d_h}}$ & $1$ & \,\,\,0.74$^*$ & 3.91 \\
\bottomrule
\end{tabular}
\end{center}
\vspace{-.2cm}
\label{appendix_operation}
\end{table}
\clearpage
\section{More experiments in LM after pre-training}\label{sft}

\textbf{Training stability in supervised fine-tuning.} To investigate the long-term impacts of attention sink on model behaviors after pre-training, we conduct supervised fine-tuning (SFT) on our pre-trained 1B LMs with softmax attention and sigmoid attention without normalization. Specifically, we utilize the platform\footnote{\url{https://github.com/huggingface/alignment-handbook}} to conduct our experiments. The experimental configurations include: the UltraChat dataset (about 200k training samples)\footnote{\url{https://huggingface.co/datasets/HuggingFaceH4/ultrachat_200k}}~\citep{ding2023enhancing}, full-model fine-tuning, a learning rate of 2e-5 with cosine scheduling, batch size of 64, each of which contains 2048 tokens, one training epoch. As shown in Figure~\ref{appendix_sft}, we monitor the training loss and gradient norm of our two LMs during SFT. These two models behave similarly in terms of the above two metrics. Additionally, despite no attention sink, LMs with sigmoid attention without normalization have no issues of training stability during SFT. Though not from the attention sink perspective, a concurrent work by \citet{ramapuram2024theory} discussed the theory and practices for Transformer models with sigmoid attention in detail. We refer the readers to \citet{ramapuram2024theory} for more analyses.\looseness=-1

\begin{figure}[htbp]
\centering
\includegraphics[width=0.7\textwidth]{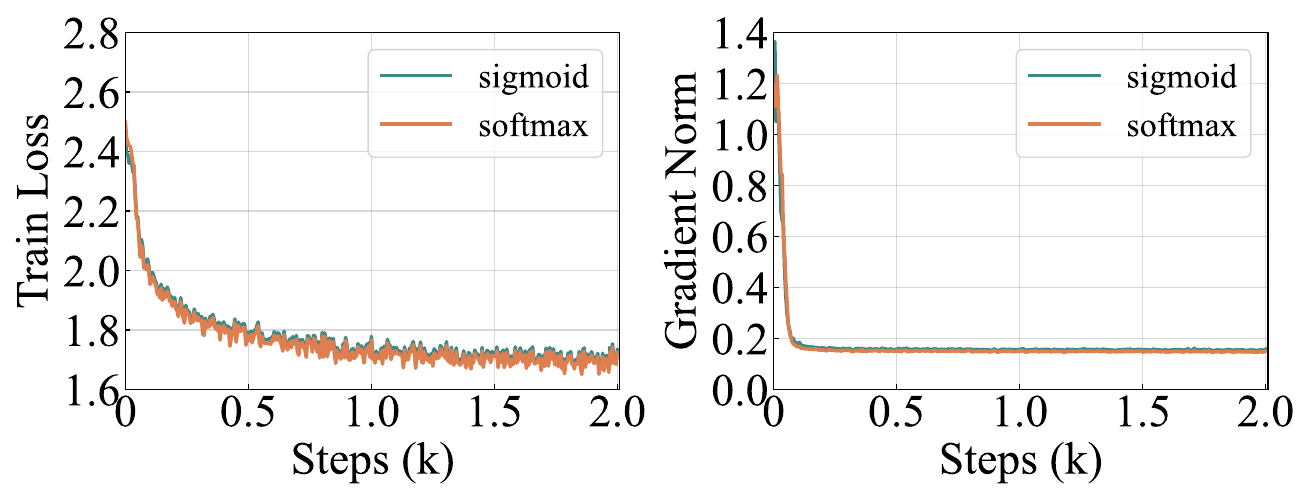}
\caption{The training loss and gradient norm in our 1B LMs with softmax attention and sigmoid attention without normalization in supervised fine-tuning.\looseness=-1}
\label{appendix_sft}
\end{figure}





\end{document}